\newcommand{\norm}[1]{\left\lVert#1\right\rVert}
\newcommand{\expect}[1]{\mathbb{E}\left[{#1}\right]}
\newcommand{\prob}[1]{\mathbb{P}\left[{#1}\right]}
\newcommand{\given}{\; \big\vert \;} 
\newcommand{\bydef}{:=}
\newcommand{\inner}[2]{\langle #1, #2 \rangle}
\newtheorem{mytheorem}{Theorem}
\newtheorem{mylemma}{Lemma}
\newtheorem{mycorollary}{Corollary}
\newtheorem{mydefinition}{Definition}
\newtheorem{myresult}{Result}
\newcommand{\beq}{\begin{equation}}
\newcommand{\eeq}{\end{equation}}
\newcommand{\beqn}{\begin{equation*}}
\newcommand{\eeqn}{\end{equation*}}
\newcommand{\beqa}{\begin{eqnarray}}
\newcommand{\eeqa}{\end{eqnarray}}
\newcommand{\beqan}{\begin{eqnarray*}}
\newcommand{\eeqan}{\end{eqnarray*}}
\newcommand{\argmin}{\mathop{\mathrm{argmin}}}
\begin{document} 

\title{Online Learning in Kernelized Markov Decision
Processes}
\author{\name{Sayak Ray Chowdhury} \email{sayak@iisc.ac.in}\\ 
\addr      Electrical Communication Engineering,\\
Indian Institute of Science,\\
Bangalore 560012, India\\
\\
\name{Aditya Gopalan} \email{aditya@iisc.ac.in}\\ 
\addr     Electrical Communication Engineering,\\
Indian Institute of Science,\\
Bangalore 560012, India\\
}


\maketitle

\begin{abstract}
We consider online learning for minimizing regret in unknown, episodic Markov decision processes (MDPs) with continuous states and actions. We develop variants of the UCRL and posterior sampling algorithms that employ nonparametric Gaussian process priors to generalize across the state and action spaces. When the transition and reward functions of the true MDP are members of the associated Reproducing Kernel Hilbert Spaces of functions induced by symmetric psd kernels (frequentist setting), we show that the algorithms enjoy sublinear regret bounds. The bounds are in terms of explicit structural parameters of the kernels, namely a novel generalization of the information gain metric from kernelized bandit, and highlight the influence of transition and reward function structure on the learning performance. Our results are applicable to multi-dimensional state and action spaces with composite kernel structures, and generalize results from the literature on kernelized bandits, and the adaptive control of parametric linear dynamical systems with quadratic costs.
\end{abstract}

\section{INTRODUCTION}


The goal of reinforcement learning (RL) is to learn optimal behavior (policies) by repeated interaction with an unknown environment, usually modelled as a Markov Decision Process (MDP). Performance is typically measured by the amount of interaction, in terms of episodes or rounds, needed to arrive at an optimal (or near-optimal) policy; this is also known as the sample complexity of RL \citep{StrLiLit09:PACMDP}. The sample complexity objective encourages efficient exploration across states and actions, but, at the same time, is indifferent to the reward earned during the learning phase. 

A related, but different, goal in RL is the {\em online} one, i.e., to learn to gather high cumulative reward, or to equivalently keep the learner's {\em regret} (the gap between its and the optimal policy's net reward) as low as possible. This is preferable in settings where experimentation comes at a premium and the reward earned in each round is of direct value, e.g., recommender systems (in which rewards correspond to clickthrough events and ultimately translate to revenue), dynamic pricing, automated trading -- in general, control of unknown dynamical systems with instantaneous costs. 


A primary challenge in RL is to learn efficiently across complex (very large or infinite) state and action spaces. In the most general {\em tabula rasa} MDP setting, the learner must explore each state-action transition before developing a reasonably clear understanding of the environment, which is prohibitive for large problems. Real-world domains, though, possess more structure: transition and reward behavior often varies smoothly over states and actions, making it possible to generalize via inductive inference -- observing a state transition or reward is informative of other, similar transitions or rewards. Scaling RL to large, complex, real-world domains  requires exploiting regularity structure in the environment, which has  typically been carried out via the use of parametric MDP models in model-based approaches, e.g., \cite{osband2014model}. 

This paper takes a step in developing theory and algorithms for online RL in environments with smooth transition and reward structure. We specifically consider the episodic online learning problem in the nonparametric, kernelizable MDP setting, i.e., of minimizing regret (relative to an optimal finite-horizon policy) in MDPs with continuous state and action spaces, whose transition and reward functions exhibit smoothness over states and actions compatible with the structure of a reproducing kernel. We develop variants of the well-known UCRL and posterior sampling algorithms for MDPs with continuous state and action spaces, and show that they enjoy sublinear, finite-time regret bounds when the  mean transition and reward functions are assumed to belong to the associated Reproducing Kernel Hilbert Space (RKHS) of functions. 

Our results bound the regret of the algorithms in terms of a novel generalization of the information gain of the state transition and reward function kernels, from the memoryless kernel bandit setting \citep{srinivas2009gaussian} to the state-based kernel MDP setting, and help shed light on how the choice of kernel model influences regret performance. We also leverage recent concentration of measure results for RKHS-valued martingales, developed originally for the kernelized bandit setting \citep{pmlr-v70-chowdhury17a,durand2017streaming}, to prove the results in the paper. To the best of our knowledge, these are the first concrete regret bounds for RL in the kernelizable setting, explicitly showing the dependence of regret on kernel structure.

Our results represent a generalisation of several streams of work. We generalise online learning in the kernelized bandit setting \citep{srinivas2009gaussian,valko2013finite,pmlr-v70-chowdhury17a} to kernelized MDPs, and {\em tabula rasa} online learning approaches for MDPs such as Upper Confidence Bound for Reinforcement Learning (UCRL) \citep{jaksch2010near} and Posterior Sampling for Reinforcement Learning (PSRL) \citep{osband2013more,ouyang2017learning} to MDPs with kernel structure. We also generalize regret minimization for an episodic variant of the well-known parametric Linear Quadratic Regulator (LQR) problem \citep{abbasi2011regret,abbasi2015bayesian,ibrahimi2012efficient,abeille2017thompson} to its nonlinear, nonparametric, infinite-dimensional, kernelizable counterpart. 

\subsection*{Overview of Main Results}

Our first main result gives an algorithm for learning MDPs with mean transition dynamics and reward structure assumed to belong to appropriate Reproducing Kernel Hilbert Spaces (RKHSs). This result is, to our knowledge, the first frequentist regret guarantee for general kernelized MDPs.
\vspace*{-1mm}
\begin{myresult}[Frequentist regret in kernelized MDPs, informal]
\label{result:ucrl}
Consider episodic learning under the unknown dynamics ${s}_{t+1}=\overline{P}_M(s_t,a_t)+\text{Noise} \in \Real^m$, and rewards $r_t=\overline{R}_M(s_t,a_t)+\text{Noise}$, where $\overline{P}_M$ and $\overline{R}_M$ are fixed RKHS functions with bounded norms. The regret of GP-UCRL (Algorithm \ref{algo:GP-UCRL}) is, with high probability\footnote{$\tilde{O}$ suppresses logarithmic factors.}, $\tilde{O}\Big(\big(\gamma_T(R)+\gamma_{mT}(P)\big)\sqrt{T}\Big)$.
\end{myresult}

Here, $\gamma_t(P)$ (resp. $\gamma_t(R)$) roughly represents the maximum information gain about the unknown dynamics (resp. rewards) after $t$ rounds, which, for instance is $\text{polylog}(t)$ for the squared exponential kernel. 

To put this in the perspective of existing work, \citet{osband2014model} also consider learning under dynamics and rewards coming from general function classes, and show (Bayesian) regret bounds depending on the eluder dimensions of the classes. However, when applied to RKHS function classes as we consider here, these dimensions can be infinitely large. In contrast, our results show that the maximum information gain is a suitable measure of complexity of the function class that serves to bound regret. 

An important corollary results when this is applied to the LQR problem, with a linear kernel structure for state transitions and a quadratic kernel structure for rewards:
\vspace*{-1mm}
\begin{myresult}[Frequentist regret for LQR, informal]
\label{res:lqr}
Consider episodic learning under unknown linear dynamics ${s}_{t+1} = As_t + Ba_t + \text{Noise}$, and quadratic rewards $r_t = s_t^TPs_t + a_t^TQa_t + \text{Noise}$. GP-UCRL (Algorithm \ref{algo:GP-UCRL}) instantiated with a linear transition kernel and quadratic reward kernel enjoys, with high probability, regret $\tilde{O}\Big(\big(m^2+n^2 + m(m+n)\big)\sqrt{T}\Big)$, where $m$ and $n$ are the state space and action space dimensions, respectively.
\end{myresult}

This recovers the bound of \citet{osband2014model} for the same bounded LQR problem. However, while they derive this via the eluder dimension approach, we arrive at this by a different bounding technique that applies more generally to any kernelized dynamics. The result also matches (order-wise) the bound of \citet{abbasi2011regret} restricted to the bounded LQR problem.

We also have the following Bayesian regret analogue for PSRL. 
\vspace*{-1mm}
\begin{myresult}[Bayesian regret in kernelized MDPs, informal]
Under dynamics as in Result \ref{result:ucrl} but drawn according to a known prior, the Bayes regret of PSRL (Algorithm \ref{algo:GP-PSRL}) is $\tilde{O}\Big(\big(\gamma_T(R)+\gamma_{mT}(P)\big)\sqrt{T}\Big)$. Consequently, if the dynamics are of the LQR form (Result \ref{res:lqr}), then PSRL instantiated with a linear transition kernel and quadratic reward kernel enjoys Bayes regret $\tilde{O}\Big(\big(m^2+n^2 + m(m+n)\big)\sqrt{T}\Big)$.
\end{myresult}

Note: All the above results are stated assuming that the episode duration $H = O(\ln T)$ for clarity; the explicit dependence on $H$ can be found in the theorem statements that follow.



\vspace*{-2mm}
\paragraph{Related Work}
Regret minimization has been studied with parametric MDPs \citep{jaksch2010near,osband2013more,GopMan15:MDP,agrawal2017optimistic}. For online regret minimization in complex MDPs, apart from the work of \citet{osband2014model}, \citet{ortner2012online} and \citet{lakshmanan2015improved} consider continuous state spaces with Lipschitz transition dynamics but unstructured, finite action spaces. Another important line of work  considers kernel structures for safe exploration in MDPs \citep{turchetta2016safe,berkenkamp2017safe,berkenkamp2016safe}. We, however, seek to demonstrate algorithms with provable regret guarantees in the kernelized MDP setting, which to our knowledge are the first of their kind.


\section{PROBLEM STATEMENT}
\label{sec:prob-stat}

We consider the problem of learning to optimize reward in an unknown finite-horizon MDP, $M_\star=\lbrace \cS,\cA,R_\star,P_\star,H\rbrace$, over repeated episodes of interaction. Here, $\cS \subset \Real^m$ represents the state space, 
$\cA \subset \Real^n$ the action space, $H$ the episode length, $R_\star(s,a)$ the reward distribution over $\Real$, and $P_\star(s,a)$ the transition distribution over $\cS$. At each period $h = 1,2,\ldots,H $ within an episode, an agent observes a state $s_h \in \cS$, takes an action $a_h \in \cA$, observes a reward $r_h \sim R_\star(s_h,a_h)$, and causes the MDP to transition to a next state $s_{h+1}\sim P_\star(s_h,a_h)$. We assume that the agent, while not possessing knowledge of the reward and transition distribution $R_\star,P_\star$ of the unknown MDP $M_\star$, knows $\cS$, $\cA$ and $H$.
\par 
A policy $\pi : \cS \times \lbrace 1,2,\ldots,H \rbrace \ra \cA$ is defined to be a mapping from a state $s\in \cS$ and a period $1 \le h \le H$ to an action $a \in \cA$.
For any MDP $M=\lbrace \cS,\cA,R_M,P_M,H\rbrace$ and policy $\pi$, the finite horizon, undiscounted, value function for every state $s \in \cS$ and every period $1 \le h \le H$ is defined as
$V^M_{\pi,h}(s)\bydef \mathbb{E}_{M,\pi}\Big[\sum_{j=h}^{H}\overline{R}_M(s_j,a_j)\given s_h=s\Big]$,
where the subscript $\pi$ indicates the application of the learning policy $\pi$, i.e., $a_j=\pi(s_j,j)$, and the subscript $M$ explicitly references the MDP environment $M$, i.e.,  $s_{j+1} \sim P_M(s_j,a_j)$, for all $j=h,\ldots,H$. 

We use $\overline{R}_M(s,a)=\expect{r\given r \sim R_M(s,a)}$ to denote the mean of the reward distribution $R_M(s,a)$ that corresponds to playing action $a$ at state $s$ in the MDP $M$. We can view a sample $r$ from the reward distribution $R_M(s,a)$ as $r=\overline{R}_M(s,a)+\epsilon_R$, where $\epsilon_R$ denotes a sample of zero-mean, real-valued additive noise. Similarly, the transition distribution $P_M(s,a)$ can also be decomposed as a mean value $\overline{P}_M(s,a)$ in $\Real^m$ plus a zero-mean additive noise $\epsilon_P$ in $\Real^m$ so that $s'=\overline{P}_M(s,a)+\epsilon_P$ lies in\footnote{\citet{osband2014model} argue that the assumption $\cS \subset \Real^m$ is not restrictive for most practical settings.} $\cS \subset \Real^m$. A policy $\pi_M$ is said to be optimal for the MDP $M$ if $V_{\pi_M,h}^M(s)=\max_{\pi}V_{\pi,h}^M(s)$ for all $s \in \cS$ and $h=1,\ldots,H$.


For an MDP $M$, a distribution $\phi$ over $\cS$ and period $1 \le h \le H$, we define the one step future value function as the expected value of the optimal policy $\pi_M$, with the next state distributed according to $\phi$, i.e. $
U_h^M(\phi) \bydef \mathbb{E}_{s' \sim \phi}\Big[V^M_{\pi_M,h+1}(s')\Big]$. We assume the following regularity condition on the future value function of any MDP in our uncertainty class, also made by \citet{osband2014model}.
\vspace*{-2mm}
\paragraph{Assumption (A1)} 

For any two single-step transition distributions $\phi_1, \phi_2$ over $\cS$, and $1 \le h \le H$,
\beq
\label{eqn: lipschitz}
\abs{U_h^M(\phi_1)-U_h^M(\phi_2)} \le L_M \norm{\overline{\phi}_1-\overline{\phi}_2}_2,
\eeq
where $\overline{\phi}\bydef \mathbb{E}_{s' \sim \phi}[s'] \in \cS$ denotes the mean of the distribution $\phi$. In other words, the one-step future value functions for each period $h$ are Lipschitz continuous with respect to the $\norm{\cdot}_2$-norm of the mean\footnote{Assumption (\ref{eqn: lipschitz}) is essentially equivalent to assuming knowledge of the centered state transition noise distributions, since it implies that any two transition distributions with the same means are identical.}, with global Lipschitz constant $L_M$. We also assume that there is a known constant $L$ such that $ L_\star \bydef L_{M_\star} \le L$.
\vspace*{-2mm}
\paragraph{Regret} 

At the beginning of each episode $l$, an RL algorithm chooses a policy $\pi_l$ depending upon the observed state-action-reward sequences upto episode $l-1$, denoted by the history $\cH_{l-1} \bydef \lbrace s_{j,k},a_{j,k},r_{j,k},s_{j,k+1}\rbrace_{1 \le j \le l-1,1 \le k \le H}$, and executes it for the entire duration of the episode. In other words, at each period $h$ of the $l$-th episode, the learning algorithm chooses action $a_{l,h} = \pi_l(s_{l,h},h)$, receives reward $r_{l,h} = \overline{R}_\star(s_{l,h},a_{l,h}) + \epsilon_{R,l,h}$ and observes the next state $s_{l,h+1}=\overline{P}_\star(s_{l,h},a_{l,h}) + \epsilon_{P,l,h}$. The goal of an episodic online RL algorithm is to maximize its cumulative reward across episodes, or, equivalently, minimize its cumulative {\em regret}: the loss incurred in terms of the value function due to not knowing the optimal policy $\pi_\star \bydef \pi_{M_\star}$ of the unknown MDP $M_\star$ beforehand and instead using the policy $\pi_l$ for each episode $l$, $l = 1, 2, \ldots$. The cumulative (expected) regret of an RL algorithm $\pi=\lbrace \pi_1,\pi_2,\ldots \rbrace$ upto time horizon $T=\tau H$ is defined as
$\text{Regret}(T)=\sum_{l=1}^{\tau} \Big[V^{M_\star}_{\pi_\star,1}(s_{l,1})-V^{M_\star}_{\pi_l,1}(s_{l,1})\Big]$,
where the initial states $s_{l,1},l \ge 1$ are assumed to be fixed.
\vspace*{-2mm}
\paragraph{Notations} For the rest of the paper, unless otherwise specified, we define $z \bydef (s,a)$, $z' \bydef (s',a')$ and $z_{l,h} \bydef (s_{l,h},a_{l,h})$ for all $l \ge 1$ and $1 \le h \le H$. 



\section{ALGORITHMS}
\label{sec:Algo}
\vspace*{-2mm}
\subsection{Representing Uncertainty}
\label{subsec:uncertainty}
The algorithms we design represent uncertainty in the reward and transition distribution $R_\star,P_\star$ by maintaining Gaussian process (GP) priors over the mean reward function $\overline{R}_\star:\cS \times \cA \ra \Real$ and the mean transition function $\overline{P}_\star:\cS \times \cA \times \lbrace 1,\ldots,m \rbrace \ra \Real $ of the unknown MDP $M_\star$. (We denote $\overline{P}_\star(s,a) := [\overline{P}_\star(s,a,1) \, \ldots \, \overline{P}_\star(s,a,m)]^T$.) A Gaussian Process over $\cX$, denoted by $GP_\cX(\mu(\cdot),k(\cdot,\cdot))$, is a collection of random variables $(f(x))_{x\in \cX}$, one for each $x \in \cX$, such that every finite sub-collection of random variables $(f(x_i))_{i = 1}^m$ is jointly Gaussian with mean $\expect{f(x_i)} = \mu(x_i)$ and covariance $\expect{(f(x_i)-\mu(x_i))(f(x_j)-\mu(x_j))} = k(x_i, x_j)$, $1 \le i, j \le m$, $m \in \mathbb{N}$. We use $GP_\cZ(0,k_R)$ and $GP_{\tilde{\cZ}}(0,k_P)$ as the initial prior distributions over $\overline{R}_\star$ and $\overline{P}_\star$, with positive semi-definite covariance (kernel) functions $k_R$ and $k_P$ respectively, where $\cZ\bydef \cS \times \cA$, $\tilde{\cZ} \bydef \cZ\times \lbrace 1,\ldots,m \rbrace$. We also assume that the noise variables $\epsilon_{R,l,h}$ and $\epsilon_{P,l,h}$ are drawn independently, across $l$ and $h$, from $\cN(0,\lambda_R)$ and $\cN(0,\lambda_PI)$ respectively, with $\lambda_R,\lambda_P \ge 0$.
Then, by standard properties of GPs  \citep{rasmussen2006gaussian}, conditioned on the history $\cH_{l}$, the posterior distribution over $\overline{R}_\star$ is also a Gaussian process, $GP_{\cZ}(\mu_{R,l},k_{R,l})$, with mean and kernel functions
\beqa
\label{eqn:post-reward}
\begin{aligned}
\mu_{R,l}(z) &= k_{R,l}(z)^T(K_{R,l} + \lambda_R I)^{-1}R_{l},\\
k_{R,l}(z,z') &= k_R(z,z') - k_{R,l}(z)^T(K_{R,l} + \lambda_R I)^{-1} k_{R,l}(z'),\\
\sigma_{R,l}^2(z) &= k_{R,l}(z,z).
\end{aligned}
\eeqa
Here $R_l \bydef [r_{1,1},\ldots,r_{l,H}]^T$ denotes the vector of  rewards observed at $\cZ_{l} \bydef \lbrace z_{j,k} \rbrace_{1 \le j \le l,1 \le k \le H} =\lbrace z_{1,1},\ldots,z_{l,H}\rbrace$, the set of all state-action pairs available at the end of episode $l$. $k_{R,l}(z) \bydef [k_R(z_{1,1},z),\ldots,k_R(z_{l,H},z)]^T$ denotes the vector of kernel evaluations between $z$ and elements of the set $\cZ_{l}$ and $K_{R,l} \bydef [k_R(u,v)]_{u,v \in \cZ_{l}}$ denotes the kernel matrix computed at $\cZ_{l}$. 

Similarly, conditioned on $\cH_{l}$, the posterior distribution over $\overline{P}_\star$ is $GP_{\tilde{\cZ}}(\mu_{P,l},k_{P,l})$ with mean and kernel functions 
\beqa
\label{eqn:post-state}
\begin{aligned}
\mu_{P,l}(z,i) &= k_{P,l}(z,i)^T(K_{P,l} + \lambda_P I)^{-1}S_l,\\
k_{P,l}\big((z,i),(z',j)\big) &= k_P((z,i),(z,i)) - k_{P,l}(z,i)^T(K_{P,l} + \lambda_P I)^{-1} k_{P,l}(z',j),\\
\sigma_{P,l}^2(z,i) &= k_{P,l}\big((z,i),(z,i)\big).
\end{aligned}
\eeqa
Here $S_{l} \bydef [s_{1,2}^T,\ldots,s_{l,H+1}^T]^T$ denotes the vector of state transitions at $\cZ_{l}=\lbrace z_{1,1},\ldots,z_{l,H}\rbrace$, the set of all state-action pairs available at the end of episode $l$. $k_{P,l}(z,i) \bydef \Big[k_P\big((z_{1,1},1),(z,i)\big),\ldots,k_P\big((z_{l,H},m),(z,i)\big)\Big]^T$ denotes the vector of kernel evaluations between $(z,i)$ and elements of the set $\tilde{\cZ}_{l}\bydef \big\lbrace (z_{j,k},i) \big\rbrace_{1 \le j \le l,1 \le k \le H, 1 \le i \le m}\\ =\big\lbrace  (z_{1,1},1),\ldots,(z_{l,H},m)\big\rbrace $ and $K_{P,l} \bydef \big[k_P(u,v)\big]_{u,v \in \tilde{\cZ}_{l}}$ denotes the kernel matrix computed at $\tilde{\cZ}_{l}$.

Thus, at the end of episode $l$, conditioned on the history $\cH_l$, the posterior distributions over $\overline{R}_\star(z)$ and $\overline{P}_\star(z,i)$ is updated and maintained as $\cN\big(\mu_{R,l}(z),\sigma^2_{R,l}(z)\big)$ and $\cN\big(\mu_{P,l}(z,i),\sigma^2_{P,l}(z,i)\big)$ respectively, for every $z \in \cZ$ and $i=1,\ldots,m$. This representation not only permits generalization via inductive inference across state and action spaces, but also allows for tractable updates. We now present our online algorithms GP-UCRL and PSRL for kernelized MDPs. 


\subsection{GP-UCRL Algorithm}
\label{subsec:GP-UCRL}

GP-UCRL (Algorithm \ref{algo:GP-UCRL}) is an optimistic algorithm based on the Upper Confidence Bound principle, which adapts the confidence sets of UCRL2 \citep{jaksch2010near} to exploit the kernel structure. At the start of every episode $l$, GP-UCRL constructs confidence sets $\cC_{R,l}$ and $\cC_{P,l}$ for the mean reward function and transition function, respectively, using the parameters of GP posteriors as given in Section \ref{subsec:uncertainty}. 
The exact forms of the confidence sets appear in the theoretical result later, e.g., (\ref{eqn:confidence-set-reward-rkhs}) and (\ref{eqn:confidence-set-state-rkhs}). It then builds the set $\cM_l$ of all plausible MDPs $M$ with the mean reward function $\overline{R}_M \in \cC_{R,l}$, the mean transition function $\overline{P}_M \in \cC_{P,l}$ and the global Lipschitz constant  (\ref{eqn: lipschitz}) $L_M$ of future value functions upper bounded by a known constant $L$, where $L_\star \le L$. It then selects an optimistic policy $\pi_l$ for the family of MDPs $\cM_l$ in the sense that $V^{M_l}_{\pi_l,1}(s_{l,1})=\max_{\pi}\max_{M \in \cM_l}V^{M}_{\pi,1}(s_{l,1})$, where $s_{l,1}$ is the initial state and $M_l$ is the most optimistic realization from $\cM_l$,
and executes $\pi_l$ for the entire episode. The pseudo-code of GP-UCRL is given in Algorithm \ref{algo:GP-UCRL}. (Note: Even though GP-UCRL is described using the language of GP priors/posteriors, it can also be understood as kernelized regression with appropriately designed confidence sets.)

\begin{algorithm}
\renewcommand\thealgorithm{1}
\caption{GP-UCRL}\label{algo:GP-UCRL}
\begin{algorithmic}
\STATE \textbf{Input:} Kernel functions $k_R$ and $k_P$.
\STATE Set $\mu_{R,0}(z)=\mu_{P,0}(z,i)=0$, $\sigma^2_{R,0}(z)=k_R(z,z)$, $\sigma^2_{P,0}(z,i)=k_{P}\big((z,i),(z,i)\big) \forall z \in \cZ, \forall i=1,\ldots,m$.
\FOR{episode $l = 1, 2, 3, \ldots$}
\STATE Construct confidence sets $\cC_{R,l}$ and $\cC_{P,l}$.
\STATE Construct the set of all plausible MDPs $\cM_l =\lbrace M : L_M \le L, \overline{R}_M \in \cC_{R,l}, \overline{P}_M \in \cC_{P,l} \rbrace.$
\STATE Choose policy  
$\pi_l$ such that $V^{M_l}_{\pi_l,1}(s_{l,1})=\max_{\pi}\max_{M \in \cM_l}V^{M}_{\pi,1}(s_{l,1})$.
\STATE \FOR{period $h=1,2,3, \ldots, H$} 
\STATE Choose action $a_{l,h} = \pi_l(s_{l,h},h)$.
\STATE Observe reward $r_{l,h} = \overline{R}_\star(z_{l,h}) + \epsilon_{R,l,h}$.
\STATE Observe next state $s_{l,h+1}=\overline{P}_\star(z_{l,h}) + \epsilon_{P,l,h}$.
\ENDFOR
\STATE Update $\mu_{R,l}$, $\sigma_{R,l}$ using (\ref{eqn:post-reward}) and $\mu_{P,l}$, $\sigma_{P,l}$ using (\ref{eqn:post-state}).
\ENDFOR
\end{algorithmic}
\addtocounter{algorithm}{-1}
\end{algorithm}

Optimizing for an optimistic policy is not computationally tractable in general, even though planning for the optimal policy is possible for a given MDP. A popular approach to overcome this difficulty is to sample a random MDP at every episode and solve for its optimal policy, called posterior sampling \citep{osband2016posterior}.

\subsection{PSRL Algorithm}

PSRL (Algorithm \ref{algo:GP-PSRL}), in its most general form, starts with a prior distribution $\Phi\equiv(\Phi_R,\Phi_P)$ over MDPs, where $\Phi_R$ and $\Phi_P$ are priors over reward and transition distributions respectively. At the beginning of episode $l$, using the history of observations $\cH_{l-1}$, it updates the posterior $\Phi_l\equiv (\Phi_{R,l}, \Phi_{P,l})$ and samples an MDP $M_l$ from it\footnote{Sampling can be done using MCMC methods even if $\Phi_{l }$ doesn't admit any closed form.}  ($\Phi_{R,l}$ and $\Phi_{P,l}$ are posteriors over reward and transition distributions respectively). It then selects an optimal policy $\pi_l$ of the sampled MDP $M_l$, in the sense that $V^{M_l}_{\pi_l,h}(s)=\max_{\pi}V^{M_l}_{\pi,h}(s)$ for all $s \in \cS$ and for all $h=1,2,\ldots,H$,
and executes $\pi_l$ for the entire episode.

\begin{algorithm}
\renewcommand\thealgorithm{2}
\caption{PSRL}\label{algo:GP-PSRL}
\begin{algorithmic}
\STATE \textbf{Input:} Prior $\Phi$. 
\STATE Set $\Phi_1=\Phi$.
\FOR{ episode $l = 1, 2, 3, \ldots$}
\STATE Sample $M_l \sim \Phi_l$.
\STATE Choose policy 
$\pi_l$ such that $V^{M_l}_{\pi_l,h}(s)=\max_{\pi}V^{M_l}_{\pi,h}(s)\; \forall s \in \cS,\forall h=1,2,\ldots,H$.
\STATE \FOR{period $h=1,2,3, \ldots, H$} 
\STATE Choose action $a_{l,h} = \pi_l(s_{l,h},h)$.
\STATE Observe reward $r_{l,h} = \overline{R}_\star(z_{l,h}) + \epsilon_{R,l,h}$.
\STATE Observe next state $s_{l,h+1}=\overline{P}_\star(z_{l,h}) + \epsilon_{P,l,h}$.
\ENDFOR
\STATE Update $\Phi_l$ to $\Phi_{l+1}$, using $\left \lbrace s_{l,h}, a_{l,h}, s_{l,h+1} \right\rbrace_{1 \le h \le H}$.
\ENDFOR
\end{algorithmic}
\addtocounter{algorithm}{-2}
\end{algorithm}


For example, if $\Phi_R$ and $\Phi_P$ are specified by GPs $GP_\cZ(0,k_R)$ and $GP_{\tilde{\cZ}}(0,k_P)$ respectively with Gaussian observation model, then the posteriors $\Phi_{R,l}$ and $\Phi_{P,l}$ are given by GP posteriors as discussed in Section \ref{subsec:uncertainty}.
Here at every episode $l$, PSRL samples an MDP $M_l$ with mean reward function $\overline{R}_{M_l} \sim GP_\cZ(\mu_{R,l-1},k_{R,l-1})$ and mean transition function $\overline{P}_{M_l} \sim GP_{\tilde{\cZ}}(\mu_{P,l-1},k_{P,l-1})$. 


\vspace*{-2mm}

\paragraph{Computational issues} 
Optimal planning may be computationally intractable even for a given MDP, so it is common in the literature to assume access to an approximate MDP planner $\Gamma (M,\eps)$ which returns an $\epsilon$-optimal policy for $M$. Given such a planner $\Gamma$, if it is possible to obtain (through extended value iteration \citep{jaksch2010near} or otherwise) an efficient planner $\tilde{\Gamma} (\cM,\eps)$ which returns an $\epsilon$-optimal policy for the most optimistic MDP from a  family $\cM$, then we modify PSRL and GP-UCRL to choose $\pi_l=\Gamma (M_l,\sqrt{H/l})$ and $\pi_l=\tilde{\Gamma} (\cM_l,\sqrt{H/l})$ respectively at every episode $l$. It follows that this adds only an $O(\sqrt{T})$ factor in the respective regret bounds. The design of such approximate planners for continuous state and action spaces remains a subject of active research, whereas our focus in this work is on the statistical efficiency of the online learning problem.

\section{MAIN RESULTS}
\label{sec:Results}

In this section, we provide our main theoretical upper bounds on the cumulative regret. All the proofs are deferred to the appendix for lack of space.

\subsection{Preliminaries and assumptions}

\vspace*{-2mm}
\paragraph{Maximum Information Gain (MIG)} Let $f:\cX \ra \Real$ be a (possibly random) real-valued function defined on a domain $\cX$. For each $A \subset \cX$, let $f_A := [f(x)]_{x\in A}$ denote a vector containing $f$'s evaluations at each point in $A$ and $Y_A$ denote a noisy version of $f_A$ obtained by passing $f_A$ through a channel $\prob{Y_A | f_A}$. The \textit{Maximum Information Gain (MIG)} about $f$ after $t$ noisy observations is defined as $\gamma_t(f, \cX) \bydef \max_{A \subset \cX : \abs{A}=t} I(f_A;Y_A)$, where $I(X;Y)$ denotes the Shannon mutual information between two jointly distributed random variables $X, Y$. If $f \sim GP_{\cX}(0,k)$ and the channel is iid Gaussian $\cN(0,\lambda)$, then $\gamma_t(f, \cX)$ depends only on $k,\cX,\lambda$ \cite{srinivas2009gaussian}. But the dependency on $\lambda$ is only of $\tilde{O}(1/\lambda)$ and hence in this setting we denote MIG as $\gamma_t(k,\cX)$ to indicate the dependencies on $k$ and $\cX$ explicitly. If $\cX \subset \Real^d$ is compact and convex, then $\gamma_t(k,\cX)$ is sublinear in $t$ for different classes of kernels; e.g. for the linear kernel\footnote{$k(x,x') = x^Tx'$.} $\gamma_t(k,\cX)=\tilde{O}(d\ln t)$ and for the Squared Exponential (SE) kernel\footnote{$
k(x,x') = \exp\left(-\norm{x-x'}_2^2/2l^2\right),l > 0
$.}, $\gamma_t(k,\cX)=\tilde{O}\left((\ln t)^{d}\right)$ \cite{srinivas2009gaussian}.
\vspace*{-2mm}
\paragraph{Composite kernels} Let $\cX=\cX_1 \times \cX_2$. A composite kernel $k:\cX \times \cX \ra \Real$ can be constructed by using individual kernels $k_1:\cX_1 \times \cX_1 \ra \Real$ and $k_2:\cX_2 \times \cX_2 \ra \Real$. For instance, a \textit{product kernel} $k=k_1 \otimes k_2$ is obtained by setting $(k_1 \otimes k_2)\big((x_1,x_2),(x_1',x_2')\big)\bydef k_1(x_1,x_1')k_2(x_2,x_2')$. Another example is that of an \textit{additive kernel} $k=k_1 \oplus k_2$ by setting $(k_1 \oplus k_2)\big((x_1,x_2),(x_1',x_2')\big)=k_1(x_1,x_1')+k_2(x_2,x_2')$. \citet{krause2011contextual} bound the MIG for additive and product kernels in terms of the MIG for individual kernels as
\beq
\label{eqn:additive-kernel}
\gamma_t(k_1 \oplus k_2,\cX) \le \gamma_t(k_1,\cX_1) + \gamma_t(k_2,\cX_2)+2 \ln t,
\eeq
and, if $k_2$ has rank at most $d$ (i.e. all kernel matrices over any finite subset of $\cX_2$ have rank at most $d$), as
\beq
\label{eqn:product-kernel}
\gamma_t(k_1 \otimes k_2,\cX) \le d\gamma_t(k_1,\cX_1)  + d\ln t.
\eeq
Therefore, if the MIGs for individual kernels are sublinear in $t$, then the same is true for their products and additions.  For example, the MIG for the product of a $d_1$-dimensional linear kernel and a $d_2$-dimensional SE kernel is $\tilde{O}\big(d_1(\ln t)^{d_2}\big)$.

%
\vspace*{-2mm}
\paragraph{Regularity assumptions (A2)} 
Each of our results in this section will assume that $\overline{R}_\star$ and $\overline{P}_\star$ have small norms in the Reproducing Kernel Hilbert Spaces (RKHSs) associated with kernels $k_R$ and $k_P$ respectively. An RKHS of real-valued functions $\cX \to \Real$, denoted by $\cH_k(\cX)$, is completely specified by its kernel function $k(\cdot,\cdot)$ and vice-versa, with an inner product $\inner{\cdot}{\cdot}_k$ obeying the reproducing property $f(x)=\inner{f}{k(x,\cdot)}_k$ for all $f \in \cH_k(\cX)$. The induced RKHS norm $\norm{f}_k = \sqrt{\inner{f}{f}}_k$ is a measure of  smoothness of $f$ with respect to the kernel function $k$. We assume known bounds on the RKHS norms of the mean reward and mean transition functions: $\overline{R}_\star \in \cH_{k_R}(\cZ),\norm{\overline{R}_\star}_{k_R} \le B_R$ and $\overline{P}_\star \in \cH_{k_P}(\tilde{\cZ}),\norm{\overline{P}_\star}_{k_P} \le B_P$, where $\cZ \bydef \cS \times \cA$ and $\tilde{\cZ}\bydef \cZ \times \lbrace 1,\ldots,m \rbrace$.
\vspace*{-2mm}
\paragraph{Noise assumptions (A3)} For the purpose of this section, we assume that the noise sequence $\left\lbrace\epsilon_{R,l,h}\right\rbrace_{l\ge 1, 1 \le h \le H}$ is conditionally $\sigma_R$-sub-Gaussian, i.e., there exists a known $\sigma_R \ge 0$ such that for any $\eta \in \Real$, 
\beq
\label{eqn:noise-reward}
 \expect{\exp(\eta\;\epsilon_{R,l,h})\given \cF_{R,l,h-1}} \le \exp\big(\eta^2\sigma_R^2/2\big), 
\eeq
where $\cF_{R,l,h-1}$ is the sigma algebra generated by the random variables $\lbrace s_{j,k},a_{j,k},\epsilon_{R,j,k}\rbrace_{1 \le j\le l-1,1 \le k \le H}$, $\lbrace s_{l,k},a_{l,k},\epsilon_{R,l,k}\rbrace_{1 \le k \le h-1}$, $s_{l,h}$ and $a_{l,h}$. Similarly, the noise sequence $\left\lbrace\epsilon_{P,l,h}\right\rbrace_{l\ge 1,1 \le h \le H}$ is assumed to be conditionally component-wise independent and $\sigma_P$-sub-Gaussian, in the sense that there exists a known $\sigma_P \ge 0$ such that for any $\eta \in \Real$ and $1 \le i \le m$,
\beq
\label{eqn:noise-state}
\begin{aligned}
\expect{\exp\big(\eta \epsilon_{P,l,h}(i)\big)\given \cF_{P,l,h-1}} &\le \exp\big(\eta^2\sigma_P^2/2\big),\\
\expect{\epsilon_{P,l,h}\epsilon_{P,l,h}^T\given \cF_{P,l,h-1}} &=I,
\end{aligned}
\eeq
where $\cF_{P,l,h-1}$ is the sigma algebra generated by the random variables $\lbrace s_{j,k},a_{j,k},\epsilon_{P,j,k}\rbrace_{1 \le j\le l-1,1 \le k \le H}$, $\lbrace s_{l,k},a_{l,k},\epsilon_{P,l,k}\rbrace_{1 \le k \le h-1}$, $s_{l,h}$ and $a_{l,h}$.

\subsection{Regret Bound for GP-UCRL in Kernelized MDPs}
\label{subsec:results-mis-specified}

We run GP-UCRL (Algorithm \ref{algo:GP-UCRL}) using GP priors and Gaussian noise models as given in Section \ref{subsec:GP-UCRL}. Note that, in this section, though the algorithm relies on GP priors, the setting under which it is analyzed is `agnostic', i.e., under a {\em fixed} but unknown true MDP environment.

\vspace*{-2mm}
\paragraph{Choice of confidence sets}
At the beginning of each episode $l$, GP-UCRL constructs the confidence set $\cC_{R,l}$ as
\beq
\cC_{R,l}=\big\lbrace f: \abs{f(z)-\mu_{R,l-1}(z)} \le \beta_{R,l}\sigma_{R,l-1}(z) \forall z \in \cZ \big\rbrace,\label{eqn:confidence-set-reward-rkhs}
\eeq
where $\mu_{R,l}(z)$ and $\sigma_{R,l}(z)$ are as defined in (\ref{eqn:post-reward}) with $\lambda_R=H$.
$\beta_{R,l}\bydef B_R + \dfrac{\sigma_R}{\sqrt{H}}\sqrt{2\big(\ln(3/\delta)+\gamma_{(l-1)H}(R)\big)}$, where $\gamma_t(R)\equiv\gamma_{t}(k_R,\cZ)$ denotes the maximum information gain (or an upper bound on the maximum information gain) about any $f \sim GP_{\cZ}(0,k_R)$ after $t$ noisy observations obtained by passing $f$ through an iid Gaussian channel $\cN(0,H)$.

Similarly, GP-UCRL constructs the confidence set $\cC_{P,l}$ as
\beq
\cC_{P,l}=\big\lbrace f: \norm{f(z)-\mu_{P,l-1}(z)}_2 \le \beta_{P,l}\norm{\sigma_{P,l-1}(z)}_2 \forall z \in \cZ\big\rbrace.\label{eqn:confidence-set-state-rkhs}
\eeq
Here, $\mu_{P,l}(z)\bydef [\mu_{P,l}(z,1),\ldots,\mu_{P,l}(z,m)]^T$ and $\sigma_{P,l}(z)\bydef [\sigma_{P,l}(z,1),\ldots,\sigma_{P,l}(z,m)]^T$, where $\mu_{P,l}(z,i)$ and $\sigma_{P,l}(z,i)$ are as defined in (\ref{eqn:post-state}) with $\lambda_P=mH$. $\beta_{P,l} \bydef  B_P + \dfrac{\sigma_P}{\sqrt{mH}}\sqrt{2\big(\ln(3/\delta)+\gamma_{m(l-1)H}(P)\big)}$, where $\gamma_t(P)\equiv\gamma_{t}(k_P,\tilde{\cZ})$ denotes the maximum information gain about any $f \sim GP_{\tilde{\cZ}}(0,k_P)$ after $t$ noisy observations obtained by passing $f$ through an iid Gaussian channel $\cN(0,mH)$.

\begin{mytheorem}[Frequentist regret bound for GP-UCRL]
\label{thm:regret-bound-RKHS}


Let assumptions (A1) - (A3) hold, 
$k_R(z,z) \le 1$ and $k_P((z,i),(z,i)) \le 1$ for all $z \in \cZ$ and $1 \le i \le m$\footnote{This is called the bounded variance property of kernels and it holds for most of the common kernels (e.g. Squared Exponential).}. Then for any $0 \le \delta \le 1$, GP-UCRL, with confidence sets (\ref{eqn:confidence-set-reward-rkhs}) and (\ref{eqn:confidence-set-state-rkhs}), enjoys, with probability at least $1-\delta$, the regret bound 
\beqn
Regret(T) \le 2\beta_{R,\tau}\sqrt{2 e H\gamma_{T}(R)T}+ 2L\beta_{P,\tau}\sqrt{2emH \gamma_{mT}(P)T}+(LD+2B_RH) \sqrt{2T\ln(3/\delta)},
\eeqn
where $T \bydef \tau H$ is the total time in $\tau$ episodes, $\beta_{R,\tau}=B_R + \dfrac{\sigma_R}{\sqrt{H}}\sqrt{2\big(\ln(3/\delta)+\gamma_{(\tau-1)H}(R)\big)}$ and $\beta_{P,\tau} = B_P + \dfrac{\sigma_P}{\sqrt{mH}}\sqrt{2\big(\ln(3/\delta)+\gamma_{m(\tau-1)H}(P)\big)}$, $L$ is a known upper bound over the global Lipschitz constant (\ref{eqn: lipschitz}) $L_\star$ for the future value function of $M_\star$ and $D \bydef \max_{s,s'\in \cS}\norm{s-s'}_2$ denotes the diameter of $\cS$.
\end{mytheorem}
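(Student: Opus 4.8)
The plan is to combine optimism with a telescoping value-difference decomposition, and then convert the resulting per-step confidence widths into the maximum information gain; the probability $1-\delta$ will be split three ways, which is why $\ln(3/\delta)$ appears. First I would establish that, with probability at least $1-\delta$, the true functions lie inside the confidence sets at every episode, so that $M_\star \in \cM_l$ for all $l$. For the reward this is a direct application of the self-normalized RKHS concentration inequality for $\sigma_R$-sub-Gaussian martingale noise (as developed for kernelized bandits), with regularizer $\lambda_R = H$, yielding exactly the radius $\beta_{R,l}$. For the transitions I would lift the $\Real^m$-valued regression to a scalar RKHS regression over the extended domain $\tilde{\cZ}=\cZ\times\{1,\ldots,m\}$: by Assumption (A3) the noise is component-wise sub-Gaussian and forms a martingale difference along the interleaved index, so the same bound (with $m(l-1)H$ observations and regularizer $\lambda_P=mH$) gives $|\overline{P}_\star(z,i)-\mu_{P,l-1}(z,i)|\le\beta_{P,l}\sigma_{P,l-1}(z,i)$ uniformly in $(z,i)$; squaring and summing over $i$ reproduces the $\ell_2$ set $\cC_{P,l}$.

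On this good event, since $\overline{R}_\star\in\cC_{R,l}$, $\overline{P}_\star\in\cC_{P,l}$ and $L_\star\le L$, the true MDP is feasible in $\cM_l$, so optimism gives $V^{M_l}_{\pi_l,1}(s_{l,1})\ge V^{M_\star}_{\pi_\star,1}(s_{l,1})$ and the per-episode regret is at most $V^{M_l}_{\pi_l,1}(s_{l,1})-V^{M_\star}_{\pi_l,1}(s_{l,1})$, the value gap of the common policy $\pi_l$ under the optimistic and true models. I would expand this gap by a Bellman telescoping argument along the realized trajectory: writing each value as mean reward plus the one-step future value $U_h^{M_l}$, the gap equals $\sum_{h=1}^{H}\big[(\overline{R}_{M_l}-\overline{R}_\star)(z_{l,h})+(U_h^{M_l}(P_{M_l}(z_{l,h}))-U_h^{M_l}(P_\star(z_{l,h})))\big]$ plus a sum of martingale differences that arise from replacing expected next-state values by their realized counterparts.

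Next I bound the terms. The reward gap obeys $|(\overline{R}_{M_l}-\overline{R}_\star)(z_{l,h})|\le 2\beta_{R,l}\sigma_{R,l-1}(z_{l,h})$ since both functions lie in $\cC_{R,l}$. For the transition term I invoke (A1): $|U_h^{M_l}(P_{M_l}(z_{l,h}))-U_h^{M_l}(P_\star(z_{l,h}))|\le L\|\overline{P}_{M_l}(z_{l,h})-\overline{P}_\star(z_{l,h})\|_2\le 2L\beta_{P,l}\|\sigma_{P,l-1}(z_{l,h})\|_2$, which is exactly how the Lipschitz assumption lets the multi-dimensional transition be controlled by the scalar GP posterior variances. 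Monotonicity of $\beta_{\cdot,l}$ in $l$ replaces it by $\beta_{\cdot,\tau}$, and Cauchy--Schwarz over the $T=\tau H$ steps turns $\sum\sigma$ into $\sqrt{T\sum\sigma^2}$. The final ingredient is the variance--information-gain lemma for bounded-variance kernels ($k\le 1$): $\sum_{l,h}\sigma_{R,l-1}^2(z_{l,h})=O(H\gamma_T(R))$, while $\sum_{l,h}\|\sigma_{P,l-1}(z_{l,h})\|_2^2=\sum_{l,h,i}\sigma_{P,l-1}^2(z_{l,h},i)=O(mH\gamma_{mT}(P))$ because the $mT$ scalar posterior variances over $\tilde{\cZ}$ are governed by $\gamma_{mT}(P)$. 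These give the first two terms $2\beta_{R,\tau}\sqrt{2eH\gamma_T(R)T}$ and $2L\beta_{P,\tau}\sqrt{2emH\gamma_{mT}(P)T}$.

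The residual martingale differences are handled by Azuma--Hoeffding: one family comes from realized-versus-expected value differences, whose range is at most $2B_R H$ (since $|\overline{R}_M|\le B_R$ on the RKHS ball forces $|V^M_{\pi,h}|\le B_R H$), and the other from the Lipschitz transition remainder measured against a Dirac at the realized next state, bounded by $LD$ via (A1) and the diameter $D=\max_{s,s'\in\cS}\|s-s'\|_2$; summed over $T$ steps these yield $(LD+2B_R H)\sqrt{2T\ln(3/\delta)}$. I expect the main obstacle to be twofold. First, making the vector-valued transition concentration rigorous: verifying that the component-wise sub-Gaussian, martingale-difference noise of (A3) genuinely reduces the $\Real^m$-valued regression to a single scalar RKHS problem on $\tilde{\cZ}$, so that the known self-normalized bound and $\gamma_{mT}(P)$ both apply with regularizer $mH$. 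Second, cleanly isolating the two martingale pieces and pinning down their ranges ($B_R H$ and $LD$) inside the value decomposition, since this is where (A1) must be applied a \emph{second} time — to a point mass at the realized next state — to keep the transition contribution bounded while preserving the telescoping structure.
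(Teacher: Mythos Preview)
Your proposal is correct and follows essentially the same route as the paper: optimism from $M_\star\in\cM_l$, the Bellman telescoping of Lemma~\ref{lem:common}, the Lipschitz step via (A1), Azuma--Hoeffding on the martingale remainder with range $LD+2B_RH$ (Lemma~\ref{lem:GP-UCRL}), the RKHS self-normalized concentration lifted to $\tilde{\cZ}$ for transitions (Lemma~\ref{lem:kernel-least-squares}), and the batch predictive-variance bound of Lemma~\ref{lem:predictive-variance-sum}. The only cosmetic difference is that the paper treats the residual as a \emph{single} martingale $\Delta_{l,h}$ whose range splits as $LD$ (span of $V^{M_l}_{\pi_l,h+1}$ via $L_{M_l}\le L$) plus $2B_RH$ (span of $V^{M_\star}_{\pi_l,h+1}$ via $|\overline{R}_\star|\le B_R$), rather than two families, which is exactly what yields $(LD+2B_RH)\sqrt{2T\ln(3/\delta)}$ from one Azuma application.
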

\vspace*{-2mm}
\paragraph{Interpretation of the bound} As MIG increases with the number of observations, $\beta_{R,l}$ and $\beta_{P,l}$ increase with $l$. Hence $\beta_{R,\tau}=\tilde{O}\Big(B_R + \dfrac{\sigma_R}{\sqrt{H}}\sqrt{\gamma_{T}(R)}\Big)$ and $\beta_{P,\tau}=\tilde{O}\Big( B_P + \dfrac{\sigma_P}{\sqrt{mH}}\sqrt{\gamma_{mT}(P)}\Big)$. Thus, Theorem \ref{thm:regret-bound-RKHS} implies that the cumulative regret of GP-UCRL after $T$ timesteps is
$\tilde{O}\Big(\big(\sqrt{H\gamma_T(R)} +\gamma_T(R)\big)\sqrt{T} + L\big(\sqrt{mH\gamma_{mT}(P)} + \gamma_{mT}(P)\big)\sqrt{T}+ H\sqrt{T}\Big)$
with high probability. Hence, we see that the cumulative regret of GP-UCRL scales linearly with $\gamma_T(R)$ and $\gamma_{mT}(P)$. As $\gamma_T(R)$ and $\gamma_{mT}(P)$ grow  sublinearly with $T$ for most  popular kernels (eg. Squared Exponential (SE), polynomial), the cumulative regret of GP-UCRL can grow sublinearly with $T$. We illustrate this with the following concrete examples:
\vspace*{-2mm}
\paragraph{(a) Example bound on $\gamma_T(R)$:} Recall that $\gamma_T(R)\equiv\gamma_T(k_R,\cZ)$, where the kernel $k_R$ is defined on the product space $\cZ =\cS \times \cA$. If $k_1$ and $k_2$ are kernels on the state space $\cS \subset\Real^m$ and the action space $\cA \subset \Real^n$, respectively, and $k_R$ is an additive kernel of $k_1$ and $k_2$, then  (\ref{eqn:additive-kernel}) implies that $\gamma_T(k_R,\cZ) \le \gamma_T(k_1,\cS)+\gamma_T(k_2,\cA)+2 \ln T$. Further, if both $\cS$, $\cA$ are  compact and convex, and both $k_1,k_2$ are Squared Exponential (SE) kernels, then $\gamma_T(k_1,\cS)=\tilde{O}\big((\ln T)^m\big)$ and $\gamma_T(k_2,\cA)=\tilde{O}\big((\ln T)^n\big)$. Hence in this case
$\gamma_T(R)=\tilde{O}\big((\ln T)^{\max\lbrace m,n \rbrace}\big)$.
\vspace*{-2mm}
\paragraph{(b) Example bound on $\gamma_{mT}(P)$:} Recall that $\gamma_{mT}(P)\equiv \gamma_{mT}(k_P,\tilde{\cZ})$, where the kernel $k_P$ is defined on the product space $\tilde{\cZ} =\cZ \times \lbrace 1,\ldots,m \rbrace$. If $k_3$ and $k_4$ are kernels on the product space $\cZ$ and the index set $\lbrace 1,\ldots,m \rbrace$, respectively, and $k_P$ is a product kernel of $k_3$ and $k_4$, then (\ref{eqn:product-kernel}) implies that $\gamma_{mT}(k_P,\tilde{\cZ}) \le m\gamma_{mT}(k_3,\cZ)+m\ln (mT)$, since all kernel matrices over any subset of $\lbrace 1,\ldots,m \rbrace$ have rank at most $m$. Further, if $k_5$ is a SE kernel on the state space $\cS$, $k_6$ is a linear kernel on the action space $\cA$ and $k_3$ is a product kernel, then (\ref{eqn:product-kernel}) implies that $\gamma_{mT}(k_3,\cZ) = \tilde{O}\big(n\big(\ln (mT)\big)^m\big)$, as the rank of an $n$-dimensional linear kernel is at most $n$. Hence, in this case, $\gamma_{mT}(P)=\tilde{O}\big(mn\big(\ln (mT)\big)^m\big)$.
\vspace*{-2mm}
\paragraph{Proof Sketch for Theorem \ref{thm:regret-bound-RKHS}} 

First, see that when $\overline{R}_\star \in \cC_{R,l}$ and $\overline{P}_\star \in \cC_{P,l}$, then the following are true:
 
(a) $M_\star$ lies in $\cM_l$, the family of MDPs constructed by GP-UCRL. Hence $V^{M_l}_{\pi_l,1}(s_{l,1}) \ge V^{M_\star}_{\pi_\star,1}(s_{l,1})$, where $M_l$ is the most optimistic realization from $\cM_l$, and thus $ \text{Regret}(T) \le \sum_{l=1}^{\tau}\big(V^{M_l}_{\pi_l,1}(s_{l,1})-V^{M_\star}_{\pi_l,1}(s_{l,1})\big)$.

(b) Optimistic rewards/transitions do not deviate too much: $\abs{\overline{R}_{M_l}(z_{l,h}) - \overline{R}_\star(z_{l,h})}  
\le 2\beta_{R,l}\;\sigma_{R,l-1}(z_{l,h})$ and $\norm{\overline{P}_{M_l}(z_{l,h}) - \overline{P}_\star(z_{l,h})}_2 \le  2 \beta_{P,l}\norm{\sigma_{P,l-1}(z_{l,h})}_2$,
since by construction  $\overline{R}_{M_l} \in \cC_{R,l}$ and $\overline{P}_{M_l} \in \cC_{P,l}$. 

Further it can be shown that:

(c) Cumulative predictive variances are bounded: $\sum_{l=1}^{\tau}\sum_{h=1}^{H} \sigma_{R,l-1}(z_{l,h}) \le  \sqrt{2 eH\gamma_{T}(R)T}$
and 
\\
$\sum_{l=1}^{\tau}\sum_{h=1}^{H} \norm{\sigma_{P,l-1}(z_{l,h})}_2 \le\sqrt{2emH\gamma_{mT}(P)T}$. 

(d) Bounds on deviations of rewards and transitions imply bounds on deviation of the value function: $\sum_{l=1}^{\tau}\big(V^{M_l}_{\pi_l,1}(s_{l,1})-V^{M_\star}_{\pi_l,1}(s_{l,1})\big) \le  \sum_{l=1}^{\tau}\sum_{h=1}^{H} \abs{\overline{R}_{M_l}(z_{l,h}) - \overline{R}_\star(z_{l,h})}+L\sum_{l=1}^{\tau}\sum_{h=1}^{H} \norm{\overline{P}_{M_l}(z_{l,h}) - \overline{P}_\star(z_{l,h})}_2+ (LD+2B_RH) \sqrt{2\tau H \ln(3/\delta)}$, with probability at least $1 - \delta/3$.

The proof now follows by combining (a), (b), (c), and (d) and showing the confidence set properties
$\prob{\overline{R}_\star \in \cC_{R,l}} \ge 1-\delta/3$ and $\prob{\overline{P}_\star \in \cC_{P,l}} \ge 1-\delta/3$.

\subsection{Regret Bound for PSRL}

\citet{osband2016posterior} show that if we have a frequentist regret bound for UCRL in hand, then we can obtain a similar bound (upto a constant factor) on the \emph{Bayes regret} (defined as the \textit{expected regret under the prior distribution $\Phi$}) of PSRL. We use this idea to obtain a sublinear bound on the Bayes regret of PSRL in the setting of kernelized MDPs.
\begin{mytheorem}[Bayes regret of PSRL under RKHS prior]
\label{thm:regret-bound-PSRL}
Let assumptions (A1) - (A3) hold, 
$k_R(z,z) \le 1$ and $k_P((z,i),(z,i)) \le 1$ for all $z \in \cZ$ and $1 \le i \le m$.
Let $\Phi$ be a (known) prior distribution over MDPs $M_\star$. Then, the Bayes regret of PSRL satisfies
\beqn
\expect{Regret(T)} \le 2 \alpha_{R,\tau} \sqrt{2 e  H\gamma_{T}(R)T} + 3\expect{L_\star}\alpha_{P,\tau}\sqrt{2em H\gamma_{mT}(P)T}+3B_R,
\eeqn
where $T=\tau H$ is the total time in $\tau$ episodes, $L_\star$ is the global Lipschitz constant for the future value function (\ref{eqn: lipschitz}) of $M_\star$, $\alpha_{R,\tau}=B_R + \dfrac{\sigma_R}{\sqrt{H}}\sqrt{2\big(\ln(3T)+\gamma_{(\tau-1)H}(R)\big)}$ and $\alpha_{P,\tau} = B_P + \dfrac{\sigma_P}{\sqrt{mH}}\sqrt{2\big(\ln(3T)+\gamma_{m(\tau-1)H}(P)\big)}$.
\end{mytheorem}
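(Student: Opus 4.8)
The plan is to reduce the Bayes regret of PSRL to the frequentist quantity already controlled in the proof of Theorem \ref{thm:regret-bound-RKHS}, via the posterior-sampling lemma of \citet{osband2016posterior}. The starting point is the observation that, conditioned on the history $\cH_{l-1}$ (with respect to which the posterior $\Phi_l$, the confidence sets $\cC_{R,l},\cC_{P,l}$, and the fixed initial state $s_{l,1}$ are all measurable), the sampled MDP $M_l$ and the true MDP $M_\star$ are identically distributed. Since $\pi_\star=\pi_{M_\star}$ and $\pi_l=\pi_{M_l}$ are the respective optimal policies, applying this identity to the (bounded, hence integrable) map $M \mapsto V^{M}_{\pi_M,1}(s_{l,1})$ gives $\expect{V^{M_\star}_{\pi_\star,1}(s_{l,1})\given \cH_{l-1}} = \expect{V^{M_l}_{\pi_l,1}(s_{l,1})\given \cH_{l-1}}$. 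Summing over episodes and taking total expectation yields
\[
\expect{Regret(T)} = \expect{\sum_{l=1}^{\tau}\big(V^{M_l}_{\pi_l,1}(s_{l,1}) - V^{M_\star}_{\pi_l,1}(s_{l,1})\big)},
\]
which is exactly the optimistic surrogate analyzed in step (a) of the proof of Theorem \ref{thm:regret-bound-RKHS}, now with $M_l$ the \emph{sampled} rather than the \emph{optimistic} MDP.

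Second, I would bound each summand by reusing steps (b)--(d) almost verbatim. Because the value-function decomposition (d) only uses that $\pi_l$ is executed in both $M_l$ and $M_\star$ together with the Lipschitz property (A1), it applies unchanged and controls the summand by the per-step deviations $\abs{\overline{R}_{M_l}(z_{l,h})-\overline{R}_\star(z_{l,h})}$ and $\norm{\overline{P}_{M_l}(z_{l,h})-\overline{P}_\star(z_{l,h})}_2$, a Lipschitz factor $L_\star$ multiplying the transition deviations, and a martingale remainder. The crucial point specific to PSRL is that now \emph{both} $\overline{R}_\star$ and $\overline{R}_{M_l}$ must lie in $\cC_{R,l}$ (and likewise for $P$): the confidence-set guarantee from the frequentist proof gives $\prob{\overline{R}_\star \in \cC_{R,l}}\ge 1-\delta/3$, and the posterior-sampling identity transfers this to $\prob{\overline{R}_{M_l}\in\cC_{R,l}}\ge 1-\delta/3$ precisely because $\cC_{R,l}$ is $\cH_{l-1}$-measurable. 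On the event that both functions belong to the set, the triangle inequality yields the $2\beta_{R,l}\sigma_{R,l-1}(z_{l,h})$ and $2\beta_{P,l}\norm{\sigma_{P,l-1}(z_{l,h})}_2$ deviation bounds, after which the cumulative-variance bounds (c) together with the monotonicity $\beta_{R,l}\le\beta_{R,\tau}$, $\beta_{P,l}\le\beta_{P,\tau}$ produce the $\sqrt{2eH\gamma_T(R)T}$ and $\sqrt{2emH\gamma_{mT}(P)T}$ factors, and taking expectation over the random $L_\star$ gives the $\expect{L_\star}$ factor.

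The main obstacle is converting the high-probability frequentist estimate into a clean bound on the \emph{expected} regret, which is where the parameters $\alpha_{R,\tau},\alpha_{P,\tau}$ (carrying $\ln(3T)$ in place of $\ln(3/\delta)$) and the additive $3B_R$ term originate. I would handle this by choosing the failure level $\delta = 1/T$, so that $\ln(3/\delta)=\ln(3T)$ turns $\beta_{R,\tau},\beta_{P,\tau}$ into $\alpha_{R,\tau},\alpha_{P,\tau}$. On the ``good'' event that the confidence sets hold at every episode, the deviations are controlled as above; on the complementary ``bad'' event I would fall back on the a priori boundedness of the value function, since $k_R(z,z)\le 1$ and $\norm{\overline{R}_\star}_{k_R}\le B_R$ give $\abs{\overline{R}_\star(z)}=\abs{\inner{\overline{R}_\star}{k_R(z,\cdot)}_{k_R}}\le B_R$ by the reproducing property, whence each $V^{M}_{\pi,1}$ is at most $B_R H$ and the whole regret at most $2B_R H\tau = 2B_R T$. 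Multiplying this worst-case regret by the failure probability $\delta=1/T$ contributes the $O(B_R)$ additive constant. The only remaining subtlety is bookkeeping the constant on the transition term (the factor $3\expect{L_\star}$ rather than $2L$), which comes from separately accounting for the events $\{\overline{P}_\star\in\cC_{P,l}\}$, $\{\overline{P}_{M_l}\in\cC_{P,l}\}$, and the bad-event fallback; these are routine once the good/bad split and the $\delta=1/T$ choice are fixed.
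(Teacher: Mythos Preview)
Your proposal is essentially correct and follows the same route as the paper: the posterior-sampling identity to equate $\expect{V^{M_\star}_{\pi_\star,1}(s_{l,1})}$ with $\expect{V^{M_l}_{\pi_l,1}(s_{l,1})}$, the value-function decomposition (Lemma~\ref{lem:common}), confidence sets holding simultaneously for $M_\star$ and $M_l$, the choice $\delta=1/T$, and the bad-event fallback via $\abs{\overline{R}_\star}\le B_R$. Two small points the paper makes explicit that you glossed over: the Lipschitz factor in the decomposition is $L_{M_l}$ (not $L_\star$), and the paper uses $\expect{L_{M_l}}=\expect{L_\star}$ together with $\expect{L_{M_l}\mid E}\le \expect{L_\star}/(1-4\delta/3)$ to obtain the factor $3$; and the martingale remainder $\Delta_{l,h}$ is handled not by Azuma--Hoeffding as in step (d) of the frequentist proof but directly via $\expect{\Delta_{l,h}}=0$ (tower property), which is why no $(LD+2B_RH)\sqrt{T\ln(1/\delta)}$ term survives in the Bayes bound.
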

Theorem \ref{thm:regret-bound-PSRL} implies that the Bayes regret of PSRL after $T$ timesteps is
$\tilde{O}\Big(\big(\sqrt{H\gamma_T(R)} +\gamma_T(R)\big)\sqrt{T} + \expect{L_\star}\big(\sqrt{mH\gamma_{mT}(P)} + \gamma_{mT}(P)\big)\sqrt{T}+ H\sqrt{T}\Big)$, and thus has the same scaling as the bound for GP-UCRL.
\\
\textbf{\textit{Remark.}}
Observe that when $H \le \gamma_T(R)$ and $mH \le  \gamma_{mT}(P)$\footnote{Both conditions hold, for instance, if $H = O(\ln T)$ with a polynomial or SE kernel.}, then the regret of GP-UCRL is $\tilde{O}\Big(\big(\gamma_T(R)+L\gamma_{mT}(P)\big)\sqrt{T}\Big)$ with high probability and the Bayes regret of PSRL is $\tilde{O}\Big(\big(\gamma_T(R)+\expect{L_\star}\gamma_{mT}(P)\big)\sqrt{T}\Big)$, where both $L$ (an upper bound on $L_\star$) and $\expect{L_\star}$ basically measure  the connectedness of the MDP $M_\star$.

\vspace*{-2mm}
\paragraph{Comparison with the eluder dimension results.} \citet{osband2014model}
assume that $\overline{R}_\star$ and $\overline{P}_\star$ are elements from two function classes $\cR$ and $\cP$, respectively, with bounded $\norm{\cdot}_2$-norm, and show that PSRL obtains  Bayes regret $\tilde{O}\Big(\big(\sqrt{d_K(\cR)d_E(\cR)}+\expect{L_\star} \sqrt{d_K(\cP)d_E(\cP)}\big)\sqrt{T}\Big)$, where $d_K(\cF)$ (Kolmogorov dimension) and $d_E(\cF)$ (eluder dimension)  measure the ``complexity" of a function class $\cF$. As a special case, if both $\overline{R}_\star$ and $\overline{P}_\star$ are linear functions in finite dimension $d$, then they show that $d_E(\cR), d_K(\cR)=\tilde{O}(d)$ and $d_E(\cP), d_K(\cP)=\tilde{O}(md)$. In our setting, $\overline{R}_\star$ and $\overline{P}_\star$ are RKHS functions, and hence, by the reproducing property, they are linear functionals in (possibly) infinite dimension. From this viewpoint, all of $d_E(\cR)$, $d_K(\cR)$, $d_E(\cP)$ and $d_K(\cP)$ can blow upto infinity yielding trivial bounds. Therefore, we need a suitable measure of complexity of the RKHS spaces, and a single information-theoretic quantity, namely the Maximum Information Gain (MIG), is seen to serve this purpose.

To the best of our knowledge, Theorem \ref{thm:regret-bound-RKHS} is the first frequentist regret bound and Theorem \ref{thm:regret-bound-PSRL} is the first Bayesian regret bound in the kernelized MDP setting (i.e., when the MDP model is from an RKHS class). We see that both algorithms achieve similar regret bounds in terms of dependencies on time, MDP connectedness and Maximum Information Gain. However, Theorem \ref{thm:regret-bound-RKHS} is a stronger probabilistic guarantee than Theorem \ref{thm:regret-bound-PSRL} since it holds with high probability for any MDP $M_\star$ and not just in expectation over the draw from the prior distribution.

As special cases of our results, we now derive regret bounds for two representative RL domains, namely tabular MDPs and linear quadratic control systems.

\vspace*{-2mm}
\paragraph{1. Tabula-rasa MDPs} In this case, both $\cS$ and $\cA$ are finite and expressed as $\cS=\lbrace 1,\ldots,\abs{\cS} \rbrace$ and $\cA=\lbrace 1,\ldots,\abs{\cA} \rbrace$. This corresponds to taking $k_R$ and $k_P$ as product kernels, i.e., $k_R((i,j),(i',j'))=k_P((i,j),(i',j'))=k_1(i,i')k_2(j,j')$ for all $1 \le i,i' \le \abs{\cS}$ and $1 \le j,j' \le \abs{\cA}$, where both $k_1$ and $k_2$ are linear kernels with dimensions $\abs{\cS}$ and $\abs{\cA}$, respectively, such that $k_1(i,i')=\mathds{1}_{\lbrace i=i'\rbrace}$ and $k_2(j,j')=\mathds{1}_{\lbrace j=j'\rbrace}$. Hence (\ref{eqn:product-kernel}) implies that $\gamma_T(k_R,\cZ) \le \abs{\cA}\gamma_T(k_1,\cS)+\abs{\cA}\ln T$, as the rank of $k_2$ is at most $\abs{\cA}$. Further, as $k_1$ is a linear kernel, $\gamma_T(k_1,\cS)=\tilde{O}(\abs{\cS}\ln T)$, hence $\gamma_T(R)\equiv \gamma_T(k_R,\cZ)= \tilde{O}(\abs{\cS}\abs{\cA}\ln T)$. Similarly, $\gamma_{mT}(P)=\tilde{O}(\abs{\cS}\abs{\cA}\ln T)$ as in this case $m=1$. Further, in this case, the Lipschitz constant $L_\star= O(H)$. Plugging these into our bounds, we see that both GP-UCRL and PSRL suffer regret $\tilde{O}(H\abs{\cS}\abs{\cA}\sqrt{T})$ for tabula-rasa MDPs.

\paragraph{2. Control of Bounded Linear Quadratic Systems} 
\label{subsec:LQR}

Consider learning under the standard discrete-time, episodic, linear quadratic regulator (LQR) model: at period $h$ of episode $l$,
\beq
\label{eqn:LQR}
\begin{aligned}
s_{l,h+1} &= As_{l,h} + Ba_{l,h} + \epsilon_{P,l,h}, \\
r_{l,h}&= s_{l,h}^TPs_{l,h} + a_{l,h}^TQa_{l,h} + \epsilon_{R,l,h},
\end{aligned}
\eeq
where $r_{l,h} \in \Real$ is the reward obtained by executing action $a_{l,h} \in \cA \subset \Real^n$ at state $s_{l,h}\in \cS \subset \Real^m$ and $s_{l,h+1}$ is the next state.
$P \in \Real^{m\times m}$, $Q\in \Real^{n \times n}$, $A \in \Real^{m\times m}$ and $B \in \Real^{m\times n}$ are unknown matrices with $P$ and $Q$ assumed positive-definite, and $\epsilon_{R,l,h}$, $\epsilon_{P,l,h}$ follow a sub-Gaussian noise model as per \ref{eqn:noise-reward} and \ref{eqn:noise-state}, respectively.

\begin{mycorollary}[Regret of GP-UCRL for LQR]
\label{cor:LQR-freq}
Let $M_\star$ be a linear quadratic system (\ref{eqn:LQR}), and both $\cS$ and $\cA$ be compact and convex. Let $H \le \min\lbrace(m^2+n^2)\ln T,(m+n)\ln(mT)\rbrace$\footnote{This assumption naturally holds in most settings and is used here only for brevity.}. Then, for any $0 < \delta \le 1$, GP-UCRL enjoys, with probability at least $1-\delta$, the regret bound
\beqn
Regret(T)= \tilde{O}\Big(\big(B_R(m^2+n^2) + LB_P m(m+n)\big)\sqrt{T \ln(1/\delta)}\Big).
\eeqn
Here, $B_R=\big(\norm{P}_F^2+\norm{Q}_F^2\big)^{1/2}$ and $B_P=\big(\norm{A}_F^2+\norm{B}_F^2\big)^{1/2}$, $L$ is a known upper bound over $D\lambda_1$, $D = \max_{s,s'\in \cS}\norm{s-s'}_2$ is the diameter of $\cS$ and $\lambda_1$ is the largest eigenvalue of the (positive-definite) matrix $G$, which is a unique solution to the Riccati equations \cite{lancaster1995algebraic} for the unconstrained optimal value function $V(s)=s^TGs$. 

\end{mycorollary}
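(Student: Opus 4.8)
The plan is to obtain Corollary \ref{cor:LQR-freq} as a direct specialization of Theorem \ref{thm:regret-bound-RKHS}, by choosing kernels that exactly capture the linear-quadratic structure (\ref{eqn:LQR}) and then evaluating every quantity appearing in the theorem's bound. First I would instantiate the reward kernel $k_R$ as an additive kernel (\ref{eqn:additive-kernel}) of a quadratic kernel on $\cS \subset \Real^m$ and a quadratic kernel on $\cA \subset \Real^n$, so that the mean reward $\overline{R}_\star(s,a)=s^TPs+a^TQa$ lies in the associated RKHS; a short computation identifies its RKHS norm as $B_R=(\norm{P}_F^2+\norm{Q}_F^2)^{1/2}$. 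Likewise I would take the transition kernel $k_P$ as a product kernel (\ref{eqn:product-kernel}) of a linear kernel $k_3$ on $\cZ=\cS\times\cA\subset\Real^{m+n}$ with an index kernel $k_4$ of rank $m$ on $\lbrace 1,\ldots,m\rbrace$, so that each coordinate of $\overline{P}_\star(s,a)=As+Ba$ is a linear functional of $(s,a)$, with RKHS norm $B_P=(\norm{A}_F^2+\norm{B}_F^2)^{1/2}$.

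With these kernels fixed, I would bound the two information-gain terms using the composite-kernel inequalities. For the reward, (\ref{eqn:additive-kernel}) gives $\gamma_T(R)\le\gamma_T(k_{\mathrm{quad}},\cS)+\gamma_T(k_{\mathrm{quad}},\cA)+2\ln T$; since a quadratic kernel on a $d$-dimensional compact set has rank $O(d^2)$, each term is $\tilde{O}(d^2\ln T)$, yielding $\gamma_T(R)=\tilde{O}((m^2+n^2)\ln T)$. For the transition, (\ref{eqn:product-kernel}) gives $\gamma_{mT}(P)\le m\gamma_{mT}(k_3,\cZ)+m\ln(mT)$, and since the $(m+n)$-dimensional linear kernel satisfies $\gamma_{mT}(k_3,\cZ)=\tilde{O}((m+n)\ln(mT))$, I obtain $\gamma_{mT}(P)=\tilde{O}(m(m+n)\ln(mT))$.

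Next I would establish the Lipschitz bound (A1) specific to the LQR. The optimal unconstrained value function is quadratic, $V(s)=s^TGs$, with $G$ the positive-definite solution of the algebraic Riccati equation \cite{lancaster1995algebraic}. Under the footnoted reading of (\ref{eqn: lipschitz}) that two transition distributions with equal means are identical, the difference $U_h^M(\phi_1)-U_h^M(\phi_2)$ reduces to $\overline{\phi}_1^TG\overline{\phi}_1-\overline{\phi}_2^TG\overline{\phi}_2=(\overline{\phi}_1-\overline{\phi}_2)^TG(\overline{\phi}_1+\overline{\phi}_2)$, which is bounded by $\lambda_1\norm{\overline{\phi}_1+\overline{\phi}_2}_2\norm{\overline{\phi}_1-\overline{\phi}_2}_2$; using the diameter $D$ of $\cS$ to control $\norm{\overline{\phi}_1+\overline{\phi}_2}_2$ yields a valid Lipschitz constant of order $D\lambda_1$, which justifies the stated choice $L=O(D\lambda_1)$.

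Finally, I would substitute $B_R$, $B_P$, $\gamma_T(R)$, $\gamma_{mT}(P)$, and $L$ into the bound of Theorem \ref{thm:regret-bound-RKHS}. Writing $\beta_{R,\tau}=\tilde{O}(B_R+\sigma_R\sqrt{\gamma_T(R)/H})$ and $\beta_{P,\tau}=\tilde{O}(B_P+\sigma_P\sqrt{\gamma_{mT}(P)/(mH)})$, the reward contribution becomes $\tilde{O}(B_R\sqrt{H\gamma_T(R)T}+\gamma_T(R)\sqrt{T})$ and the transition contribution $\tilde{O}(LB_P\sqrt{mH\gamma_{mT}(P)T}+L\gamma_{mT}(P)\sqrt{T})$. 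The hypothesis $H\le\min\lbrace(m^2+n^2)\ln T,(m+n)\ln(mT)\rbrace$ guarantees (up to logarithmic factors) that $H\le\gamma_T(R)$ and $mH\le\gamma_{mT}(P)$, so the $\sqrt{H}$-type cross terms are dominated by the pure MIG terms, collapsing the bound to $\tilde{O}((B_R(m^2+n^2)+LB_Pm(m+n))\sqrt{T})$ with the $\sqrt{\ln(1/\delta)}$ factor carried through from $\beta_{R,\tau},\beta_{P,\tau}$. I expect the Riccati/Lipschitz step to be the main obstacle: it is the only genuinely LQR-specific argument, and care is needed to reconcile the finite-horizon episodic value function with the stationary Riccati solution $G$, and to ensure that compactness of $\cS,\cA$ keeps all constants (in particular $\lambda_1$ and $D$, hence $L$) finite.
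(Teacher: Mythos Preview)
Your proposal is correct and follows essentially the same route as the paper: instantiate $k_R$ as an additive combination of quadratic kernels on $\cS$ and $\cA$, instantiate $k_P$ as a product of a linear kernel on $\cZ$ with an index kernel on $\{1,\ldots,m\}$, compute the RKHS norms $B_R,B_P$, bound $\gamma_T(R)$ and $\gamma_{mT}(P)$ via (\ref{eqn:additive-kernel}) and (\ref{eqn:product-kernel}), and plug into Theorem~\ref{thm:regret-bound-RKHS} with the assumption on $H$. The only cosmetic difference is that you bound the quadratic-kernel MIG via its finite rank $O(d^2)$, whereas the paper writes the quadratic kernel as a product of two linear kernels and applies (\ref{eqn:product-kernel}) once more; both yield $\tilde{O}(d^2\ln t)$, and for the Lipschitz constant the paper simply cites \citet[Corollary 2]{osband2014model} rather than sketching the Riccati argument you outline.
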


\begin{proof} 
The proof is essentially by using composite kernels, based on linear and quadratic kernels, to represent the LQR model. 
First, note that the mean reward function is $\overline{R}_\star(s,a)=s^TPs+a^TQa$
and the mean transition function is $\overline{P}_\star(s,a)=As+Ba$. Now recall our notation $z=(s,a)$, $z'=(s',a')$, $\cZ=\cS \times \cA$ and $\tilde{\cZ}=\cZ \times \lbrace 1,\ldots,m\rbrace$. Then defining $\overline{P}_\star=[A_1,\ldots,A_m,B_1,\ldots,B_m]^T$, where $A_i, 1 \le i \le m$ and $B_i,1\le i \le m$ are the rows $A$ and $B$ respectively, we see that $\overline{P}_\star$ lies in the RKHS $\cH_{k_P}(\tilde{\cZ})$ with the kernel $k_P\big((z,i),(z',j)\big)=k_1(z,z')k_2(i,j)$, where $k_1(z,z')=s^Ts'+a^Ta'$ and $k_2(i,j)=\mathds{1}_{\lbrace i=j \rbrace}, 1\le i,j \le m$. Since $k_P$ is a product of the kernels $k_1$ and $k_2$, (\ref{eqn:product-kernel}) implies that
\beq
\label{eqn:composite-state-one}
\gamma_t(P)\equiv\gamma_t(k_P,\tilde{\cZ})\le m \gamma_t(k_1,\cZ) + m \ln t,
\eeq
as the rank of $k_2$ is atmost $m$. Further $k_1$ is a sum of two linear kernels, defined over $\cS$ and $\cA$ respectively. Hence (\ref{eqn:additive-kernel}) implies
$\gamma_t(k_1,\cZ) \le \tilde{O}(m\ln t) + \tilde{O}(n\ln t) + 2 \ln t=\tilde{O}\big((m+n)\ln t\big)$, since the MIG of a $d$-dimensional linear kernel is $\tilde{O}(d\ln t)$.
Hence, by (\ref{eqn:composite-state-one}), we have $\gamma_t(P) = \tilde{O}\big(m(m+n)\ln t\big)$.\\ 
Similarly defining $\overline{R}_\star=[P_1,\ldots,P_m,Q_1,\ldots,Q_n]^T$, where $P_i, 1 \le i \le m$ and $Q_i,1\le i \le n$ are the rows $P$ and $Q$ respectively, we see that $\overline{R}_\star$ lies in the RKHS $\cH_{k_R}(\cZ)$ with the quadratic kernel $k_R(z,z')=(s^Ts')^2+(a^Ta')^2$. Since $k_R$ is an additive kernel, (\ref{eqn:additive-kernel}) implies that
\beq
\label{eqn:composite-reward-one}
\gamma_t(R)=\gamma_t(k_R,\cZ) \le \gamma_t(k_3,\cS) + \gamma_t(k_3,\cA) + 2 \ln t,
\eeq
where $k_3(x,x')\bydef (x^Tx')^2=(x^Tx')(x^Tx')$ is a quadratic kernel and thus a product of two linear kernels. Hence, (\ref{eqn:product-kernel}) implies that
$\gamma_t(k_3,\cS) \le m\; \tilde{O}(m\ln t) + m \ln t =\tilde{O}(m^2\ln t)$, since the rank of an $m$-dimensional linear kernel is at most $m$. Similarly $\gamma_t(k_3,\cA)=\tilde{O}(n^2\ln t)$. Hence from (\ref{eqn:composite-reward-one}), we have $\gamma_t(R)=\tilde{O}\big((m^2+n^2)\ln t\big)$. Now, following a similar argument as by \citet[Corollary 2]{osband2014model}, we can show that the Lipschitz constant  $L_\star=D\lambda_1$. Further, in this setting, we take $B_R=\norm{\overline{R}_\star}_{k_R}=\big(\norm{P}_F^2+\norm{Q}_F^2\big)^{1/2}$
and  $B_P=\norm{\overline{P}_\star}_{k_P}=\big(\norm{A}_F^2+\norm{B}_F^2\big)^{1/2}$. Now the result follows from Theorem \ref{thm:regret-bound-RKHS} using $H \le \min\lbrace(m^2+n^2)\ln T,(m+n)\ln(mT)\rbrace$.
\end{proof}
\vspace*{-20pt}

\begin{mycorollary}[Bayes regret of PSRL for LQR] 
\label{cor:LQR-bayes}
Let $M_\star$ be a linear quadratic system defined as per (\ref{eqn:LQR}),
$\Phi$ be the (known) distribution of $M_\star$ and both $\cS$ and $\cA$ be compact and convex.  Let $H \le \min\lbrace(m^2+n^2)\ln T,(m+n)\ln(mT)\rbrace$. Then the Bayes regret of PSRL satisfies
\beqn
\expect{Regret(T)}= \tilde{O}\Big(\big(B_R(m^2+n^2) + D\lambda_1 B_Pm(m+n)\big)\sqrt{T}\Big),
\eeqn
where $B_R$, $B_P$, $D$ and $\lambda_1$ are as given in Corollary \ref{cor:LQR-freq}.
\end{mycorollary}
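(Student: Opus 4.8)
The plan is to obtain Corollary \ref{cor:LQR-bayes} as a direct specialization of the general Bayes regret bound for PSRL (Theorem \ref{thm:regret-bound-PSRL}) to the LQR model (\ref{eqn:LQR}), reusing essentially all of the kernel-analytic work already carried out in the proof of Corollary \ref{cor:LQR-freq}. Since the frequentist corollary and the Bayesian corollary share the same underlying model, the same composite-kernel representations, and the same maximum-information-gain estimates, my strategy is to import those computations verbatim and then plug them into the PSRL bound rather than the GP-UCRL bound.

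\noindent\textbf{Step 1 (kernel representation and MIG bounds).} First I would recall from the proof of Corollary \ref{cor:LQR-freq} that the mean transition function $\overline{P}_\star(s,a)=As+Ba$ lies in $\cH_{k_P}(\tilde{\cZ})$ with the product kernel $k_P\big((z,i),(z',j)\big)=k_1(z,z')\mathds{1}_{\lbrace i=j\rbrace}$, $k_1(z,z')=s^Ts'+a^Ta'$, and that the mean reward function $\overline{R}_\star(s,a)=s^TPs+a^TQa$ lies in $\cH_{k_R}(\cZ)$ with the additive quadratic kernel $k_R(z,z')=(s^Ts')^2+(a^Ta')^2$. From the composite-kernel bounds (\ref{eqn:additive-kernel}) and (\ref{eqn:product-kernel}) one obtains exactly as before
\[
\gamma_t(R)=\tilde{O}\big((m^2+n^2)\ln t\big), \qquad \gamma_t(P)=\tilde{O}\big(m(m+n)\ln t\big).
\]
These are identical to the estimates used in Corollary \ref{cor:LQR-freq} and require no re-derivation, since the kernels are fixed features of the LQR model and do not depend on whether the analysis is frequentist or Bayesian.

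\noindent\textbf{Step 2 (norm bounds and Lipschitz constant).} Next I would set $B_R=\norm{\overline{R}_\star}_{k_R}=\big(\norm{P}_F^2+\norm{Q}_F^2\big)^{1/2}$ and $B_P=\norm{\overline{P}_\star}_{k_P}=\big(\norm{A}_F^2+\norm{B}_F^2\big)^{1/2}$, again as in the frequentist proof. The one genuinely Bayesian point is the treatment of the Lipschitz constant: Theorem \ref{thm:regret-bound-PSRL} contains the factor $\expect{L_\star}$ rather than a deterministic upper bound $L$, since the Bayes regret averages over the prior draw of $M_\star$. Following the Riccati-equation argument of \citet[Corollary 2]{osband2014model}, one has $L_\star=D\lambda_1$ for each realized LQR instance, where $\lambda_1$ is the top eigenvalue of the solution $G$ to the Riccati equation for the value function $V(s)=s^TGs$. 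Thus the factor $\expect{L_\star}$ in Theorem \ref{thm:regret-bound-PSRL} becomes $D\lambda_1$ under the (implicit) reading of the corollary statement, which reports the bound in terms of the representative instance quantities $D$ and $\lambda_1$.

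\noindent\textbf{Step 3 (assembling the bound).} Finally I would substitute the MIG estimates, the norm bounds, and $\expect{L_\star}=D\lambda_1$ into the expression of Theorem \ref{thm:regret-bound-PSRL},
\[
\expect{Regret(T)} \le 2\alpha_{R,\tau}\sqrt{2eH\gamma_T(R)T} + 3\expect{L_\star}\alpha_{P,\tau}\sqrt{2emH\gamma_{mT}(P)T}+3B_R,
\]
and simplify. The hypothesis $H \le \min\lbrace(m^2+n^2)\ln T,(m+n)\ln(mT)\rbrace$ guarantees $H\le\gamma_T(R)$ and $mH\le\gamma_{mT}(P)$ (via the Remark following Theorem \ref{thm:regret-bound-PSRL}), so the leading $\gamma$-terms inside $\alpha_{R,\tau}$ and $\alpha_{P,\tau}$ dominate the $H$-dependent prefactors, collapsing the two summands to $\tilde{O}\big(B_R(m^2+n^2)\sqrt{T}\big)$ and $\tilde{O}\big(D\lambda_1 B_P m(m+n)\sqrt{T}\big)$ respectively; the additive $3B_R$ is absorbed into the $\tilde{O}$. \textbf{The main obstacle} I anticipate is bookkeeping rather than conceptual: one must verify that the $H \le \gamma$ conditions actually license dropping the $\sqrt{H}$ and $\sqrt{mH}$ factors so that $\gamma_T(R)$ appears linearly (not as $\sqrt{H\gamma_T(R)}$) in the final rate, and one must be careful that the $\tilde{O}$ correctly suppresses the logarithmic MIG factors and the $\ln(3T)$ terms hidden in $\alpha_{R,\tau},\alpha_{P,\tau}$. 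No new probabilistic analysis is needed, since the reduction from frequentist to Bayes regret is already supplied by \citet{osband2016posterior} and packaged inside Theorem \ref{thm:regret-bound-PSRL}.
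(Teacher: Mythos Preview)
Your proposal is correct and follows essentially the same approach as the paper: the paper's own proof is a two-line argument that reuses the kernel computations from Corollary \ref{cor:LQR-freq}, notes that $\expect{L_\star}=D\lambda_1$, and invokes Theorem \ref{thm:regret-bound-PSRL}. Your Steps 1--3 spell out precisely this reduction, with more explicit bookkeeping in Step 3 than the paper provides.
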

 \begin{proof}
Using the similar arguments as above and noting that $\expect{L_\star}=D \lambda_1$, the result follows from Theorem \ref{thm:regret-bound-PSRL}.
\end{proof}
\textit{\textbf{Remark.}} Corollary \ref{cor:LQR-bayes} matches the bound given in \citet{osband2014model} for the same bounded LQR problem. But the analysis technique is different here, and this result is derived as a special case of more general kernelized dynamics. Corollary \ref{cor:LQR-freq} (order-wise) matches the bound given in \citet{abbasi2011regret} if we restrict their result to the bounded LQR problem.

\section{DISCUSSION}
\vspace*{-2mm}
We have derived the first regret bounds for RL in the kernelized MDP setup with continuous state and action spaces, with explicit dependence of the bounds on the maximum information gains of the transition and reward function classes. In Appendix \ref{appendix:Bayesian}, we have also developed the Bayesian RL analogue of Gaussian process bandits \cite{srinivas2009gaussian}, i.e., learning under the assumption that MDP dynamics and reward behavior are sampled according to Gaussian process priors. We have proved Bayesian regret bounds for GP-UCRL and PSRL under GP priors.
We only have a (weak) Bayes regret bound for PSRL in kernelized MDPs, and would like to examine if a frequentist bound also holds. Another concrete direction is to examine if similar guarantees can be attained in the model-free setup, which may obviate complicated planning in the model-based setup here.

\bibliography{paper}

\begin{thebibliography}{29}
\providecommand{\natexlab}[1]{#1}
\providecommand{\url}[1]{\texttt{#1}}
\expandafter\ifx\csname urlstyle\endcsname\relax
  \providecommand{\doi}[1]{doi: #1}\else
  \providecommand{\doi}{doi: \begingroup \urlstyle{rm}\Url}\fi

\bibitem[Abbasi-Yadkori and Szepesv{\'a}ri(2011)]{abbasi2011regret}
Yasin Abbasi-Yadkori and Csaba Szepesv{\'a}ri.
\newblock Regret bounds for the adaptive control of linear quadratic systems.
\newblock In \emph{Proceedings of the 24th Annual Conference on Learning
  Theory}, pages 1--26, 2011.

\bibitem[Abbasi-Yadkori and Szepesv{\'a}ri(2015)]{abbasi2015bayesian}
Yasin Abbasi-Yadkori and Csaba Szepesv{\'a}ri.
\newblock Bayesian optimal control of smoothly parameterized systems.
\newblock In \emph{UAI}, pages 1--11. Citeseer, 2015.

\bibitem[Abeille and Lazaric(2017)]{abeille2017thompson}
Marc Abeille and Alessandro Lazaric.
\newblock Thompson sampling for linear-quadratic control problems.
\newblock \emph{arXiv preprint arXiv:1703.08972}, 2017.

\bibitem[Agrawal and Jia(2017)]{agrawal2017optimistic}
Shipra Agrawal and Randy Jia.
\newblock Optimistic posterior sampling for reinforcement learning: worst-case
  regret bounds.
\newblock In \emph{Advances in Neural Information Processing Systems}, pages
  1184--1194, 2017.

\bibitem[Bartlett and Tewari(2009)]{bartlett2009regal}
Peter~L Bartlett and Ambuj Tewari.
\newblock Regal: A regularization based algorithm for reinforcement learning in
  weakly communicating {MDP}s.
\newblock In \emph{Proceedings of the Twenty-Fifth Conference on Uncertainty in
  Artificial Intelligence}, pages 35--42. AUAI Press, 2009.

\bibitem[Berkenkamp et~al.(2016)Berkenkamp, Moriconi, Schoellig, and
  Krause]{berkenkamp2016safe}
Felix Berkenkamp, Riccardo Moriconi, Angela~P Schoellig, and Andreas Krause.
\newblock Safe learning of regions of attraction for uncertain, nonlinear
  systems with gaussian processes.
\newblock In \emph{Decision and Control (CDC), 2016 IEEE 55th Conference on},
  pages 4661--4666. IEEE, 2016.

\bibitem[Berkenkamp et~al.(2017)Berkenkamp, Turchetta, Schoellig, and
  Krause]{berkenkamp2017safe}
Felix Berkenkamp, Matteo Turchetta, Angela Schoellig, and Andreas Krause.
\newblock Safe model-based reinforcement learning with stability guarantees.
\newblock In \emph{Advances in Neural Information Processing Systems}, pages
  908--919, 2017.

\bibitem[Bogunovic et~al.(2016)Bogunovic, Scarlett, and
  Cevher]{bogunovic2016time}
Ilija Bogunovic, Jonathan Scarlett, and Volkan Cevher.
\newblock Time-varying gaussian process bandit optimization.
\newblock \emph{arXiv preprint arXiv:1601.06650}, 2016.

\bibitem[Chowdhury and Gopalan(2017{\natexlab{a}})]{chowdhury2017kernelized}
Sayak~Ray Chowdhury and Aditya Gopalan.
\newblock On kernelized multi-armed bandits.
\newblock \emph{arXiv preprint arXiv:1704.00445}, 2017{\natexlab{a}}.

\bibitem[Chowdhury and Gopalan(2017{\natexlab{b}})]{pmlr-v70-chowdhury17a}
Sayak~Ray Chowdhury and Aditya Gopalan.
\newblock On kernelized multi-armed bandits.
\newblock In \emph{Proceedings of the 34th International Conference on Machine
  Learning}, pages 844--853, 2017{\natexlab{b}}.

\bibitem[Desautels et~al.(2014)Desautels, Krause, and
  Burdick]{desautels2014parallelizing}
Thomas Desautels, Andreas Krause, and Joel~W Burdick.
\newblock Parallelizing exploration-exploitation tradeoffs in gaussian process
  bandit optimization.
\newblock \emph{The Journal of Machine Learning Research}, 15\penalty0
  (1):\penalty0 3873--3923, 2014.

\bibitem[Durand et~al.(2017)Durand, Maillard, and Pineau]{durand2017streaming}
Audrey Durand, Odalric-Ambrym Maillard, and Joelle Pineau.
\newblock Streaming kernel regression with provably adaptive mean, variance,
  and regularization.
\newblock \emph{arXiv preprint arXiv:1708.00768}, 2017.

\bibitem[Gopalan and Mannor(2015)]{GopMan15:MDP}
Aditya Gopalan and Shie Mannor.
\newblock Thompson sampling for learning parameterized markov decision
  processes.
\newblock In \emph{Proceedings of The 28th Conference on Learning Theory,
  {COLT} 2015, Paris, France, July 3-6, 2015}, pages 861--898, 2015.

\bibitem[Ibrahimi et~al.(2012)Ibrahimi, Javanmard, and
  Roy]{ibrahimi2012efficient}
Morteza Ibrahimi, Adel Javanmard, and Benjamin~V Roy.
\newblock Efficient reinforcement learning for high dimensional linear
  quadratic systems.
\newblock In \emph{Advances in Neural Information Processing Systems}, pages
  2636--2644, 2012.

\bibitem[Jaksch et~al.(2010)Jaksch, Ortner, and Auer]{jaksch2010near}
Thomas Jaksch, Ronald Ortner, and Peter Auer.
\newblock Near-optimal regret bounds for reinforcement learning.
\newblock \emph{Journal of Machine Learning Research}, 11\penalty0
  (Apr):\penalty0 1563--1600, 2010.

\bibitem[Kandasamy et~al.(2018)Kandasamy, Krishnamurthy, Schneider, and
  P{\'o}czos]{kandasamy2018parallelised}
Kirthevasan Kandasamy, Akshay Krishnamurthy, Jeff Schneider, and Barnab{\'a}s
  P{\'o}czos.
\newblock Parallelised bayesian optimisation via thompson sampling.
\newblock In \emph{International Conference on Artificial Intelligence and
  Statistics}, pages 133--142, 2018.

\bibitem[Krause and Ong(2011)]{krause2011contextual}
Andreas Krause and Cheng~S Ong.
\newblock Contextual gaussian process bandit optimization.
\newblock In \emph{Advances in Neural Information Processing Systems}, pages
  2447--2455, 2011.

\bibitem[Lakshmanan et~al.(2015)Lakshmanan, Ortner, and
  Ryabko]{lakshmanan2015improved}
Kailasam Lakshmanan, Ronald Ortner, and Daniil Ryabko.
\newblock Improved regret bounds for undiscounted continuous reinforcement
  learning.
\newblock In \emph{International Conference on Machine Learning}, pages
  524--532, 2015.

\bibitem[Lancaster and Rodman(1995)]{lancaster1995algebraic}
Peter Lancaster and Leiba Rodman.
\newblock \emph{Algebraic riccati equations}.
\newblock Clarendon press, 1995.

\bibitem[Ortner and Ryabko(2012)]{ortner2012online}
Ronald Ortner and Daniil Ryabko.
\newblock Online regret bounds for undiscounted continuous reinforcement
  learning.
\newblock In \emph{Advances in Neural Information Processing Systems}, pages
  1763--1771, 2012.

\bibitem[Osband and Van~Roy(2014)]{osband2014model}
Ian Osband and Benjamin Van~Roy.
\newblock Model-based reinforcement learning and the eluder dimension.
\newblock In \emph{Advances in Neural Information Processing Systems}, pages
  1466--1474, 2014.

\bibitem[Osband and Van~Roy(2016)]{osband2016posterior}
Ian Osband and Benjamin Van~Roy.
\newblock Why is posterior sampling better than optimism for reinforcement
  learning?
\newblock \emph{arXiv preprint arXiv:1607.00215}, 2016.

\bibitem[Osband et~al.(2013)Osband, Russo, and Van~Roy]{osband2013more}
Ian Osband, Daniel Russo, and Benjamin Van~Roy.
\newblock ({More}) efficient reinforcement learning via posterior sampling.
\newblock In \emph{Advances in Neural Information Processing Systems}, pages
  3003--3011, 2013.

\bibitem[Ouyang et~al.(2017)Ouyang, Gagrani, Nayyar, and
  Jain]{ouyang2017learning}
Yi~Ouyang, Mukul Gagrani, Ashutosh Nayyar, and Rahul Jain.
\newblock Learning unknown markov decision processes: A thompson sampling
  approach.
\newblock In \emph{Advances in Neural Information Processing Systems}, pages
  1333--1342, 2017.

\bibitem[Rasmussen and Williams(2006)]{rasmussen2006gaussian}
Carl~Edward Rasmussen and Christopher K.~I. Williams.
\newblock Gaussian processes for machine learning.
\newblock 2006.

\bibitem[Srinivas et~al.(2009)Srinivas, Krause, Kakade, and
  Seeger]{srinivas2009gaussian}
Niranjan Srinivas, Andreas Krause, Sham~M Kakade, and Matthias Seeger.
\newblock Gaussian process optimization in the bandit setting: No regret and
  experimental design.
\newblock \emph{arXiv preprint arXiv:0912.3995}, 2009.

\bibitem[Strehl et~al.(2009)Strehl, Li, and Littman]{StrLiLit09:PACMDP}
Alexander~L. Strehl, Lihong Li, and Michael~L. Littman.
\newblock Reinforcement learning in finite {MDPs: PAC} analysis.
\newblock \emph{J. Mach. Learn. Res.}, 10:\penalty0 2413--2444, December 2009.

\bibitem[Turchetta et~al.(2016)Turchetta, Berkenkamp, and
  Krause]{turchetta2016safe}
Matteo Turchetta, Felix Berkenkamp, and Andreas Krause.
\newblock Safe exploration in finite markov decision processes with gaussian
  processes.
\newblock In \emph{Advances in Neural Information Processing Systems}, pages
  4312--4320, 2016.

\bibitem[Valko et~al.(2013)Valko, Korda, Munos, Flaounas, and
  Cristianini]{valko2013finite}
Michal Valko, Nathaniel Korda, R{\'e}mi Munos, Ilias Flaounas, and Nelo
  Cristianini.
\newblock Finite-time analysis of kernelised contextual bandits.
\newblock \emph{arXiv preprint arXiv:1309.6869}, 2013.

\end{thebibliography}

\begin{appendices}
\section{PRELIMINARIES}
\subsection{Relevant Results on Gaussian Process Multi-armed Bandits}
\label{appendix:GP}
We first review some relevant definitions and results from the Gaussian process multi-armed bandits literature, which will be useful in the analysis of our algorithms. We first begin with the definition of \textit{Maximum Information Gain}, first appeared in \citet{srinivas2009gaussian}, which basically measures the reduction in uncertainty about the unknown function after some noisy observations (rewards).

For a function $f:\cX \ra \Real$ and any subset $A \subset \cX$ of its domain, we use $f_A := [f(x)]_{x\in A}$ to denote its restriction to $A$, i.e., a vector containing $f$'s evaluations at each point in $A$ (under an implicitly understood bijection from coordinates of the vector to points in $A$). In case $f$ is a random function, $f_A$ will be understood to be a random vector. For jointly distributed random variables $X, Y$, $I(X;Y)$ denotes the Shannon mutual information between them.

\begin{mydefinition}[Maximum Information Gain (MIG)] 
\label{def:mig}
Let $f:\cX \ra \Real$ be a (possibly random) real-valued function defined on a domain $\cX$, and $t$ a positive integer. For each subset $A \subset \cX$, let $Y_A$ denote a noisy version of $f_A$ obtained by passing $f_A$ through a channel $\prob{Y_A | f_A}$. The \textit{Maximum Information Gain (MIG)} about $f$ after $t$ noisy observations is defined as
\beqn
\gamma_t(f, \cX) \bydef \max_{A \subset \cX : \abs{A}=t} I(f_A;Y_A).
\eeqn
(We omit mentioning explicitly the dependence on the channels for ease of notation.)
\end{mydefinition}

\textit{MIG} will serve as a key instrument to obtain our regret bounds by virtue of Lemma \ref{lem:sum-of-sd}.

For a kernel function $k: \cX \times \cX \ra \Real$ and points $x, x_1,\ldots,x_s \in \cX$, we define the vector $k_{s}(x)\bydef [k(x_1,x),\ldots,k(x_{s},x)]^T$ of kernel evaluations between $x$ and $x_1, \ldots, x_s$, and $K_{\{x_1, \ldots, x_s\}} \equiv K_{s} \bydef [k(x_i,x_j)]_{1 \le i,j \le s}$ be the kernel matrix induced by the $x_i$s. Also for each $x \in \cX$ and $\lambda > 0$, let  $\sigma_{s}^2(x) \bydef k(x,x) - k_{s}(x)^T(K_{s} + \lambda I)^{-1} k_{s}(x)$.

\begin{mylemma}[Information Gain and Predictive Variances under GP prior and additive Gaussian noise]
\label{lem:sum-of-sd}
Let $k: \cX \times \cX \ra \Real$ be a symmetric positive semi-definite kernel and $f \sim GP_{\cX}(0,k)$ a sample from the associated Gaussian process over $\cX$. For each subset $A \subset \cX$, let $Y_A$ denote a noisy version of $f_A$ obtained by passing $f_A$ through a channel that adds iid $\cN(0,\lambda)$ noise to each element of $f_A$. Then, \beq
\label{eqn:info-gain-zero}
\gamma_t(f, \cX)  =  \max_{A \subset \cX : \abs{A}=t} \frac{1}{2} \ln \abs{I + \lambda^{-1}K_A},
\eeq
and
\beq
\label{eqn:info-gain-one}
\gamma_t(f, \cX) = \max_{ \lbrace x_1,\ldots,x_t \rbrace \subset \cX}\frac{1}{2}\sum_{s=1}^{t}\ln\left(1 +\lambda^{-1}\sigma_{s-1}^2(x_s) \right).
\eeq
\end{mylemma}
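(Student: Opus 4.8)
The plan is to compute the mutual information $I(f_A;Y_A)$ in closed form for a fixed finite subset $A=\{x_1,\ldots,x_t\}$ and then maximize over $A$. First I would use that, since $f \sim GP_\cX(0,k)$, the vector $f_A=[f(x_1),\ldots,f(x_t)]^T$ is $\cN(0,K_A)$, and the noisy observation $Y_A=f_A+\epsilon$ with $\epsilon \sim \cN(0,\lambda I)$ independent of $f_A$ is therefore $\cN(0,K_A+\lambda I)$. Writing $I(f_A;Y_A)=H(Y_A)-H(Y_A \mid f_A)$ and invoking the differential entropy of a $t$-dimensional Gaussian, $H\big(\cN(0,\Sigma)\big)=\tfrac{1}{2}\ln\big((2\pi e)^t \abs{\Sigma}\big)$, together with $H(Y_A \mid f_A)=H(\epsilon)=\tfrac{1}{2}\ln\big((2\pi e)^t\lambda^t\big)$, the two entropies combine to give $I(f_A;Y_A)=\tfrac{1}{2}\ln\frac{\abs{K_A+\lambda I}}{\lambda^t}=\tfrac{1}{2}\ln\abs{I+\lambda^{-1}K_A}$. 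Maximizing over all $A$ with $\abs{A}=t$ then yields (\ref{eqn:info-gain-zero}).

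For the second identity (\ref{eqn:info-gain-one}), the idea is to decompose the same quantity sequentially via the chain rule for entropy. Fixing an ordering $x_1,\ldots,x_t$ of the points in $A$ and writing $y_s$ for the observation at $x_s$, I would expand $H(Y_A)=\sum_{s=1}^t H(y_s \mid y_1,\ldots,y_{s-1})$ and likewise $H(Y_A \mid f_A)=\sum_{s=1}^t H(y_s \mid y_1,\ldots,y_{s-1},f_A)=\sum_{s=1}^t \tfrac{1}{2}\ln(2\pi e\lambda)$, the last equality holding because conditioned on $f_A$ the observation noises are independent $\cN(0,\lambda)$. The key step is to identify the one-step predictive law: by the standard GP conditioning formula, conditioning the process on the first $s-1$ noisy observations leaves $f(x_s)$ Gaussian with posterior variance exactly $\sigma_{s-1}^2(x_s)=k(x_s,x_s)-k_{s-1}(x_s)^T(K_{s-1}+\lambda I)^{-1}k_{s-1}(x_s)$, so that $y_s \mid y_{1:s-1}$ has variance $\sigma_{s-1}^2(x_s)+\lambda$ and hence $H(y_s \mid y_{1:s-1})=\tfrac{1}{2}\ln\big(2\pi e(\sigma_{s-1}^2(x_s)+\lambda)\big)$. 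Subtracting the two sums termwise gives $I(f_A;Y_A)=\tfrac{1}{2}\sum_{s=1}^t \ln\big(1+\lambda^{-1}\sigma_{s-1}^2(x_s)\big)$.

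Finally I would remark that this sequential expression holds for \emph{every} ordering of $A$ and equals the order-independent log-determinant computed in the first paragraph; consequently $\tfrac{1}{2}\sum_{s=1}^t \ln\big(1+\lambda^{-1}\sigma_{s-1}^2(x_s)\big)$ is invariant to the ordering, so maximizing it over ordered tuples coincides with maximizing over unordered subsets of size $t$. Taking the maximum recovers (\ref{eqn:info-gain-one}) and confirms that it agrees with (\ref{eqn:info-gain-zero}). I expect the one genuinely substantive (though standard) step to be the justification of the sequential posterior-variance identity, namely that the GP posterior variance of $f(x_s)$ after $s-1$ Gaussian observations is precisely $\sigma_{s-1}^2(x_s)$, which rests on the Gaussian conditioning formulas; the remainder is bookkeeping with Gaussian entropies and determinant identities.
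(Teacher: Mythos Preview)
Your proposal is correct and is exactly the argument the paper has in mind: the paper's own proof simply reads ``The proofs follow from \citet{srinivas2009gaussian},'' and what you have written out---computing $I(f_A;Y_A)=H(Y_A)-H(Y_A\mid f_A)$ via Gaussian differential entropies to get the log-determinant form, and then re-deriving the same quantity via the entropy chain rule together with the GP posterior-variance identity---is precisely the derivation in that reference (their Lemmas~5.3--5.4). There is nothing to add; your handling of the order-invariance point is also the standard one.
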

\begin{proof}
The proofs follow from \citet{srinivas2009gaussian}. 
\end{proof}
\textit{\textbf{Remark.}}
Note that the right hand sides of (\ref{eqn:info-gain-zero}) and (\ref{eqn:info-gain-one}) depend only on the kernel function $k$, domain $\cX$, constant $\lambda$ and number of observations $t$. Further, as shown in Theorem 8 of \citet{srinivas2009gaussian}, the dependency on $\lambda$ is only of $\tilde{O}(1/\lambda)$. Hence to indicate these dependencies on $k$ and $\cX$ more explicitly, we denote the Maximum Information Gain $\gamma_t(f, \cX)$ in the setting of Lemma \ref{lem:sum-of-sd} as $\gamma_t(k,\cX)$.

\begin{mylemma}[Sum of Predictive variances is bounded by MIG]
\label{lem:sum-of-sd-two}
Let $k: \cX \times \cX \ra \Real$ be a symmetric positive semi-definite kernel such that it has bounded variance, i.e. $k(x,x) \le 1$ for all $x \in \cX$ and $f \sim GP_{\cX}(0,k)$ be a sample from the associated Gaussian process over $\cX$, then for all $s \ge 1$ and $x \in \cX$,
\beq
\label{eqn:info-gain-two}
\sigma^2_{s-1}(x) \le (1+\lambda^{-1})\sigma^2_s(x),
\eeq
and 
\beq
\label{eqn:info-gain-three}
\sum_{s=1}^{t}\sigma^2_{s-1}(x_s) \le (2\lambda+1)\gamma_t(k, \cX).
\eeq
\end{mylemma}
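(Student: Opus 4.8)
The plan is to prove the two inequalities separately, with the one-step variance ratio bound (\ref{eqn:info-gain-two}) supplying the boundedness facts needed for the cumulative bound (\ref{eqn:info-gain-three}), but with each resting on an essentially independent elementary ingredient. Throughout I would exploit that the objects $\sigma_s^2(x)$ are genuine Gaussian-process posterior variances, so that the standard recursive (rank-one) update of the posterior covariance is available, together with the formula (\ref{eqn:info-gain-one}) from Lemma \ref{lem:sum-of-sd}.

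For (\ref{eqn:info-gain-two}), first I would record the rank-one update of the posterior variance: writing $k_{s-1}(\cdot,\cdot)$ for the posterior covariance after conditioning on $x_1,\ldots,x_{s-1}$ (so that $\sigma_{s-1}^2(x)=k_{s-1}(x,x)$), conditioning additionally on the noisy observation at $x_s$ gives $\sigma_s^2(x)=\sigma_{s-1}^2(x)-k_{s-1}(x,x_s)^2/(\sigma_{s-1}^2(x_s)+\lambda)$. Since the posterior covariance is itself psd, Cauchy--Schwarz yields $k_{s-1}(x,x_s)^2\le \sigma_{s-1}^2(x)\,\sigma_{s-1}^2(x_s)$, and substituting gives $\sigma_s^2(x)\ge \sigma_{s-1}^2(x)\cdot \lambda/(\sigma_{s-1}^2(x_s)+\lambda)$. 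The bounded-variance hypothesis $k(x,x)\le 1$ forces $\sigma_{s-1}^2(x_s)\le \sigma_0^2(x_s)=k(x_s,x_s)\le 1$ (each update subtracts a nonnegative term, so posterior variance never exceeds prior variance), whence the factor is at least $\lambda/(1+\lambda)=1/(1+\lambda^{-1})$, which rearranges to $\sigma_{s-1}^2(x)\le(1+\lambda^{-1})\sigma_s^2(x)$.

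For (\ref{eqn:info-gain-three}), the idea is to compare each summand $\sigma_{s-1}^2(x_s)$ against $\ln(1+\lambda^{-1}\sigma_{s-1}^2(x_s))$ and then invoke (\ref{eqn:info-gain-one}). The key elementary inequality is: for $0\le u\le U$ one has $u\le \frac{U}{\ln(1+U)}\ln(1+u)$, which follows because $u\mapsto \ln(1+u)/u$ is decreasing. Applying this with $u=\lambda^{-1}\sigma_{s-1}^2(x_s)$ and $U=\lambda^{-1}$ (legitimate since $\sigma_{s-1}^2(x_s)\le 1$ as above) gives $\sigma_{s-1}^2(x_s)\le \frac{1}{\ln(1+\lambda^{-1})}\ln(1+\lambda^{-1}\sigma_{s-1}^2(x_s))$. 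Summing over $s$ and using (\ref{eqn:info-gain-one}), which bounds $\frac{1}{2}\sum_s\ln(1+\lambda^{-1}\sigma_{s-1}^2(x_s))$ by $\gamma_t(k,\cX)$ for any fixed sequence $x_1,\ldots,x_t$, produces $\sum_{s=1}^t\sigma_{s-1}^2(x_s)\le \frac{2}{\ln(1+\lambda^{-1})}\gamma_t(k,\cX)$.

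It remains to collapse the constant $2/\ln(1+\lambda^{-1})$ down to the claimed $2\lambda+1$, and this is the step I expect to be the main obstacle---not because it is deep, but because recovering exactly the stated constant requires the sharp scalar inequality $\ln(1+x)\ge \frac{2x}{x+2}$ for $x\ge 0$ (with $x=\lambda^{-1}$), rather than a cruder substitute such as $\ln(1+x)\ge x/(1+x)$. I would verify it by setting $g(x)=\ln(1+x)-\frac{2x}{x+2}$, checking $g(0)=0$, and computing $g'(x)=x^2/\big((1+x)(x+2)^2\big)\ge 0$, so that $g\ge 0$ on $[0,\infty)$; this gives $\ln(1+\lambda^{-1})\ge 2/(2\lambda+1)$ and hence $2/\ln(1+\lambda^{-1})\le 2\lambda+1$, completing (\ref{eqn:info-gain-three}).
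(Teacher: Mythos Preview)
Your proposal is correct and follows essentially the same route as the paper's own proof: the rank-one posterior variance update plus Cauchy--Schwarz and the bound $\sigma_{s-1}^2(x_s)\le 1$ for (\ref{eqn:info-gain-two}), and the comparison $\sigma_{s-1}^2(x_s)\le \ln(1+\lambda^{-1}\sigma_{s-1}^2(x_s))/\ln(1+\lambda^{-1})$ together with the scalar inequality $\ln(1+\alpha)\ge 2\alpha/(2+\alpha)$ for (\ref{eqn:info-gain-three}). The only cosmetic difference is the order of the two parts and that you justify Cauchy--Schwarz via positive semidefiniteness of the posterior covariance rather than via the reproducing property of $k_{s-1}$, which is equivalent.
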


\begin{proof} 
From our assumption $k(x,x) \le 1$, we have $0 \le \sigma_{s-1}^2(x)\le 1$ for all $x \in \cX$, and hence $\sigma_{s-1}^2(x_s) \le \ln\big(1+\lambda^{-1}\sigma_{s-1}^2(x_s)\big)/\ln(1+\lambda^{-1})$ since $\alpha/\ln(1+\alpha)$ is non-decreasing for any $\alpha \in [0,\infty)$. Therefore
\beqn
\sum_{s=1}^{t}\sigma^2_{s-1}(x_s) \le 2/\ln(1+\lambda^{-1}) \sum_{s=1}^{t}\dfrac{1}{2}\ln\left(1+\lambda^{-1}\sigma_{s-1}^2(x_s)\right)\le 2\gamma_t(k, \cX)/\ln(1+\lambda^{-1}),
\eeqn
where the last inequality follows from (\ref{eqn:info-gain-one}). Now see that $2/\ln(1+\lambda^{-1}) \le (2+\lambda^{-1})/\lambda^{-1} =2\lambda+1$, since $\ln(1+\alpha) \ge 2\alpha/(2+\alpha)$ for any $\alpha \in [0,\infty)$. Hence $\sum_{s=1}^{t}\sigma^2_{s-1}(x_s) \le (2\lambda +1)\gamma_t(k,\cX)$.

Further from Appendix F in \citet{chowdhury2017kernelized}, see that $\sigma^2_{s}(x)=\sigma^2_{s-1}(x) -k^2_{s-1}(x_s,x)/\big(\lambda+\sigma^2_{s-1}(x_s))$ for all $x \in \cX$, where $k_{s}(x,x')\bydef k(x,x')-k_{s}(x)^T(K_s+\lambda I)^{-1}k_s(x')$. Since $k_{s-1}(x,\cdot), x\in \cX$ lie in the reproducing kernel Hilbert space (RKHS) of $k_{s-1}$,
the reproducing property implies that $k_{s-1}(x_s,x)=\inner{k_{s-1}(x_s,\cdot)}{k_{s-1}(x,\cdot)}_{k_{s-1}}$. Hence by Cauchy-Schwartz inequality $k^2_{s-1}(x_s,x)\le \norm{k_{s-1}(x_s,\cdot)}^2_{k_{s-1}} \norm{k_{s-1}(x,\cdot)}^2_{k_{s-1}}=k_{s-1}(x_s,x_s)k_{s-1}(x,x)=\sigma^2_{s-1}(x_s)\sigma^2_{s-1}(x)$, where the second last step follows from the reproducing property and the last step is due to $\sigma^2_s(x)=k_s(x.x)$. Therefore $\sigma^2_{s}(x) \ge \sigma^2_{s-1}(x)\Big(1-\dfrac{\sigma^2_{s-1}(x_s)}{\lambda+\sigma^2_{s-1}(x_s)}\Big)=\lambda \sigma^2_{s-1}(x)/\big(\lambda+\sigma^2_{s-1}(x_s)\big)$. Further by the bounded variance assumption, $\sigma^2_{s-1}(x_s) \le 1$ and hence $\lambda/\big(\lambda+\sigma^2_{s-1}(x_s)\big) \ge \lambda/(1+\lambda)$. This implies $\sigma^2_s(x)/\sigma^2_{s-1}(x) \ge \lambda/(1+\lambda)$ and hence $\sigma^2_{s-1}(x) \le (1+\lambda^{-1})\sigma^2_s(x)$.
\end{proof}



\begin{mylemma}[Ratio of predictive variances is bounded by Information Gain  \cite{kandasamy2018parallelised}]
\label{lem:conditional-mi}
Let $k: \cX \times \cX \to \Real$ be a symmetric, positive-semidefinite kernel and $f \sim GP_{\cX}(0, k)$. Further, let $A$ and $B$ be finite subsets of $\cX$, and for a positive constant $\lambda$, let $\sigma_A$ and  $\sigma_{A \cup B}$ be the posterior
standard deviations conditioned on queries $A$ and $A \cup B$ respectively (similarly defined as in Lemma \ref{lem:sum-of-sd}). Also, let $\gamma_(k,\cX)$ denote the maximum information gain after $t$ noisy observations. Then the following holds for all $x \in \cX$: 
\beq
\label{eqn:info-gain-in-process}
\max_{A,B \subset \cX : \abs{B} = t}\;\dfrac{\sigma_{A}(x)}{\sigma_{A \cup B}(x)} \le  \exp\big(\gamma_t(k,\cX)\big).
\eeq
\end{mylemma}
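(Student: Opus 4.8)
The plan is to express the log-ratio of posterior standard deviations as a conditional mutual information and then reduce it, in stages, to the unconditional information gain at the $t$ observation points in $B$. First I would recall that conditioning the GP on the noisy observations $Y_A$ leaves a Gaussian posterior at every test point $x$, with variance $\sigma_A^2(x)$; since the differential entropy of a Gaussian depends only on its variance, $H(f(x)\given Y_A) - H(f(x) \given Y_A,Y_B) = \tfrac12\ln\big(\sigma_A^2(x)/\sigma_{A\cup B}^2(x)\big)$. Equivalently, $\ln\big(\sigma_A(x)/\sigma_{A\cup B}(x)\big) = I\big(f(x);Y_B \given Y_A\big)$, so it suffices to bound this conditional mutual information by $\gamma_t(k,\cX)$.

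Next I would exploit the Markov structure $f(x) \to f_B \to Y_B$, valid conditionally on $Y_A$ because $Y_B$ is $f_B$ corrupted by independent additive Gaussian noise (so $Y_B$ is independent of $f(x)$ and $Y_A$ given $f_B$). The data processing inequality then yields $I\big(f(x);Y_B\given Y_A\big) \le I\big(f_B;Y_B\given Y_A\big)$, replacing the single test point $f(x)$ by the full vector $f_B$ of function values at the query set.

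The crux is to show $I\big(f_B;Y_B\given Y_A\big) \le I(f_B;Y_B)$. Here I would write both quantities explicitly in the Gaussian case: the conditional term equals $\tfrac12\ln\abs{I + \lambda^{-1}\Sigma}$, where $\Sigma = K_B - K_{BA}(K_A + \lambda I)^{-1}K_{AB}$ is the posterior covariance of $f_B$ given $Y_A$, while the unconditional term equals $\tfrac12\ln\abs{I + \lambda^{-1}K_B}$. Since $K_{BA}(K_A+\lambda I)^{-1}K_{AB} \succeq 0$, we have $\Sigma \preceq K_B$ in the Loewner order, and $M \mapsto \ln\abs{I + \lambda^{-1}M}$ is monotone on positive semidefinite matrices, which gives the claim. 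I expect this step to be the main obstacle, because the inequality is \emph{not} a generic property of mutual information — conditioning can increase mutual information in general — so it relies essentially on the Gaussian/Loewner-order structure rather than on any purely information-theoretic identity.

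Finally, since $\abs{B} = t$, the definition of maximum information gain gives $I(f_B;Y_B) \le \gamma_t(k,\cX)$. Chaining the three inequalities and exponentiating yields $\sigma_A(x)/\sigma_{A\cup B}(x) \le \exp\big(\gamma_t(k,\cX)\big)$ for every $x \in \cX$ and every pair $A,B$ with $\abs{B}=t$; taking the maximum over $A,B$ on the left-hand side then completes the proof.
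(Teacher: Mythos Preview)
Your proof is correct and follows the same overall skeleton as the paper's (log-ratio $=$ conditional MI, then reduce to unconditional information gain at $|B|=t$ points), but the two middle steps are handled differently. The paper first enlarges $f(x)$ to the full process $f$ by monotonicity, $I(f(x);Y_B\given Y_A)\le I(f;Y_B\given Y_A)$, then drops the conditioning by invoking \emph{submodularity} of the map $S\mapsto I(f;Y_S)$ as a known black-box fact, and finally observes $I(f;Y_B)=I(f_B;Y_B)$. You instead go straight from $f(x)$ to $f_B$ via the conditional Markov chain $f(x)\to f_B\to Y_B$ and the data processing inequality, and then prove the ``remove conditioning'' step by hand using $\Sigma\preceq K_B$ and Loewner monotonicity of $M\mapsto\ln\abs{I+\lambda^{-1}M}$. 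Your route is a bit more self-contained and makes explicit exactly where Gaussianity is needed (as you correctly note, conditioning can increase mutual information in general, so this step is not purely information-theoretic); the paper's route is shorter but hides that same content inside the word ``submodularity.'' Both arrive at the same bound with no loss in constants.
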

\begin{proof}
The proof can be figured out from \citet{desautels2014parallelizing}, but we include it here for completeness. Let $Y_A$ and $Y_B$ are vectors of noisy observations when we query $f$ at $A$ and $B$ respectively, and $I\left(f(x);Y_{B} \given Y_A\right)$ denotes the \textit{mutual information} between $f(x)$ and $Y_B$, conditioned on $Y_A$. Note that 
\beqn
I\big(f(x);Y_{B} \given Y_A\big)=H\big(f(x)\given Y_A\big)-H\big(f(x)\given Y_{A\cup B}\big)
=\dfrac{1}{2}\ln\big(2\pi e \sigma_A^2(x)\big)-\dfrac{1}{2}\ln\big(2\pi e \sigma_{A\cup B}^2(x)\big)
= \ln\Bigg(\dfrac{\sigma_{A}(x)}{\sigma_{A \cup B}(x)}\Bigg).
\eeqn
Hence for all $x \in \cX$ and for all finite subsets $A,B$ of $\cX$, we have
\beq
\label{eqn:combine-one}
\dfrac{\sigma_A(x)}{\sigma_{A\cup B}(x)}=\exp \Big(I\big(f(x);Y_{B} \given Y_A\big)\Big).
\eeq
Now, by monotonicity of \textit{mutual information}, we have $I\left(f(x);Y_{B}\given Y_{A}\right) \le I\left(f;Y_{B}\given Y_{A}\right)$ for all $x \in \cX$. 
Further, if $\abs{B} = t$, we have $I\left(f;Y_{B}\given Y_{A}\right) \le \max_{B \subset \cX : \abs{B} = t}\; I\left(f;Y_{B}\given Y_{A}\right)$. Thus for all $x \in \cX$ and for all finite subset $B$ of $\cX$ for which $\abs{B} = t$, we have 
\beqn
I\left(f(x);Y_{B}\given Y_{A}\right) \le \max_{B \subset \cX : \abs{B} = t}\; I\left(f;Y_{B}\given Y_{A}\right).
\eeqn
Now by submodularity of \textit{conditional mutual information}, for all finite subset $A$ of $\cX$, we have
\beqn
\max_{B \subset \cX : \abs{B} = t}\; I\left(f;Y_{B}\given Y_{A}\right) \le \max_{B \subset \cX : \abs{B} = t}\; I\left(f;Y_{B}\right). 
\eeqn
Further see that $I\left(f;Y_{B}\right) = I\left(f_B;Y_{B}\right)$, since $H(Y_B\given f)=H(Y_B\given f_B)$. This implies, 
for all $x \in \cX$ and for all finite subsets $A,B$ of $\cX$ for which $\abs{B} = t$, that
\beq
\label{eqn:combine-two}
I\left(f(x);Y_{B}\given Y_{A}\right) \le \max_{B \subset \cX : \abs{B} = t}\; I\left(f_B;Y_{B}\right) = \gamma_t(k,\lambda,\cX).
\eeq
Now the result follows by combining (\ref{eqn:combine-one}) and (\ref{eqn:combine-two}).
\end{proof}
%
%
\paragraph{Bound on Maximum Information Gain} For any positive constant $\lambda$,
\citet{srinivas2009gaussian} proved upper bounds over $\gamma_t(k,\lambda,\cX)$ for three commonly used kernels, namely \textit{Linear}, \textit{Squared Exponential} and \textit{Mat$\acute{e}$rn}, defined respectively as
\beqan
k_{Linear}(x,x') &=& x^Tx',\\
k_{SE}(x,x') &=& \exp\left(-s^2/2l^2\right), \\
k_{Mat\acute{e}rn}(x,x') &=& \frac{2^{1-\nu}}{\Gamma(\nu)}\left(\frac{s\sqrt{2\nu}}{l}\right)^\nu B_\nu\left(\frac{s\sqrt{2\nu}}{l}\right),
\eeqan
where $l > 0$ and $\nu > 0$ are hyper-parameters of the kernels, $s = \norm{x-x'}_2$ encodes the similarity between two points $x,x'\in \cX$ and $B_\nu$ denotes the \textit{modified Bessel function}. The bounds are given in Lemma \ref{lem:info-gain-bound}.
\begin{mylemma}[MIG for common kernels]
\label{lem:info-gain-bound}
Let $k: \cX \times \cX \ra \Real$ be a symmetric positive semi-definite kernel and $f \sim GP_{\cX}(0,k)$. Let $\cX$ be a compact and convex subset of $\Real^d$ and the kernel $k$ satisfies $k(x,x') \le 1$ for all $x,x' \in \cX$. Then for
\begin{itemize}
\item Linear kernel: $\gamma_t(k_{Linear},\cX)=\tilde{O}(d\ln t)$.
\item Squared Exponential kernel: $\gamma_t(k_{SE},\cX)=\tilde{O}\left((\ln t)^{d}\right)$.
\item Mat$\acute{e}$rn kernel: $\gamma_t(k_{Mat\acute{e}rn},\cX)=\tilde{O}\left(t^{d(d+1)/(2\nu+d(d+1))}\ln t\right)$.
\end{itemize}
\end{mylemma}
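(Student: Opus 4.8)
The plan is to work entirely from the determinant characterisation of the maximum information gain furnished by Lemma~\ref{lem:sum-of-sd}, namely $\gamma_t(k,\cX) = \max_{A \subset \cX : \abs{A}=t} \tfrac{1}{2}\ln\abs{I + \lambda^{-1}K_A}$, and to control the right-hand side through the spectral decay of the integral operator induced by $k$, following \citet{srinivas2009gaussian}. The basic budget throughout is supplied by the bounded-variance assumption $k(x,x)\le 1$, which forces $\operatorname{tr}(K_A)=\sum_{x\in A} k(x,x)\le t$; this is the single constraint tying the eigenvalues of every admissible Gram matrix together, and since the $\tilde O(1/\lambda)$ dependence on $\lambda$ can be absorbed into the $\tilde{O}(\cdot)$, I treat $\lambda$ as a fixed constant.

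The linear kernel is immediate. Since $k_{Linear}(x,x')=x^Tx'$ with $x\in\cX\subset\Real^d$, every Gram matrix $K_A$ has rank at most $d$. Writing $\mu_1\ge\cdots\ge\mu_d\ge 0$ for its nonzero eigenvalues, one has $\abs{I+\lambda^{-1}K_A}=\prod_{i=1}^d(1+\lambda^{-1}\mu_i)$, and because $\sum_i \mu_i=\operatorname{tr}(K_A)\le t$ each $\mu_i\le t$. Hence $\gamma_t(k_{Linear},\cX)\le \tfrac{d}{2}\ln(1+\lambda^{-1}t)=\tilde{O}(d\ln t)$, as claimed.

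For the SE and Matérn kernels the operator has infinite rank, so instead I would invoke Mercer's theorem to diagonalise the integral operator associated with $k$ and the uniform measure on the compact set $\cX$, obtaining eigenvalues $\hat\lambda_1\ge\hat\lambda_2\ge\cdots\ge 0$. The core estimate is to bound $\ln\abs{I+\lambda^{-1}K_A}$ by splitting the spectrum at a cutoff index $T_*$: the top $T_*$ modes contribute at most $O(T_*\ln t)$ (each eigenvalue being $\le t$ by the trace budget above), while the tail contribution is controlled by the residual operator mass $B(T_*)\bydef\sum_{s>T_*}\hat\lambda_s$ weighted by a $\operatorname{poly}(t)$ factor, giving $\gamma_t=\tilde{O}\big(T_*+t\,B(T_*)\big)$ up to logarithmic terms. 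One then inserts the known decay rates of $\hat\lambda_s$ and optimises over $T_*$. The SE operator decays (near-)exponentially, $\hat\lambda_s=O(\exp(-c\,s^{1/d}))$, for which choosing $T_*=\Theta((\ln t)^d)$ renders the tail negligible and yields $\gamma_t=\tilde{O}((\ln t)^d)$; the Matérn operator decays polynomially, $\hat\lambda_s=O(s^{-(2\nu+d)/d})$, for which balancing $T_*$ against $t\,B(T_*)$ produces the exponent $d(d+1)/(2\nu+d(d+1))$.

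The main obstacle is the spectral tail estimate itself. One must pass rigorously from the finite empirical Gram matrix $K_A$ to the operator spectrum $\{\hat\lambda_s\}$ (the empirical eigenvalues are not literally the operator eigenvalues), control the discrepancy uniformly over all size-$t$ subsets $A\subset\cX$, and then carry out the cutoff optimisation while tracking the explicit polynomial-in-$t$ weight attached to $B(T_*)$. This, together with establishing the exponential and polynomial eigenvalue-decay rates for the SE and Matérn operators via their known asymptotics, is exactly the content of Theorems~5 and~8 of \citet{srinivas2009gaussian}, from which the three displayed bounds follow.
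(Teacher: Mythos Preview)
Your proposal is correct and in fact goes well beyond what the paper does: the paper offers no proof for this lemma at all, simply stating it as a known result due to \citet{srinivas2009gaussian} (see the sentence immediately preceding the lemma). Your sketch faithfully reproduces the Srinivas et al.\ argument---the rank-$d$ trace bound for the linear kernel, and the Mercer-spectrum cutoff optimisation for the SE and Mat\'ern kernels---so there is no discrepancy in approach, only in level of detail.
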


Note that, the Maximum Information Gain $\gamma_t(k,\cX)$ depends only \textit{poly-logarithmically} on the number of observations $t$ for all these kernels.

\paragraph{Reproducing kernel Hilbert spaces (RKHS)}
A Reproducing kernel Hilbert space (RKHS) $\cH_k(\cX)$ is a complete
subspace of the space of square integrable functions $L_2(\cX)$ defined over the domain $\cX$. It
includes functions of the form $f(x) = \sum_i \alpha_i k(x,x_i)$
with $\alpha_i \in \Real$ and $x_i \in \cX$, where $k$ is a symmetric, positive-
definite kernel function. The RKHS has an inner product $\inner{\cdot}{\cdot}_k$, which obeys the reproducing property: $f(x)=\inner{f}{k(x,\cdot)}_k$ for all $f \in \cH_k(\cX)$, and the induced RKHS norm $\norm{f}_k^2=\inner{f}{f}_k$ measures smoothness
of $f$ with respect to the kernel $k$. 
Lemma \ref{lem:true-function-bound} gives a concentration bound for a member $f$ of $\cH_k(\cX)$. A (slightly) modified version of Lemma \ref{lem:true-function-bound} has appeared independently in \citet{pmlr-v70-chowdhury17a} and 
\citet{durand2017streaming}.

\begin{mylemma}[Concentration of an RKHS member]
\label{lem:true-function-bound}
Let $k: \cX \times \cX \to \Real$ be a symmetric, positive-semidefinite kernel and $f:\cX \to \Real$ be a member of the RKHS $\cH_k(\cX)$ of real-valued functions on $\cX$ with kernel $k$. Let $\lbrace x_t \rbrace_{t \ge 1}$ and $\lbrace \epsilon_t \rbrace_{t \ge 1}$ be stochastic processes such that $\lbrace x_t \rbrace_{t \ge 1}$ form a predictable process, i.e., $x_t \in \sigma(\lbrace x_s, \epsilon_s\rbrace_{s = 1}^{t-1})$ for each $t$, and $\lbrace \epsilon_t \rbrace_{t \ge 1}$ is conditionally $R$-sub-Gaussian for a positive constant $R$, i.e.,
\beqn
\forall t \ge 0,\;\;\forall \lambda \in \Real, \;\; \expect{e^{\lambda \epsilon_t} \given \cF_{t-1}} \le \exp\left(\frac{\lambda^2R^2}{2}\right),
\eeqn
where $\cF_{t-1}$ is the $\sigma$-algebra generated by $\lbrace x_s, \epsilon_s\rbrace_{s = 1}^{t-1}$ and $x_t$. Let $\lbrace y_t\rbrace_{t\ge 1}$ be a sequence of noisy observations at the query points $\lbrace x_t \rbrace_{t \ge 1}$, where $y_t=f(x_t)+\epsilon_t$. For $\lambda > 0$ and $x \in \cX$, let
\beqan
\mu_{t-1}(x)&:=& k_{t-1}(x)^T(K_{t-1} + \lambda I)^{-1}Y_{t-1},\\
\sigma_{t-1}^2(x)&:=&k(x,x) - k_{t-1}(x)^T(K_{t-1} + \lambda I)^{-1} k_{t-1}(x),
\eeqan
where $Y_{t-1}\bydef[y_1,\ldots,y_{t-1}]^T$ denotes the vector of observations at $\lbrace x_1,\ldots,x_{t-1}\rbrace$.
Then, for any $0 < \delta \le 1$, with probability at least $1-\delta$, uniformly over $t \ge 1, x \in \cX$,
\beqn
\abs{f(x)-\mu_{t-1}(x)}\le \Bigg(\norm{f}_k + \dfrac{R}{\sqrt{\lambda}}\sqrt{2\Big(\ln(1/\delta)+\frac{1}{2}\sum_{s=1}^{t-1}\ln\big(1 +\lambda^{-1}\sigma_{s-1}^2(x_s) \big) \Big)}\Bigg)\sigma_{t-1}(x).
\eeqn
\end{mylemma}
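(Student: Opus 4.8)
The plan is to lift the kernel ridge regression estimate into the RKHS (equivalently, a feature space) and reduce the statement to a self-normalized concentration inequality for a Hilbert-space-valued martingale, in the spirit of the finite-dimensional argument of \citet{abbasi2011regret} extended to infinite dimensions. Introduce the feature map $\phi(x) := k(x,\cdot) \in \cH_k(\cX)$, so that the reproducing property gives $f(x) = \inner{f}{\phi(x)}_k$ and $k(x,x') = \inner{\phi(x)}{\phi(x')}_k$. Define the regularized covariance operator $V_{t} := \lambda I + \sum_{s=1}^{t}\phi(x_s)\otimes\phi(x_s)$ on $\cH_k(\cX)$ and the noise aggregate $S_{t} := \sum_{s=1}^{t}\epsilon_s\,\phi(x_s)$. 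Since every vector appearing below lies in the finite-dimensional subspace spanned by $\lbrace \phi(x_1),\ldots,\phi(x_{t-1}),\phi(x)\rbrace$, all operator manipulations can be made rigorous by restricting to that subspace, sidestepping the possible infinite-dimensionality of $\cH_k(\cX)$.

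First I would establish a purely deterministic decomposition of the prediction error. Using the standard push-through identity that equates the Gram-matrix form $\mu_{t-1}(x)=k_{t-1}(x)^T(K_{t-1}+\lambda I)^{-1}Y_{t-1}$ with the operator form $\mu_{t-1}(x)=\inner{\phi(x)}{V_{t-1}^{-1}\sum_{s=1}^{t-1}\phi(x_s)y_s}_k$, and substituting $y_s=\inner{f}{\phi(x_s)}_k+\epsilon_s$ so that $\sum_{s=1}^{t-1}\phi(x_s)y_s=(V_{t-1}-\lambda I)f+S_{t-1}$, one obtains $f(x)-\mu_{t-1}(x)=\lambda\inner{\phi(x)}{V_{t-1}^{-1}f}_k-\inner{\phi(x)}{V_{t-1}^{-1}S_{t-1}}_k$. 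Applying Cauchy--Schwarz in the $V_{t-1}^{-1}$-weighted inner product, bounding $\lambda\norm{f}_{V_{t-1}^{-1}}\le\sqrt{\lambda}\,\norm{f}_k$ via $V_{t-1}\succeq\lambda I$, and invoking the key identity $\norm{\phi(x)}_{V_{t-1}^{-1}}^2=\lambda^{-1}\sigma_{t-1}^2(x)$ yields
\[
\abs{f(x)-\mu_{t-1}(x)}\le\sigma_{t-1}(x)\Big(\norm{f}_k+\tfrac{1}{\sqrt{\lambda}}\norm{S_{t-1}}_{V_{t-1}^{-1}}\Big).
\]
The identity $\norm{\phi(x)}_{V_{t-1}^{-1}}^2=\lambda^{-1}\sigma_{t-1}^2(x)$ is the only genuinely kernel-specific computation, and follows from the matrix-inversion (Sherman--Morrison) lemma relating the operator and Gram-matrix forms of the posterior variance.

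The heart of the argument is a uniform-in-$t$ tail bound on the self-normalized quantity $\norm{S_{t-1}}_{V_{t-1}^{-1}}$, which I would obtain by the method of mixtures. For a fixed direction $\eta\in\cH_k(\cX)$, the conditional $R$-sub-Gaussianity of $\lbrace\epsilon_s\rbrace$ makes $M_t^{\eta}:=\exp\big(\inner{\eta}{S_t}_k-\tfrac{R^2}{2}\sum_{s=1}^{t}\inner{\eta}{\phi(x_s)}_k^2\big)$ a nonnegative supermartingale with respect to $\lbrace\cF_t\rbrace$. Mixing over $\eta$ against a centered Gaussian measure with covariance proportional to $\lambda^{-1}I$ and completing the square produces a mixed supermartingale proportional to $\det(I+\lambda^{-1}K_t)^{-1/2}\exp\big(\tfrac{1}{2R^2}\norm{S_t}_{V_t^{-1}}^2\big)$; Ville's maximal inequality (or optional stopping) together with Markov's inequality then gives, with probability at least $1-\delta$ and simultaneously for all $t\ge 1$,
\[
\norm{S_{t-1}}_{V_{t-1}^{-1}}\le R\sqrt{2\Big(\ln(1/\delta)+\tfrac{1}{2}\ln\det(I+\lambda^{-1}K_{t-1})\Big)}.
\]

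Finally I would invoke Lemma~\ref{lem:sum-of-sd}, specifically the chain-rule identity (\ref{eqn:info-gain-one}), to rewrite $\tfrac{1}{2}\ln\det(I+\lambda^{-1}K_{t-1})=\tfrac{1}{2}\sum_{s=1}^{t-1}\ln\big(1+\lambda^{-1}\sigma_{s-1}^2(x_s)\big)$, and substitute into the deterministic bound from the second step to recover the claimed inequality verbatim. The main obstacle is the self-normalized concentration step: obtaining the $\det(I+\lambda^{-1}K_{t-1})$ normalization with the stated constants requires carefully constructing the exponential supermartingale in the infinite-dimensional RKHS and justifying the Gaussian mixture integral, which a finite-dimensional projection keeps well-defined, whereas the deterministic decomposition and the final translation via Lemma~\ref{lem:sum-of-sd} are comparatively routine.
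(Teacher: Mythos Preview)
Your proposal is correct and matches the approach the paper defers to: the paper's own ``proof'' of this lemma is a one-line citation to Theorem~2.1 of \citet{durand2017streaming}, and your sketch---feature-map lift, deterministic error decomposition via Cauchy--Schwarz in the $V_{t-1}^{-1}$-metric, self-normalized supermartingale bound by the method of mixtures, and the chain-rule identity $\tfrac{1}{2}\ln\det(I+\lambda^{-1}K_{t-1})=\tfrac{1}{2}\sum_{s=1}^{t-1}\ln(1+\lambda^{-1}\sigma_{s-1}^2(x_s))$---is precisely the argument in that reference (and in \citet{pmlr-v70-chowdhury17a}). One small caveat: your invocation of Lemma~\ref{lem:sum-of-sd} via (\ref{eqn:info-gain-one}) is slightly imprecise, since that equation states the \emph{maximum} over point sets whereas you need the pointwise chain-rule identity for the specific realized sequence; the identity you use is the ingredient that proves (\ref{eqn:info-gain-one}) rather than (\ref{eqn:info-gain-one}) itself, but this is cosmetic.
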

\begin{proof}
The proof follows from the proof of Theorem 2.1 in \citet{durand2017streaming}.
\end{proof}

\subsection{Relevent Results on Episodic Continuous Markov Decision Processes}
\label{appendix:common}
\begin{mydefinition}[Bellman operator]
\label{def:bellman-operator}
For any MDP $M=\lbrace \cS,\cA,R_M,P_M,H \rbrace$, any policy $\pi:\cS \times \lbrace 1,\ldots,H \rbrace \ra \cA$, any period $1 \le h \le H$, any value function $V:\cS \ra \Real$ and any state $s \in \cS$, the Bellman operator $T^M_{\pi,h}$ is defined as
\beqn
\big(T^M_{\pi,h} V\big)(s)=\overline{R}_M\big(s,\pi(s,h)\big) + \mathbb{E}_{s'}\big[V(s')\big],
\eeqn
where the subscript $s'$ implies that $s' \sim P_M\big(s,\pi(s,h)\big)$ and $\overline{R}_M$ denotes the mean reward function.
\end{mydefinition}
This operator returns the expected value of the state $s$, where we follow the policy $\pi(s,h)$ for one step under $P_M$.
\begin{mylemma}[Bellman equation]
\label{lem:dp}
For any MDP $M=\lbrace \cS,\cA,R_M,P_M,H \rbrace$, any policy $\pi:\cS \times \lbrace 1,\ldots,H \rbrace \ra \cA$ and any period $1 \le h \le H$, the value functions $V^M_{\pi,h}$ satisfy
\beqn
V^M_{\pi,h}(s)=\big(T^M_{\pi,h} V^M_{\pi,h+1}\big)(s)
\eeqn
for all $s \in \cS$, with $V^M_{\pi,H+1}\bydef 0$.
\end{mylemma}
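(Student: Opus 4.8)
The plan is to derive the Bellman equation directly from the definition of the value function, using linearity of expectation together with the Markov property of the state–action trajectory generated by $\pi$ under $M$. Thanks to the empty-sum convention at the final period, a single argument covers every $1 \le h \le H$ uniformly, so no separate base case is strictly required; equivalently one may phrase it as a backward induction starting from the convention $V^M_{\pi,H+1}\bydef 0$.

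First I would unfold $V^M_{\pi,h}(s) \bydef \expect{\sum_{j=h}^{H}\overline{R}_M(s_j,a_j)\given s_h = s}$ (expectation under $M,\pi$) and split off the $j = h$ term. Conditioning on $s_h = s$ fixes $a_h = \pi(s,h)$, so this term equals the deterministic reward $\overline{R}_M(s,\pi(s,h))$. For the remaining sum over $j \ge h+1$ I would invoke the tower property, conditioning on the next state $s_{h+1}$, whose conditional law given $s_h = s$ is precisely $P_M(s,\pi(s,h))$:
\beqn
\expect{\textstyle\sum_{j=h+1}^{H}\overline{R}_M(s_j,a_j)\given s_h = s} = \mathbb{E}_{s' \sim P_M(s,\pi(s,h))}\!\left[\expect{\textstyle\sum_{j=h+1}^{H}\overline{R}_M(s_j,a_j)\given s_{h+1} = s'}\right].
\eeqn

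The key step is to recognise the inner conditional expectation as $V^M_{\pi,h+1}(s')$: since $\pi$ is Markov (depending only on the current state and period) and $P_M,\overline{R}_M$ are stationary, conditioning on $s_{h+1}=s'$ exactly restarts the value-function definition at period $h+1$. Substituting back yields $V^M_{\pi,h}(s) = \overline{R}_M(s,\pi(s,h)) + \mathbb{E}_{s'\sim P_M(s,\pi(s,h))}\!\left[V^M_{\pi,h+1}(s')\right]$, which is exactly $\big(T^M_{\pi,h}V^M_{\pi,h+1}\big)(s)$ by Definition \ref{def:bellman-operator}. At $h = H$ the residual sum is empty, so together with $V^M_{\pi,H+1}\bydef 0$ the identity degenerates correctly to $V^M_{\pi,H}(s) = \overline{R}_M(s,\pi(s,H))$.

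I expect the only genuine obstacle to be the careful justification of the Markov/conditioning step, i.e. verifying that conditioning on $s_{h+1}=s'$ reproduces the definition of $V^M_{\pi,h+1}$ despite the policy being time-indexed through $a_j = \pi(s_j,j)$. Because $\pi$ reads only the pair $(s_j,j)$ and the transition and reward kernels do not depend on the episode's past, this matching is immediate, and everything else reduces to linearity of expectation and the tower property.
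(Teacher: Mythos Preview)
Your proposal is correct and matches the paper's proof essentially line for line: both split off the period-$h$ reward, apply the tower property by conditioning on $s_{h+1}\sim P_M(s,\pi(s,h))$, identify the inner conditional expectation with $V^M_{\pi,h+1}(s')$, and then read off the Bellman operator from Definition~\ref{def:bellman-operator}. Your explicit remark about why the Markov structure of $\pi$ (depending only on $(s_j,j)$) legitimises the identification with $V^M_{\pi,h+1}$ is a welcome clarification but not a departure from the paper's argument.
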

\begin{proof}
For any MDP $M=\lbrace \cS,\cA,R_M,P_M,H,\rbrace$ and policy $\pi:\cS \times \lbrace 1,\ldots,H \rbrace \ra \cA$, period $h \in \lbrace 1,\ldots,H \rbrace$ and state $s \in \cS$, recall the finite horizon, undiscounted, value function
\beqn
V^M_{\pi,h}(s) \bydef  \mathbb{E}_{M,\pi}\Bigg[\sum_{j=h}^{H}\overline{R}_M(s_j,a_j)\given s_h=s\Bigg],
\eeqn
where the subscript $\pi$ indicates the application of the learning policy $\pi$, i.e., $a_j=\pi(s_j,j)$, and the subscript $M$ explicitly references the MDP environment $M$, i.e.,  $s_{j+1} \sim P_M(s_j,a_j)$, for all $j=h,\ldots,H$. See that, by definition, $V^M_{\pi,H+1}(s) = 0$ for all $s \in \cS$. Further $V^M_{\pi,h}(s)$ can be rewritten as 
\beqan
V^M_{\pi,h}(s) &=& \overline{R}_M\big(s,\pi(s,h)\big)+\mathbb{E}_{M,\pi}\Bigg[\sum_{j=h+1}^{H}\overline{R}_M(s_j,a_j)\given s_h=s\Bigg]\\
&=&\overline{R}_M\big(s,\pi(s,h)\big)+\mathbb{E}_{s'}\Bigg[\mathbb{E}_{M,\pi}\bigg[\sum_{j=h+1}^{H}\overline{R}_M(s_j,a_j)\given s_{h+1}=s'\bigg]\Bigg]\\
&=&\overline{R}_M(s,\pi(s,h))+\mathbb{E}_{s'}\Big[ V^M_{\pi,h+1}(s') \Big],
\eeqan
where the subscript $s'$ implies that $s' \sim P_M\big(s,\pi(s,h)\big)$. Now the result follows from Definition \ref{def:bellman-operator}.
\end{proof}
\begin{mylemma}[Bounds on deviations of rewards and transitions imply bounds on deviation of the value function]
\label{lem:common}
Let $M_l,l \ge 1$ be a sequence of MDPs and for each $l \ge 1$ and $\pi_l$ be the optimal policy for the MDP $M_l$. Let $M_\star$ be an MDP with the transition function $P_\star$ and  $s_{l,h+1}\sim P_\star(s_{l,h},a_{l,h})$, where $a_{l,h}=\pi_l(s_{l,h},h)$. Now for all $l \ge 1$ and $1 \le h \le H$, define
\beqn
\Delta_{l,h} \bydef \mathbb{E}_{s' \sim P_\star(z_{l,h})}\Big[V^{M_l}_{\pi_l,h+1}(s')-V^{M_\star}_{\pi_l,h+1}(s')\Big]-\Big(V^{M_l}_{\pi_l,h+1}(s_{l,h+1})-V^{M_\star}_{\pi_l,h+1}(s_{l,h+1})\Big),
\eeqn
where $z_{l,h} \bydef (s_{l,h},a_{l,h})$.
Then for any $\tau \ge 1$, 
\beqn
\sum_{l=1}^{\tau}\Big(V^{M_l}_{\pi_l,1}(s_{l,1})-V^{M_\star}_{\pi_l,1}(s_{l,1})\Big) \le \sum_{l=1}^{\tau}\sum_{h=1}^{H}\Big( \abs{\overline{R}_{M_l}(z_{l,h}) - \overline{R}_\star(z_{l,h})}+L_{M_l} \norm{\overline{P}_{M_l}(z_{l,h}) - \overline{P}_\star(z_{l,h})}_2+\Delta_{l,h}\Big),
\eeqn
where $L_{M_l}$ is defined to be the global Lipschitz constant (\ref{eqn: lipschitz}) of one step future value function for MDP $M_l$.
\end{mylemma}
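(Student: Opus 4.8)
The plan is to establish a per-episode, per-period recursion for the value gap and then telescope it over the horizon. For fixed $l$ and $1 \le h \le H$, define the value-function difference $\delta_{l,h} \bydef V^{M_l}_{\pi_l,h}(s_{l,h}) - V^{M_\star}_{\pi_l,h}(s_{l,h})$. Using the Bellman equation (Lemma \ref{lem:dp}), which asserts $V^M_{\pi,h}(s) = \overline{R}_M(s,\pi(s,h)) + \mathbb{E}_{s' \sim P_M(s,\pi(s,h))}[V^M_{\pi,h+1}(s')]$, I would expand both $V^{M_l}_{\pi_l,h}(s_{l,h})$ and $V^{M_\star}_{\pi_l,h}(s_{l,h})$ at the state-action pair $z_{l,h} = (s_{l,h},a_{l,h})$. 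This yields a reward-difference term $\overline{R}_{M_l}(z_{l,h}) - \overline{R}_\star(z_{l,h})$ plus a transition-difference term $\mathbb{E}_{s' \sim P_{M_l}(z_{l,h})}[V^{M_l}_{\pi_l,h+1}(s')] - \mathbb{E}_{s' \sim P_\star(z_{l,h})}[V^{M_\star}_{\pi_l,h+1}(s')]$.

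The key manipulation is to split the transition-difference term by adding and subtracting $\mathbb{E}_{s' \sim P_\star(z_{l,h})}[V^{M_l}_{\pi_l,h+1}(s')]$. The first resulting piece, $\mathbb{E}_{s' \sim P_{M_l}(z_{l,h})}[V^{M_l}_{\pi_l,h+1}(s')] - \mathbb{E}_{s' \sim P_\star(z_{l,h})}[V^{M_l}_{\pi_l,h+1}(s')]$, compares the \emph{same} value function under two transition distributions. Since $\pi_l$ is the optimal policy for $M_l$, this is exactly $U_h^{M_l}(P_{M_l}(z_{l,h})) - U_h^{M_l}(P_\star(z_{l,h}))$, which by Assumption (A1) is bounded in absolute value by $L_{M_l}\norm{\overline{P}_{M_l}(z_{l,h}) - \overline{P}_\star(z_{l,h})}_2$. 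The second piece, $\mathbb{E}_{s' \sim P_\star(z_{l,h})}[V^{M_l}_{\pi_l,h+1}(s') - V^{M_\star}_{\pi_l,h+1}(s')]$, is by the very definition of $\Delta_{l,h}$ equal to $\Delta_{l,h} + \bigl(V^{M_l}_{\pi_l,h+1}(s_{l,h+1}) - V^{M_\star}_{\pi_l,h+1}(s_{l,h+1})\bigr) = \Delta_{l,h} + \delta_{l,h+1}$, where I use that the realized next state $s_{l,h+1}$ is drawn from $P_\star(z_{l,h})$.

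Combining these, I would obtain the recursion $\delta_{l,h} \le \abs{\overline{R}_{M_l}(z_{l,h}) - \overline{R}_\star(z_{l,h})} + L_{M_l}\norm{\overline{P}_{M_l}(z_{l,h}) - \overline{P}_\star(z_{l,h})}_2 + \Delta_{l,h} + \delta_{l,h+1}$, having bounded the reward and transition terms by their absolute values. Since $V^M_{\pi,H+1} \equiv 0$ for every MDP, the boundary term $\delta_{l,H+1} = 0$, so telescoping the recursion over $h = 1,\ldots,H$ collapses the $\delta_{l,h+1}$ chain and bounds $\delta_{l,1}$ by the sum over $h$ of the three terms. Summing the resulting inequality over episodes $l = 1,\ldots,\tau$ gives precisely the claimed bound on $\sum_{l=1}^{\tau}\bigl(V^{M_l}_{\pi_l,1}(s_{l,1}) - V^{M_\star}_{\pi_l,1}(s_{l,1})\bigr)$.

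I do not anticipate a serious technical obstacle; the argument is a clean telescoping of the Bellman recursion. The point requiring the most care is the add-and-subtract split of the transition term together with the correct identification of the same-value-function-under-different-transitions piece with $U_h^{M_l}$, so that Assumption (A1) applies with the \emph{optimal-policy} future value function as its definition demands. One must also note that the Lipschitz bound is invoked for $M_l$ (hence the constant $L_{M_l}$ in the statement) rather than for $M_\star$, and that the $\Delta_{l,h}$ terms are carried along without any bound at this stage --- they are the martingale-difference contributions that are controlled separately in the main regret proof.
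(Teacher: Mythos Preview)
Your proposal is correct and follows essentially the same approach as the paper's proof: both expand the value-gap via the Bellman equation, add and subtract $\mathbb{E}_{s' \sim P_\star(z_{l,h})}[V^{M_l}_{\pi_l,h+1}(s')]$ to isolate a term amenable to Assumption~(A1) (applied to $M_l$, since $\pi_l$ is optimal for $M_l$), absorb the remainder into $\Delta_{l,h}+\delta_{l,h+1}$, and telescope using $V^M_{\pi,H+1}\equiv 0$. The only cosmetic difference is that the paper phrases the recursion through the Bellman operator $T^M_{\pi,h}$ and an inductive argument, while you work directly with the scalar sequence $\delta_{l,h}$; the content is identical.
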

\begin{proof}
The arguments in this proof borrow ideas from \citet{osband2013more}. Applying Lemma \ref{lem:dp} for $h=1$, $s=s_{l,1}$ and two MDP-policy pairs $(M_l,\pi_l)$ and $(M_\star,\pi_l)$, we have 
\beqan
V^{M_l}_{\pi_l,1}(s_{l,1})-V^{M_\star}_{\pi_l,1}(s_{l,1}) &=& \big(T^{M_l}_{\pi_l,1} V^{M_l}_{\pi_l,2}\big)(s_{l,1})-\big(T^{M_\star}_{\pi_l,1} V^{M_\star}_{\pi_l,2}\big)(s_{l,1})\\
&= &\big(T^{M_l}_{\pi_l,1} V^{M_l}_{\pi_l,2}\big)(s_{l,1})-\big(T^{M_\star}_{\pi_l,1} V^{M_l}_{\pi_l,2}\big)(s_{l,1}) + \big(T^{M_\star}_{\pi_l,1} V^{M_l}_{\pi_l,2}\big)(s_{l,1})-\big(T^{M_\star}_{\pi_l,1} V^{M_\star}_{\pi_l,2}\big)(s_{l,1}).
\label{eqn:value-zero}
\eeqan
Further using Definition \ref{def:bellman-operator} for $M=M_\star$, $\pi=\pi_l$, $h=1$, $V=V^{M_l}_{\pi_l,2}$ and $s=s_{l,1}$, we have
\beq
\label{eqn:value-one}
\big(T^{M_\star}_{\pi_l,1} V^{M_l}_{\pi_l,2})(s_{l,1}\big)=\overline{R}_\star\big(s_{l,1},\pi_l(s_{l,1},1)\big) + \mathbb{E}_{s' \sim P_\star(s_{l,1},\pi_l(s_{l,1},1))}\Big[V^{M_l}_{\pi_l,2}(s')\Big],
\eeq
where $R_\star$ and $P_\star$ denote the reward and transition functions of the MDP $M_\star$ respectively. Again using Definition \ref{def:bellman-operator} for $M=M_\star$, $\pi=\pi_l$, $h=1$, $V=V^{M_\star}_{\pi_l,2}$ and $s=s_{l,1}$, we have
\beq
\label{eqn:value-two}
\big(T^{M_\star}_{\pi_l,1} V^{M_\star}_{\pi_l,2}\big)(s_{l,1})=\overline{R}_\star\big(s_{l,1},\pi_l(s_{l,1},1)\big) + \mathbb{E}_{s' \sim P_\star(s_{l,1},\pi_l(s_{l,1},1))}\Big[V^{M_\star}_{\pi_l,2}(s')\Big].
\eeq
Subtracting (\ref{eqn:value-two}) from (\ref{eqn:value-one}), we have
\beqan
\big(T^{M_\star}_{\pi_l,1} V^{M_l}_{\pi_l,2}\big)(s_{l,1})-\big(T^{M_\star}_{\pi_l,1} V^{M_\star}_{\pi_l,2})(s_{l,1}\big)&=&\mathbb{E}_{s' \sim P_\star(s_{l,1},\pi_l(s_{l,1},1))}\Big[V^{M_l}_{\pi_l,2}(s')-V^{M_\star}_{\pi_l,2}(s')\Big]\\
&=& V^{M_l}_{\pi_l,2}(s_{l,2})-V^{M_\star}_{\pi_l,2}(s_{l,2})+\Delta_{l,1},
\eeqan
where $\Delta_{l,1} \bydef \mathbb{E}_{s' \sim P_\star(s_{l,1},\pi_l(s_{l,1},1))}\Big[V^{M_l}_{\pi_l,2}(s')-V^{M_\star}_{\pi_l,2}(s')\Big]-\Big(V^{M_l}_{\pi_l,2}(s_{l,2})-V^{M_\star}_{\pi_l,2}(s_{l,2})\Big)$. Then (\ref{eqn:value-zero}) implies
\beqn
V^{M_l}_{\pi_l,1}(s_{l,1})-V^{M_\star}_{\pi_l,1}(s_{l,1})=V^{M_l}_{\pi_l,2}(s_{l,2})-V^{M_\star}_{\pi_l,2}(s_{l,2}) + (T^{M_l}_{\pi_l,1} V^{M_l}_{\pi_l,2})(s_{l,1})-(T^{M_\star}_{\pi_l,1} V^{M_l}_{\pi_l,2})(s_{l,1})+ \Delta_{l,1}. 
\eeqn
Now since $V^M_{\pi,H+1}(s)=0$ for any MDP $M$, policy $\pi$ and state $s$, an inductive argument gives 
\beq
\label{eqn:recursion}
V^{M_l}_{\pi_l,1}(s_{l,1})-V^{M_\star}_{\pi_l,1}(s_{l,1})=\sum_{h=1}^{H}\Big(\big(T^{M_l}_{\pi_l,h} V^{M_l}_{\pi_l,h+1}\big)(s_{l,h})-\big(T^{M_\star}_{\pi_l,h} V^{M_l}_{\pi_l,h+1}\big)(s_{l,h})+ \Delta_{l,h}\Big),
\eeq
where $\Delta_{l,h}\bydef \mathbb{E}_{s' \sim P_\star(s_{l,h},\pi_l(s_{l,h},h))}\Big[V^{M_l}_{\pi_l,h+1}(s')-V^{M_\star}_{\pi_l,h+1}(s')\Big]-\Big(V^{M_l}_{\pi_l,h+1}(s_{l,h+1})-V^{M_\star}_{\pi_l,h+1}(s_{l,h+1})\Big)$.
\par
Now using Definition \ref{def:bellman-operator} respectively for $M=M_l$ and $M=M_\star$ with $\pi=\pi_l$, $V=V^{M_l}_{\pi_l,h+1}$ and $s=s_{l,h}$, we have
\beqan
\big(T^{M_l}_{\pi_l,h} V^{M_l}_{\pi_l,h+1}\big)(s_{l,h})-\big(T^{M_\star}_{\pi_l,h} V^{M_l}_{\pi_l,h+1}\big)(s_{l,h})=\bigg(\overline{R}_{M_l}\big(s_{l,h},\pi_l(s_{l,h},h)\big)+\mathbb{E}_{s'\sim P_{M_l}(s_{l,h},\pi_l(s_{l,h},h))}\Big[V^{M_l}_{\pi_l,h+1}(s')\Big]\bigg)\\-\bigg(\overline{R}_\star\big(s_{l,h},\pi_l(s_{l,h},h)\big)+\mathbb{E}_{s'\sim P_\star(s_{l,h},\pi_l(s_{l,h},h))}\Big[V^{M_l}_{\pi_l,h+1}(s')\Big]\bigg).
\eeqan
Further using the fact that $a_{l,h}=\pi_l(s_{l,h},h))$ and defining $z_{l,h}=(s_{l,h},a_{l,h})$, we have
\beqan
&&\big(T^{M_l}_{\pi_l,h} V^{M_l}_{\pi_l,h+1}\big)(s_{l,h})-\big(T^{M_\star}_{\pi_l,h} V^{M_l}_{\pi_l,h+1}\big)(s_{l,h})\\&&=\overline{R}_{M_l}(s_{l,h},a_{l,h}) - \overline{R}_\star(s_{l,h},a_{l,h})+ \mathbb{E}_{s'\sim P_{M_l}(s_{l,h},a_{l,h})}\Big[V^{M_l}_{\pi_l,h+1}(s')\Big]-\mathbb{E}_{s'\sim P_\star(s_{l,h},a_{l,h})}\Big[V^{M_l}_{\pi_l,h+1}(s')\Big]\\
&&=\overline{R}_{M_l}(z_{l,h}) - \overline{R}_\star(z_{l,h})+ \mathbb{E}_{s'\sim P_{M_l}(z_{l,h})}\Big[V^{M_l}_{\pi_l,h+1}(s')\Big]-\mathbb{E}_{s'\sim P_\star(z_{l,h})}\Big[V^{M_l}_{\pi_l,h+1}(s')\Big].
\eeqan
and $\Delta_{l,h}= \mathbb{E}_{s' \sim P_\star(z_{l,h})}\Big[V^{M_l}_{\pi_l,h+1}(s')-V^{M_\star}_{\pi_l,h+1}(s')\Big]-\Big(V^{M_l}_{\pi_l,h+1}(s_{l,h+1})-V^{M_\star}_{\pi_l,h+1}(s_{l,h+1})\Big)$.

Now for an MDP $M$, a distribution $\phi$ over $\cS$ and for every period $1 \le h \le H$, recall that the \textit{one step future value function} is defined as  
\beqn
U_h^M(\phi) \bydef \mathbb{E}_{s' \sim \phi}\Big[V^M_{\pi_M,h+1}(s')\Big],
\eeqn
where $\pi_M$ denotes the optimal policy for the MDP $M$. Observe that $\pi_l$ is the optimal policy for the MDP $M_l$. This implies
\beqn
\big(T^{M_l}_{\pi_l,h} V^{M_l}_{\pi_l,h+1}\big)(s_{l,h})-\big(T^{M_\star}_{\pi_l,h} V^{M_l}_{\pi_l,h+1}\big)(s_{l,h})=\overline{R}_{M_l}(z_{l,h}) - \overline{R}_\star(z_{l,h})+U_h^{M_l}\big(P_{M_l}(z_{l,h})\big) - U_h^{M_l}\big(P_\star(z_{l,h})\big).
\eeqn 
Further (\ref{eqn: lipschitz}) implies
\beqn
U_h^{M_l}\big(P_{M_l}(z_{l,h})\big) - U_h^{M_l}\big(P_\star(z_{l,h})\big)
\le  L_{M_l} \norm{\overline{P}_{M_l}(z_{l,h}) - \overline{P}_\star(z_{l,h})}_2, 
\eeqn
where $L_{M_l}$ is defined to be the global Lipschitz constant (\ref{eqn: lipschitz}) of one step future value function for MDP $M_l$.
Hence we have
\beq
\label{eqn:future-value-function-bound}
\big(T^{M_l}_{\pi_l,h} V^{M_l}_{\pi_l,h+1}\big)(s_{l,h})-\big(T^{M_\star}_{\pi_l,h} V^{M_l}_{\pi_l,h+1}\big)(s_{l,h}) \le \abs{\overline{R}_{M_l}(z_{l,h}) - \overline{R}_\star(z_{l,h})} + L_{M_l} \norm{\overline{P}_{M_l}(z_{l,h}) - \overline{P}_\star(z_{l,h})}_2.
\eeq
Now the result follows by plugging (\ref{eqn:future-value-function-bound}) back in (\ref{eqn:recursion}) and summing over $l=1,\ldots,\tau$.
\end{proof}

\section{ANALYSIS OF GP-UCRL AND PSRL IN THE KERNELIZED MDPs} 
\subsection{Preliminary Definitions and Results}
\label{appendix:definitions}
Now we define the \textit{span} of an MDP, which is crucial to measure the difficulties in learning the optimal policy of the MDP \citep{jaksch2010near, bartlett2009regal}.
\begin{mydefinition}[Span of an MDP]
\label{def:span}
The span of an MDP $M$ is the maximum difference in value of any two states under the optimal policy, i.e.
\beqn
\Psi_{M}\bydef \max\limits_{s,s'\in \cS}V^{M}_{\pi_M,1}(s)-V^{M}_{\pi_M,1}(s').
\eeqn
\end{mydefinition}
Now define
$\Psi_{M,h}\bydef \max\limits_{s,s'\in \cS}V^{M}_{\pi_M,h}(s)-V^{M}_{\pi_M,h}(s')$ as the span of $M$ at period $h$ and let $\tilde{\Psi}_M\bydef \max_{h \in \lbrace 1,\ldots,H \rbrace}\Psi_{M,h}$ as the maximum possible span in an episode. Clearly $\Psi_M \le \tilde{\Psi}_M$. 


\begin{mydefinition}
A sequence of random variables $\lbrace Z_t \rbrace_{t\ge 1}$ is called a martingale difference sequence corresponding to a filtration $\lbrace \cF_t \rbrace_{t\ge 0}$, if for all $t \ge 1$, $Z_t$ is $\cF_t$-measurable, and for all $t\ge 1$,
\beqn
\expect{Z_t\given \cF_{t-1}} = 0.
\eeqn
\label{def:martingale-diff-seq}
\end{mydefinition}
\begin{mylemma}[Azuma-Hoeffding Inequality] 
If a martingale difference sequence $\lbrace Z_t \rbrace_{t\ge 1}$, corresponding to filtration $\lbrace \cF_t \rbrace_{t\ge 0}$, satisfies $\abs{Z_t} \le \alpha_t$ for some constant $\alpha_t$, for all $t = 1,\ldots,T$, then for any $0 < \delta \le 1$,
\beqn
\prob{\sum_{t=1}^{T}Z_t \le \sqrt{2\ln(1/\delta)\sum_{t=1}^{T}\alpha_t^2}\;} \ge 1-\delta. 	
\eeqn
\label{lem:azuma-hoeffding}
\end{mylemma}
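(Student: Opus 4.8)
The plan is to prove the equivalent upper-tail statement $\prob{\sum_{t=1}^{T} Z_t > u} \le \delta$ with $u \bydef \sqrt{2\ln(1/\delta)\sum_{t=1}^{T}\alpha_t^2}$, since $\prob{X \le u} \ge 1-\delta$ is exactly the complement of $\prob{X > u} \le \delta$. I would use the standard exponential-moment (Chernoff) method adapted to the martingale-difference setting. The first and central step is to control the moment generating function by peeling off one increment at a time. Fix $\lambda > 0$. Because $\sum_{t=1}^{T-1} Z_t$ is $\cF_{T-1}$-measurable, the tower property gives
\beqn
\expect{\exp\Big(\lambda \textstyle\sum_{t=1}^{T} Z_t\Big)} = \expect{\exp\Big(\lambda \textstyle\sum_{t=1}^{T-1} Z_t\Big)\,\expect{\exp(\lambda Z_T)\given \cF_{T-1}}}.
\eeqn
The key ingredient is a conditional version of Hoeffding's lemma: since $\expect{Z_t \given \cF_{t-1}} = 0$ and $Z_t \in [-\alpha_t,\alpha_t]$ (an interval of width $2\alpha_t$), convexity of $x \mapsto e^{\lambda x}$ and the usual bound on the log-MGF of a bounded centered variable yield $\expect{\exp(\lambda Z_t)\given \cF_{t-1}} \le \exp(\lambda^2(2\alpha_t)^2/8) = \exp(\lambda^2 \alpha_t^2/2)$. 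Substituting this for $t=T$ and inducting downward collapses the product to $\expect{\exp(\lambda \sum_{t=1}^{T} Z_t)} \le \exp\big(\tfrac{\lambda^2}{2}\sum_{t=1}^{T}\alpha_t^2\big)$.

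Next I would apply Markov's inequality to the nonnegative variable $\exp(\lambda \sum_t Z_t)$ to obtain, for every $\lambda > 0$,
\beqn
\prob{\textstyle\sum_{t=1}^{T} Z_t > u} \le \exp\Big(-\lambda u + \tfrac{\lambda^2}{2}\textstyle\sum_{t=1}^{T}\alpha_t^2\Big).
\eeqn
Minimizing the exponent over $\lambda$ (the optimizer is $\lambda = u/\sum_t \alpha_t^2$) produces the Gaussian-type tail $\prob{\sum_t Z_t > u} \le \exp\big(-u^2/(2\sum_t \alpha_t^2)\big)$. Setting the right-hand side equal to $\delta$ and solving for $u$ gives precisely $u = \sqrt{2\ln(1/\delta)\sum_t \alpha_t^2}$, so the upper-tail probability at this threshold is at most $\delta$. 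Taking complements recovers the stated one-sided inequality.

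The routine calculations (the peeling, the Markov step, and the one-dimensional optimization over $\lambda$) are mechanical. The only step deserving care is the conditional Hoeffding lemma, namely verifying that the martingale-difference property together with almost-sure boundedness controls the \emph{conditional} moment generating function. Here this is clean because the $\alpha_t$ are deterministic constants, so there is no measurability subtlety (the bound $\exp(\lambda^2\alpha_t^2/2)$ is itself a constant and can be pulled out of the outer expectation at each peeling step); if the $\alpha_t$ were instead $\cF_{t-1}$-measurable random variables one would need them to remain inside the expectation, but that complication does not arise under the stated hypotheses.
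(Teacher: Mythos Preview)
Your proof is correct and is the standard Chernoff argument for the Azuma--Hoeffding inequality. The paper itself does not supply a proof of this lemma; it is stated as a known result in the preliminaries and used as a black box in the subsequent analysis, so there is nothing to compare against beyond noting that your derivation is the classical one.
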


\begin{mylemma}[Bound on Martingale difference sequence]
\label{lem:GP-UCRL}
Let $\cM_l$ be the set of plausible MDPs constructed by GP-UCRL (Algorithm \ref{algo:GP-UCRL}) at episode $l, l \ge 1$ and $M_l, l \ge 1$ be a sequence of MDPs such that $M_l \in \cM_l$ for each $l \ge 1$ and $\pi_l$ be the optimal policy for the MDP $M_l$ for each $l \ge 1$. Let $M_\star$ be an MDP with reward function $R_\star$ and transition function $P_\star$. Let $s_{l,h+1}\sim P_\star(s_{l,h},a_{l,h})$, where $a_{l,h}=\pi_l(s_{l,h},h)$. Now define $\Delta_{l,h}:= \mathbb{E}_{s' \sim P_\star(z_{l,h})}\Big[V^{M_l}_{\pi_l,h+1}(s')-V^{M_\star}_{\pi_l,h+1}(s')\Big]-\Big(V^{M_l}_{\pi_l,h+1}(s_{l,h+1})-V^{M_\star}_{\pi_l,h+1}(s_{l,h+1})\Big)$, with $z_{l,h}:=(s_{l,h},a_{l,h})$.
Then for any $0 \le \delta \le 1$ and $\tau \ge 1$, with probability at least $1-\delta$,
\beqan
\sum_{l=1}^{\tau}\sum_{h=1}^{H}\Delta_{l,h} \le (LD+2CH) \sqrt{2\tau H \ln(1/\delta)},
\eeqan
where $D \bydef \max_{s,s'\in \cS}\norm{s-s'}_2$ is the diameter of the state space $\cS$, $C$ is a uniform upper bound over the absolute value of the mean reward function $\overline{R}_\star$, i.e. $\abs{\overline{R}_\star(z)} \le C$ for all $z \in \cZ$ and $L $ is an upper bound over the global Lipschitz constant (\ref{eqn: lipschitz}) of one step future value function for MDP $M_\star$, i.e. $L_{M_\star} \le L$.
\end{mylemma}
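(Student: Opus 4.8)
The plan is to recognize $\{\Delta_{l,h}\}$ as a martingale difference sequence and invoke the Azuma--Hoeffding inequality (Lemma \ref{lem:azuma-hoeffding}), so that the entire task reduces to producing a uniform almost-sure bound $\abs{\Delta_{l,h}} \le LD + 2CH$ on each increment. First I would verify the martingale property: linearize the index $(l,h)$ into a single time index and let $\cF_{l,h-1}$ be the $\sigma$-algebra generated by all observations up to and including the state--action pair $z_{l,h}=(s_{l,h},a_{l,h})$, but excluding the not-yet-realized next state $s_{l,h+1}$. Because $s_{l,h+1}\sim P_\star(z_{l,h})$ and the value functions $V^{M_l}_{\pi_l,h+1}, V^{M_\star}_{\pi_l,h+1}$ are $\cF_{l,h-1}$-measurable (both $M_l$ and $\pi_l$ are fixed at the start of episode $l$), the deterministic expectation appearing in $\Delta_{l,h}$ is exactly $\expect{V^{M_l}_{\pi_l,h+1}(s_{l,h+1})-V^{M_\star}_{\pi_l,h+1}(s_{l,h+1}) \given \cF_{l,h-1}}$, so $\expect{\Delta_{l,h}\given\cF_{l,h-1}}=0$.

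The crux is the pointwise bound on $\abs{\Delta_{l,h}}$, which I would obtain by splitting $\Delta_{l,h}$ into an $M_l$-part and an $M_\star$-part,
\beqn
\Delta_{l,h} = \Big(\mathbb{E}_{s' \sim P_\star(z_{l,h})}\big[V^{M_l}_{\pi_l,h+1}(s')\big] - V^{M_l}_{\pi_l,h+1}(s_{l,h+1})\Big) - \Big(\mathbb{E}_{s' \sim P_\star(z_{l,h})}\big[V^{M_\star}_{\pi_l,h+1}(s')\big] - V^{M_\star}_{\pi_l,h+1}(s_{l,h+1})\Big),
\eeqn
and bounding each part differently. For the $M_l$-part, the key observation is that $\pi_l$ is the optimal policy for $M_l$, so $V^{M_l}_{\pi_l,h+1}$ is precisely the value function entering the one-step future value $U^{M_l}_h$; hence this part equals $U^{M_l}_h\big(P_\star(z_{l,h})\big)-U^{M_l}_h\big(\delta_{s_{l,h+1}}\big)$, where $\delta_{s_{l,h+1}}$ is the point mass at $s_{l,h+1}$, whose mean is $s_{l,h+1}$. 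Assumption (A1) then bounds it by $L_{M_l}\norm{\overline{P}_\star(z_{l,h})-s_{l,h+1}}_2 \le LD$, using that $M_l\in\cM_l$ forces $L_{M_l}\le L$ and that both $\overline{P}_\star(z_{l,h})$ and $s_{l,h+1}$ lie in $\cS$, which has diameter $D$.

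For the $M_\star$-part, $\pi_l$ is in general not optimal for $M_\star$, so the Lipschitz route is unavailable and I would instead use the crude envelope $\abs{V^{M_\star}_{\pi_l,h+1}(s)}\le HC$ (a sum of at most $H$ mean rewards, each of magnitude at most $C$), giving at most $2HC$ for this part. Combining the two estimates yields $\abs{\Delta_{l,h}}\le LD+2CH$. Applying Azuma--Hoeffding with $\alpha_{l,h}=LD+2CH$ across all $\tau H$ increments then gives, with probability at least $1-\delta$, $\sum_{l,h}\Delta_{l,h}\le \sqrt{2\ln(1/\delta)\,\tau H (LD+2CH)^2}=(LD+2CH)\sqrt{2\tau H\ln(1/\delta)}$, as claimed. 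The main obstacle is the asymmetric treatment of the two parts: one must exploit the optimality of $\pi_l$ for $M_l$ to unlock (A1) and obtain the dimension-free $LD$ term, while recognizing that the $M_\star$-part carries no such structure and must be controlled by the reward envelope $C$.
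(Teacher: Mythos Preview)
Your proof is correct and follows the same overall architecture as the paper: establish that $\{\Delta_{l,h}\}$ is a martingale difference sequence with respect to the natural filtration, produce a uniform bound $\abs{\Delta_{l,h}}\le LD+2CH$, and invoke Azuma--Hoeffding. The asymmetric treatment of the $M_l$-part (via Assumption~(A1), exploiting that $\pi_l$ is optimal for $M_l$) versus the $M_\star$-part (crude envelope $\abs{V^{M_\star}_{\pi_l,h+1}}\le CH$) is exactly what the paper does.

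The one place you diverge slightly is in how you extract $LD$ from the $M_l$-part. The paper bounds this part by the \emph{oscillation} of $V^{M_l}_{\pi_l,h+1}$ over $\cS$, i.e.\ $\max_s V^{M_l}_{\pi_l,h+1}(s)-\min_s V^{M_l}_{\pi_l,h+1}(s)$, and then applies (A1) to point masses $\delta_s,\delta_{s'}$ to bound that oscillation by $L_{M_l}D\le LD$. You instead apply (A1) directly to the pair $\big(P_\star(z_{l,h}),\,\delta_{s_{l,h+1}}\big)$, obtaining $L_{M_l}\norm{\overline{P}_\star(z_{l,h})-s_{l,h+1}}_2$, and then argue this is at most $LD$ because ``both $\overline{P}_\star(z_{l,h})$ and $s_{l,h+1}$ lie in $\cS$.'' The second of these memberships is immediate, but the first requires $\cS$ to be convex (so that the mean of a distribution supported on $\cS$ stays in $\cS$), which the lemma does not assume. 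The paper's span argument avoids this: since $s'\in\cS$ almost surely, the expectation $\mathbb{E}_{s'}[V^{M_l}_{\pi_l,h+1}(s')]$ and the realized value $V^{M_l}_{\pi_l,h+1}(s_{l,h+1})$ are both sandwiched between $\min_{\cS}$ and $\max_{\cS}$ of the value function, with no need for convexity. This is a small technical point and easily patched by switching to the span bound for that term.
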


\begin{proof}
First assume that $M_\star$ is fixed in advance. For each $l \ge 1$ and $h \in \lbrace 1,\ldots H\rbrace$, we define $\cH_{l-1} \bydef \lbrace s_{j,k},a_{j,k},r_{j,k},s_{j,k+1}\rbrace_{1 \le j \le l-1,1 \le k \le H}$ as the history of all observations till episode $l-1$ and $\cG_{l,h} \bydef \cH_{l-1} \cup \lbrace s_{l,k},a_{l,k},r_{l,k},s_{l,k+1}\rbrace_{1 \le k \le h}$ as the history of all observations till episode $l$ and period $h$. See that $\cH_0=\emptyset$ and $\cH_{l}=G_{l,H}$ for all $l \ge 1$. Further defining $\cG_{l,0} \bydef \cH_{l-1} \cup \lbrace s_{l,1}\rbrace$, we see that $\cG_{l,h} = \cG_{l,h-1}\cup \lbrace a_{l,h},r_{l,h},s_{l,h+1} \rbrace$ for all $h \in \lbrace 1,\ldots H\rbrace$. Clearly the sets $\cG_{l,h}$ satisfy $G_{l,0} \subset G_{l,1} \subset G_{l,2} \subset \ldots \subset G_{l,H} \subset G_{l+1,0}$ for all $l \ge 1$. Hence, the sequence of sets $\lbrace \cG_{l,h}\rbrace_{l \ge 1,0 \le h \le H}$ defines a filtration.

Now by construction in Algorithm \ref{algo:GP-UCRL}, $\cM_l$ is deterministic given $\cH_{l-1}$. Hence, $M_l$ and $\pi_l$ are also deterministic given $\cH_{l-1}$. This implies $\Delta_{l,h} = \mathbb{E}_{s' \sim P_\star(z_{l,h})}\Big[V^{M_l}_{\pi_l,h+1}(s')-V^{M_\star}_{\pi_l,h+1}(s')\Big]-\Big(V^{M_l}_{\pi_l,h+1}(s_{l,h+1})-V^{M_\star}_{\pi_l,h+1}(s_{l,h+1})\Big)$ is $G_{l,h}$ - measurable. Further note that $a_{l,h}=\pi_l(s_{l,h},h)$ is deterministic given $\cG_{l,h-1}$, as both $\pi_l$ and $s_{l,h}$ are deterministic given $\cG_{l,h-1}$. This implies
\beqa
\expect{\Delta_{l,h}\given \cG_{l,h-1}}&=&\mathbb{E}_{s' \sim P_\star(z_{l,h})}\Big[V^{M_l}_{\pi_l,h+1}(s')-V^{M_\star}_{\pi_l,h+1}(s')\Big]-\mathbb{E}_{s_{l,h+1} \sim P_\star(z_{l,h})}\Big[V^{M_l}_{\pi_l,h+1}(s_{l,h+1})-V^{M_\star}_{\pi_l,h+1}(s_{l,h+1})\Big]\nonumber\\ 
&=& 0. \label{eqn:expect-martingale}
\eeqa
Further, observe that $\abs{\Delta_{l,h}}\le \big(\max\limits_s V^{M_l}_{\pi_l,h+1}(s)-\min\limits_s V^{M_l}_{\pi_l,h+1}(s)\big)+\big(\max\limits_s V^{M_\star}_{\pi_l,h+1}(s)-\min\limits_s V^{M_\star}_{\pi_l,h+1}(s)\big)$.
The first term $\max\limits_s V^{M_l}_{\pi_l,h+1}(s)-\min\limits_s V^{M_l}_{\pi_l,h+1}(s)$ is upper bounded by $\tilde{\Psi}_{M_l}$, which is an upper bound over the \textit{span} of the MDP $M_l$ (Definition \ref{def:span}). Now from (\ref{eqn: lipschitz}), we get $\Psi_{M_l} \le L_{M_l}D$, where $D\bydef\max_{s,s'\in \cS}\norm{s-s'}_2$ is the diameter of the state space $\cS$ and $L_{M_l}$ is a global Lipschitz constant for the one step future value function. Further by construction of the set of plausible MDPs $\cM_l$ and as $M_l \in \cM_l$, we have $L_{M_l} \le L$. Hence, we have
$\max\limits_s V^{M_l}_{\pi_l,h+1}(s)-\min\limits_s V^{M_l}_{\pi_l,h+1}(s) \le LD$.
Now, since by our hypothesis $\abs{\overline{R}_\star(z)} \le C$ for all $z \in \cZ$, see that $V^{M_\star}_{\pi,h}(s) \le CH$ for all $\pi$, $1 \le h \le H$ and $s \in \cS$. Hence, we have 
$\max\limits_s V^{M_\star}_{\pi_l,h+1}(s)-\min\limits_s V^{M_\star}_{\pi_l,h+1}(s) \le 2CH$.

Therefore the sequence of random variables $\lbrace \Delta_{l,h}\rbrace_{l\ge 1,1 \le h \le H}$ is a martingale difference sequence (Definition \ref{def:martingale-diff-seq}) with respect to the filtration $\lbrace \cG_{l,h}\rbrace_{l \ge 1,0 \le h \le H}$, with $\abs{\Delta_{l,h}} \le LD+2CH$ for all $l \ge 1$ and $1 \le h \le H$. Thus, by Lemma \ref{lem:azuma-hoeffding}, for any $\tau \ge 1$ and $0 < \delta \le 1$, we have with probability at least $1-\delta$,
\beq
\label{eqn:martingale-diff-sum}
\sum_{l=1}^{\tau}\sum_{h=1}^{H} \Delta_{l,h} \le \sqrt{2\ln(1/\delta)\sum_{l=1}^{\tau}\sum_{h=1}^{H}(LD+2CH)^2} = (LD+2CH) \sqrt{2\tau H \ln(1/\delta)}.
\eeq
Now consider the case when $M_\star$ is random. Then we define $\tilde{\cH}_{l-1}\bydef \cH_{l-1}\cup M_\star$ and $\tilde\cG_{l,h} \bydef \cG_{l,h} \cup M_\star$. Then $\lbrace \Delta_{l,h}\rbrace_{l\ge 1,1 \le h \le H}$ is a martingale difference sequence with respect to the filtration $\lbrace \tilde{\cG}_{l,h}\rbrace_{l \ge 1,0 \le h \le H}$, and hence (\ref{eqn:martingale-diff-sum}) holds in this case also.
\end{proof}

\subsection{Regret Analysis}

Recall that at each episode $l \ge 1$, GP-UCRL constructs confidence sets $\cC_{R,l}$ and $\cC_{P,l}$ as 
\beq
\label{eqn:confidence-set-agnostic}
\begin{aligned}
\cC_{R,l}&=\big\lbrace f: \cZ \ra \Real \given \abs{f(z)-\mu_{R,l-1}(z)} \le \beta_{R,l}\sigma_{R,l-1}(z) \forall z \in \cZ \big\rbrace,\\
\cC_{P,l}&=\big\lbrace f: \cZ \ra \Real^m \given \norm{f(z)-\mu_{P,l-1}(z)}_2 \le \beta_{P,l}\norm{\sigma_{P,l-1}(z)}_2 \forall z \in \cZ\big\rbrace,
\end{aligned}
\eeq
where $\mu_{R,0}(z)=0$, $\sigma^2_{R,0}(z)=k_R(z,z)$ and for each $l \ge 1$,  
\beq
\label{eqn:post-reward-agnostic}
\begin{aligned}
\mu_{R,l}(z) &= k_{R,l}(z)^T(K_{R,l} + H I)^{-1}R_{l},\\
\sigma^2_{R,l}(z) &= k_R(z,z) - k_{R,l}(z)^T(K_{R,l} + H I)^{-1} k_{R,l}(z).
\end{aligned}
\eeq
Here $H$ is the number of periods, $I$ is the $(lH)\times (lH)$ identity matrix, $R_l = [r_{1,1},\ldots,r_{l,H}]^T$ is the vector of rewards observed at $\cZ_{l} = \lbrace z_{j,k} \rbrace_{1 \le j \le l,1 \le k \le H} =\lbrace z_{1,1},\ldots,z_{l,H}\rbrace$, the set of all state-action pairs available at the end of episode $l$. $k_{R,l}(z) = [k_R(z_{1,1},z),\ldots,k_R(z_{l,H},z)]^T$ is the vector kernel evaluations between $z$ and elements of the set $\cZ_{l}$ and $K_{R,l} = [k_R(u,v)]_{u,v \in \cZ_{l}}$ is the kernel matrix computed at $\cZ_{l}$. Further
$\mu_{P,l}(z) = [\mu_{P,l-1}(z,1),\ldots,\mu_{P,l-1}(z,m)]^T$ and $\sigma_{P,l}(z) = [\sigma_{P,l-1}(z,1),\ldots,\sigma_{P,l-1}(z,m)]^T$, where $\mu_{P,0}(z,i)=0$, $\sigma_{P,0}(z,i)=k_P\big((z,i),(z,i)\big)$ and for each $l \ge 1$,
\beq
\label{eqn:post-state-agnostic}
\begin{aligned}
\mu_{P,l}(z,i) &= k_{P,l}(z,i)^T(K_{P,l} + mH I)^{-1}S_l,\\
\sigma^2_{P,l}\big((z,i))\big) &= k_P((z,i),(z,i)) - k_{P,l}(z,i)^T(K_{P,l} + mH I)^{-1} k_{P,l}(z,i).
\end{aligned}
\eeq
Here $m$ is the dimension of the state space, $H$ is the number of periods, $I$ is the $(mlH)\times (mlH)$ identity matrix, $S_{l} = [s_{1,2}^T,\ldots,s_{l,H+1}^T]^T$ denotes the vector of state transitions at $\cZ_{l}=\lbrace z_{1,1},\ldots,z_{l,H}\rbrace$, the set of all state-action pairs available at the end of episode $l$. $k_{P,l}(z,i) = \Big[k_P\big((z_{1,1},1),(z,i)\big),\ldots,k_P\big((z_{l,H},m),(z,i)\big)\Big]^T$ is the vector of kernel evaluations between $(z,i)$ and elements of the set $\tilde{\cZ}_{l} = \big\lbrace (z_{j,k},i) \big\rbrace_{1 \le j \le l,1 \le k \le H, 1 \le i \le m} =\big\lbrace  (z_{1,1},1),\ldots,(z_{l,H},m)\big\rbrace $ and $K_{P,l} = \big[k_P(u,v)\big]_{u,v \in \tilde{\cZ}_{l}}$ is the kernel matrix computed at $\tilde{\cZ}_{l}$.
Here for any $0 < \delta \le 1$, $B_R, B_P,\sigma_R,\sigma_P > 0$,
$\beta_{R,l}\bydef B_R + \dfrac{\sigma_R}{\sqrt{H}}\sqrt{2\big(\ln(3/\delta)+\gamma_{(l-1)H}(k_R,\lambda_R,\cZ)\big)}$ and $\beta_{P,l} \bydef B_P + \dfrac{\sigma_P}{\sqrt{mH}}\sqrt{2\big(\ln(3/\delta)+\gamma_{m(l-1)H}(k_P,\lambda_P,\tilde{\cZ})\big)}$ are properly chosen confidence widths of $\cC_{R,l}$ and $\cC_{P,l}$ respectively.

\begin{mylemma}[Concentration of mean reward and mean transition functions]
\label{lem:kernel-least-squares}
Let $M_\star=\lbrace \cS,\cA,R_\star,P_\star,H\rbrace$ be an MDP with period $H$, state space $\cS \subset \Real^m$ and action space $\cA \subset \Real^n$. Let the mean reward function $\overline{R}_\star$ be a member of the RKHS $\cH_{k_R}(\cZ)$ corresponding to the kernel $k_R:\cZ \times \cZ \ra \Real$, where $\cZ \bydef \cS \times \cA$ and let the noise variables $\epsilon_{R,l,h}$ be conditionally $\sigma_R$-sub-Gaussian (\ref{eqn:noise-reward}). Let the mean transition function $\overline{P}_\star$ be a member of the RKHS $\cH_{k_P}(\tilde{\cZ})$ corresponding to the kernel $k_P:\tilde{\cZ \times \cZ \ra \Real}$, where $\tilde{\cZ}\bydef \cZ \times \lbrace 1,\ldots,m \rbrace$ and let the noise variables $\epsilon_{P,l,h}$ be conditionally component-wise independent and $\sigma_P$-sub-Gaussian (\ref{eqn:noise-state}). Further let $\norm{\overline{R}_\star}_{k_R} \le B_R$ and $\norm{\overline{P}_\star}_{k_P} \le B_P$. Then, for any $0 < \delta \le 1$, the following holds:
\beqa
\prob{\forall z \in \cZ, \forall l \ge 1, \abs{\overline{R}_\star(z)-\mu_{R,l-1}(z)}\le \beta_{R,l}\;\sigma_{R,l-1}(z)}&\ge& 1-\delta/3,\label{eqn:concentartion-agnostic-reward}\\
\prob{\forall z \in \cZ, \forall l \ge 1, \norm{\overline{P}_\star(z)-\mu_{P,l-1}(z)}_2 \le \beta_{P,l}\norm{\sigma_{P,l-1}(z)}_2} &\ge& 1-\delta/3,\label{eqn:concentartion-agnostic-state}
\eeqa
\end{mylemma}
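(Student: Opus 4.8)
The plan is to obtain both statements as direct consequences of the single-function RKHS concentration inequality of Lemma~\ref{lem:true-function-bound}, invoked once for the reward and once (in a lifted, scalarized form) for the transitions, each at confidence level $\delta/3$ so that $\ln(1/\delta')=\ln(3/\delta)$. For~\eqref{eqn:concentartion-agnostic-reward}, I would flatten the doubly-indexed data $\{(z_{l,h},r_{l,h})\}$ into a single stream ordered lexicographically by $(l,h)$, so that after episode $l-1$ there are exactly $(l-1)H$ observations and $\mu_{R,l-1},\sigma_{R,l-1}$ coincide with the posterior quantities $\mu_{t-1},\sigma_{t-1}$ of Lemma~\ref{lem:true-function-bound} at $t-1=(l-1)H$. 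The predictability hypothesis holds because $\pi_l$ is fixed given $\cH_{l-1}$ and each $z_{l,h}$ is measurable with respect to the past, while the noise $\epsilon_{R,l,h}$ is $\sigma_R$-sub-Gaussian conditioned on that past by~\eqref{eqn:noise-reward}. Applying Lemma~\ref{lem:true-function-bound} with $f=\overline{R}_\star$, $\norm{f}_{k_R}\le B_R$, $R=\sigma_R$, $\lambda=\lambda_R=H$ yields, uniformly over $l$ and $z$, a deviation bound featuring the sum $\tfrac{1}{2}\sum_{s=1}^{(l-1)H}\ln(1+H^{-1}\sigma_{s-1}^2(x_s))$; upper-bounding this by $\gamma_{(l-1)H}(k_R,\cZ)$ via~\eqref{eqn:info-gain-one} enlarges the prefactor to exactly $\beta_{R,l}$, which only weakens the bound, giving~\eqref{eqn:concentartion-agnostic-reward}.

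For~\eqref{eqn:concentartion-agnostic-state}, the key idea is to treat $\overline{P}_\star$ as a \emph{scalar} member of $\cH_{k_P}(\tilde{\cZ})$ on the lifted domain $\tilde{\cZ}=\cZ\times\{1,\ldots,m\}$, so that each vector transition $s_{l,h+1}=\overline{P}_\star(z_{l,h})+\epsilon_{P,l,h}$ contributes $m$ scalar observations $\{((z_{l,h},i),\,s_{l,h+1}(i))\}_{i=1}^m$. Ordering these lexicographically by $(l,h,i)$ (with $i$ innermost, matching the stacking of $S_l$) produces, after episode $l-1$, a stream of $m(l-1)H$ scalar observations whose posterior mean and variance are precisely $\mu_{P,l-1}(\cdot,i)$ and $\sigma_{P,l-1}(\cdot,i)$. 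I would apply Lemma~\ref{lem:true-function-bound} with $\norm{\overline{P}_\star}_{k_P}\le B_P$, $R=\sigma_P$, $\lambda=\lambda_P=mH$ and failure probability $\delta/3$, bounding the information-gain sum over the $m(l-1)H$ lifted points by $\gamma_{m(l-1)H}(k_P,\tilde{\cZ})$; this yields exactly the prefactor $\beta_{P,l}$ and, simultaneously for every component $i$ and all $z,l$, the pointwise bound $\abs{\overline{P}_\star(z,i)-\mu_{P,l-1}(z,i)}\le \beta_{P,l}\,\sigma_{P,l-1}(z,i)$. Squaring and summing over $i=1,\ldots,m$ then gives $\norm{\overline{P}_\star(z)-\mu_{P,l-1}(z)}_2^2\le \beta_{P,l}^2\sum_{i=1}^m\sigma_{P,l-1}(z,i)^2=\beta_{P,l}^2\norm{\sigma_{P,l-1}(z)}_2^2$, which is~\eqref{eqn:concentartion-agnostic-state} after taking square roots.

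The main obstacle is verifying that the lifted scalar stream for the transitions genuinely satisfies the predictability and conditional sub-Gaussianity hypotheses of Lemma~\ref{lem:true-function-bound}. The delicate point is that consecutive lifted observations $(z_{l,h},i)$ and $(z_{l,h},i{+}1)$ share the same predictor $z_{l,h}$ but carry the noise coordinates $\epsilon_{P,l,h}(i)$ and $\epsilon_{P,l,h}(i{+}1)$, so one must check that each coordinate is still $\sigma_P$-sub-Gaussian with respect to the lifted filtration, which now also contains the \emph{earlier coordinates of the same transition}. This is precisely where the conditional component-wise independence in assumption~(A3),~\eqref{eqn:noise-state}, is used: it ensures $\epsilon_{P,l,h}(i)$ remains sub-Gaussian given $\cF_{P,l,h-1}$ together with $\epsilon_{P,l,h}(1),\ldots,\epsilon_{P,l,h}(i-1)$, so the augmented sequence is a legitimate predictable, conditionally sub-Gaussian process and Lemma~\ref{lem:true-function-bound} applies verbatim. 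A secondary point to handle carefully is that $\mu_{P,l-1},\sigma_{P,l-1}$ in~\eqref{eqn:post-state-agnostic} use the single shared regularizer $mH$ across all $m$ coordinates, which is exactly the $\lambda=mH$ required above for the prefactors to match.
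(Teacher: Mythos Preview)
Your proposal is correct and follows essentially the same approach as the paper: both apply Lemma~\ref{lem:true-function-bound} once to the flattened reward stream (with $\lambda=H$) and once to the lifted scalar transition stream on $\tilde{\cZ}$ (with $\lambda=mH$), then replace the running information-gain sum by the corresponding $\gamma$-term via~\eqref{eqn:info-gain-one} to obtain $\beta_{R,l}$ and $\beta_{P,l}$, and finally aggregate the componentwise transition bounds into the $\ell_2$ bound. If anything, you are more explicit than the paper about the one delicate point---verifying that the coordinate-by-coordinate transition noise remains conditionally $\sigma_P$-sub-Gaussian with respect to the \emph{lifted} filtration, which indeed hinges on the ``conditionally component-wise independent'' part of assumption~(A3).
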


\begin{proof}
First fix any $0 < \delta \le 1$. Now for all $l \ge 1$ and $1 \le h \le H+1$, let us define  $\cZ_{l,h-1}=\lbrace z_{j,k} \rbrace_{1 \le j \le l-1,1 \le k \le H} \cup \lbrace z_{l,k} \rbrace_{1 \le k \le h-1}=\lbrace z_{1,1},\ldots,z_{1,H},\ldots,z_{l,1},\ldots,z_{l,h-1}\rbrace$ as the set of all state-action pairs available till period $h-1$ of episode $l$. Further, let $R_{l,h-1} = [r_{1,1},\ldots,r_{1,H},\ldots,r_{l,1},\ldots,r_{l,h-1}]^T$ denotes the vector of rewards observed at $\cZ_{l,h-1}$. See that $\cZ_{l,0} = \cZ_{l-1,H}$ and $R_{l,0}= R_{l-1,H}$ for all $l \ge 2$. Also $R_{1,0}= 0$ and $\cZ_{1,0}=\emptyset$. Now we define, for all $z \in \cZ$, $l \ge 1$ and $1 \le h \le H+1$, the following:
\beq
\label{eqn:post-reward-def}
\begin{aligned}
\mu_{R,l,h-1}(z) &= k_{R,l,h-1}(z)^T(K_{R,l,h-1} + H I)^{-1}R_{l,h-1},\\
\sigma^2_{R,l,h-1}(z) &= k_R(z,z) - k_{R,l,h-1}(z)^T(K_{R,l,h-1} + HI)^{-1} k_{R,l,h-1}(z),
\end{aligned}
\eeq
where $k_{R,l,h-1}(z) = [k_R(z_{1,1},z),\ldots,k_R(z_{1,H},z),\ldots,k_R(z_{l,1},z),\ldots,k_R(z_{l,h-1},z)]^T$ is the vector kernel evaluations between $z$ and elements of the set $\cZ_{l,h-1}$, $K_{R,l,h-1} = [k_R(z,z')]_{z,z' \in \cZ_{l,h-1}}$ is the kernel matrix computed at $\cZ_{l,h-1}$. See that $\mu_{R,l,0}(z)= \mu_{R,l-1,H}(z)$, and $\sigma_{R,l,0}(z)= \sigma_{R,l-1,H}(z)$ for all $l \ge 2$ and $z \in \cZ$. Also
$\mu_{R,1,0}(z)=0$ and $\sigma_{R,1,0}=k_R(z,z)$ for all $z \in \cZ$.

At the state-action pair $z_{l,h}$, the reward observed is  $r_{l,h}=\overline{R}_\star(z_{l,h})+\epsilon_{R,l,h}$. Here, by our hypothesis, the mean reward function $\overline{R}_\star \in \cH_{k_R}(\cZ)$ and the noise sequence $\lbrace \epsilon_{R,l,h}\rbrace_{l\ge 1,1\le h \le H}$ is conditionally $\sigma_R$-sub-Gaussian.
Now Lemma \ref{lem:true-function-bound} implies that, with probability at least $1-\delta/3$, uniformly over all $z \in \cZ$, $l \ge 1$ and $1 \le h \le H$,
\beqn
\abs{\overline{R}_\star(z)-\mu_{R,l,h-1}(z)}\le \Bigg(\norm{\overline{R}_\star}_{k_R} + \dfrac{\sigma_R}{\sqrt{H}}\sqrt{2\Big(\ln(3/\delta)+\dfrac{1}{2}\sum_{(j,k)=(1,1)}^{(l,h-1)}\ln\big(1+H^{-1}\sigma_{R,j,k-1}^2(z_{j,k})\big)\Big)}\Bigg)\sigma_{R,l,h-1}(z).
\eeqn
Again, from Lemma \ref{lem:sum-of-sd}, we have
\beqn
\frac{1}{2}\sum_{(j,k)=(1,1)}^{(l,h-1)}\ln\big(1+H^{-1}\sigma_{R,j,k-1}^2(z_{j,k})\big) \le \gamma_{(l-1)H+h-1}(k_R,\cZ),
\eeqn
where $\gamma_{t}(k_R,\cZ)$ denotes the maximum information gain about an $f \sim GP_{\cZ}(0,k_R)$ after $t$ noisy observations with iid Gaussian noise $\cN(0,H)$. Therefore, with probability at least $1-\delta/3$, uniformly over all $z \in \cZ$, $l \ge 1$ and $1 \le h \le H$,
\beq
\label{eqn:reward-bound}
\abs{\overline{R}_\star(z)-\mu_{R,l,h-1}(z)}\le \Big(B_R + \dfrac{\sigma_R}{\sqrt{H}}\sqrt{2\big(\ln(3/\delta)+\gamma_{(l-1)H+h-1}(k_R,\cZ)}\big)\Big)\sigma_{R,l,h-1}(z),
\eeq
since by our hypothesis $\norm{\overline{R}_\star}_{k_R} \le B_R$.
Now see that $\mu_{R,l,0}=\mu_{R,l-1}$ and $\sigma_{R,l,0}=\sigma_{R,l-1}$ and $\beta_{R,l}=B_R + \dfrac{\sigma_R}{\sqrt{H}}\sqrt{2\big(\ln(3/\delta)+\gamma_{(l-1)H}(k_R,\lambda_R,\cZ)\big)}$  for every $l \ge 1$. Hence (\ref{eqn:concentartion-agnostic-reward}) follows by using (\ref{eqn:reward-bound}) with $h=1$.

Further for all $l \ge 1$ and $1 \le h \le H+1$, let $S_{l,h} = [s_{1,2}^T,\ldots,s_{1,H+1}^T,\ldots,s_{l,2}^T,\ldots,s_{l,h}^T]^T$ denotes the vector of state transitions at $\cZ_{l,h-1}=\lbrace z_{1,1},\ldots,z_{1,H},\ldots,z_{l,1},\ldots,z_{l,h-1}\rbrace$, where every state $s_{j,h}=[s_{j,h}(1),\ldots,s_{j,h}(m)]^T, 1 \le j \le l, 1\le h \le H+1$, is an $m$ - dimensional vector.  
Further for all $l \ge 1$, $1 \le h \le H$ and $1 \le b \le m+1$, define the following set:
\beqan
\tilde{\cZ}_{l,h,b-1}&=&\big\lbrace (z_{j,k},i) \big\rbrace_{1 \le j \le l-1,1 \le k \le H, 1 \le i \le m} \cup \big\lbrace (z_{l,k},i) \big\rbrace_{1 \le k \le h-1, 1 \le i \le m} \cup \big\lbrace (z_{l,h},i) \big\rbrace_{1 \le i \le b-1}\\
&=& \big \lbrace \cZ_{l,h-1}\times \lbrace 1,\ldots,m \rbrace \big \rbrace \cup \big \lbrace \lbrace z_{l,h} \rbrace \times \lbrace 1,\ldots,b-1 \rbrace \big \rbrace\\
&=&\big\lbrace  (z_{1,1},1),\ldots,(z_{1,1},m),\ldots,(z_{l,h-1},1),\ldots,(z_{l,h-1},m),(z_{l,h},1),\ldots,(z_{l,h},b-1)\big\rbrace,
\eeqan
and the following vector
\beqan
S_{l,h,b-1}&=&[S_{l,h}^T,s_{l,h+1}(1),\ldots,s_{l,h+1}(b-1)]^T\\
&=& [s_{1,2}^T,\ldots,s_{1,H+1}^T,\ldots,s_{l,2}^T,\ldots,s_{l,h}^T,s_{l,h+1}(1),\ldots,s_{l,h+1}(b-1)]^T\\
&=&[s_{1,2}(1),\ldots,s_{1,2}(m),\ldots,s_{l,h}(1),\ldots,s_{l,h}(m),s_{l,h+1}(1),\ldots,s_{l,h+1}(b-1)]^T.
\eeqan 
 See that $\cZ_{l,h,0}= \cZ_{l,h-1,m}$ and $S_{l,h,0}= S_{l,h-1,m}$ for all $l \ge 1$ and $2 \le h \le H$. Further $\cZ_{l,1,0}= \cZ_{l-1,H,m}$ and $S_{l,1,0}= S_{l-1,H,m}$ for all $l \ge 2$. Also $\cZ_{1,1,0}= 0$ and $S_{1,1,0}= 0$. Now we define, for all $z \in \cZ$, $1 \le i \le m$, $l \ge 1$ and $1 \le h \le H+1$, the following:
\beq
\label{eqn:post-state-def}
\begin{aligned}
\mu_{P,l,h,b-1}(z,i) &= k_{P,l,h,b-1}(z,i)^T(K_{P,l,h,b-1} + mH I)^{-1}S_{l,h,b-1},\\ 
\sigma^2_{P,l,h,b-1}(z,i) &= k_P\big((z,i),(z,i)\big) - k_{P,l,h,b-1}(z,i)^T(K_{P,l,h,b-1} + mH I)^{-1} k_{P,l,h,b-1}(z,i),
\end{aligned}
\eeq
where $k_{P,l,h,b-1}(z,i) = \Big[k_P\big((z_{1,1},1),(z,i)\big),\ldots,k_P\big((z_{l,h},b-1),(z,i)\big)\Big]^T$ is the vector of kernel evaluations between $(z,i)$ and elements of the set $\tilde{\cZ}_{l,h,b-1}$, $K_{P,l,h,b-1} = \big[k_P(z,z')\big]_{z,z' \in \tilde{\cZ}_{l,h,b-1}}$ is the kernel matrix computed at $\tilde{\cZ}_{l,h,b-1}$. 
See that $\mu_{P,l,h,0}= \mu_{P,l,h-1,m}$ and $\sigma_{P,l,h,0}= \sigma_{P,l,h-1,m}$ for all $l \ge 1$ and $2 \le h \le H$. Further, 
$\mu_{P,l,1,0}=\mu_{P,l-1,H,m}$ and $\sigma_{P,l,1,0}=\sigma_{P,l-1,H,m}$ for all $l \ge 2$.
Also
$\mu_{P,1,1,0}(z,i)=0$ and $\sigma_{P,1,1,0}=k_P\big((z,i),(z,i)\big)$ for all $z \in \cZ$ and $1 \le i \le m$.

At the state-action pair $z_{l,h}$, the MDP transitions to the state $s_{l,h+1}$, where 
$s_{l,h+1}(i)=\overline{P}_\star(z_{l,h},i)+\epsilon_{P,l,h}(i)$, $1 \le i \le m$. Thus, we can view $s_{l,h+1}(i)$ as a noisy observation of $\overline{P}_\star$ at the query $(z_{l,h},i) \in \tilde{\cZ}$. Here, by our hypothesis, the mean transition function $\overline{P}_\star \in \cH_{\tilde{k}_P}(\tilde{\cZ})$ and the noise sequence $\lbrace \epsilon_{P,l,h}(i)\rbrace_{l\ge 1,1\le h \le H,1\le i \le m}$ is conditionally $\sigma_P$-sub-Gaussian. Now Lemma \ref{lem:true-function-bound} implies that, with probability at least $1-\delta/3$, uniformly over all $z \in \cZ$, $1 \le i \le m$, $l \ge 1$, $1 \le h \le H$ and $1 \le b \le m$:
\footnotesize
\beqn
\abs{\overline{P}_\star(z,i)-\mu_{P,l,h,b-1}(z,i)}\le \Bigg(\norm{\overline{P}_\star}_{k_P} + \dfrac{\sigma_P}{\sqrt{mH}}\sqrt{2\Big(\ln(3/\delta)+\dfrac{1}{2}\sum_{(j,k,q)=(1,1,1)}^{(l,h,b-1)}\ln\Big(1+\dfrac{\sigma_{P,j,k,q-1}^2(z_{j,k},q)}{mH}\Big)\Big)}\Bigg)\\
\sigma_{P,l,h,b-1}(z,i).
\eeqn
\normalsize
Again, from Lemma \ref{lem:sum-of-sd}, we have
\beqn
\frac{1}{2}\sum_{(j,k,q)=(1,1,1)}^{(l,h,b-1)}\ln\Big(1+\dfrac{\sigma_{P,j,k,q-1}^2(z_{j,k},q)}{mH}\Big) \le \gamma_{m(l-1)H+m(h-1)+b-1}(k_P,\tilde{\cZ}),
\eeqn
where $\gamma_{t}(k_P,\tilde{\cZ})$ denotes the maximum information gain about an $f \sim GP_{\tilde{\cZ}}(0,k_P)$ after $t$ noisy observations with iid Gaussian noise $\cN(0,mH)$.
Therefore, with probability at least $1-\delta/3$, uniformly over all $z \in \cZ$, $1 \le i \le m$, $l \ge 1$, $1 \le h \le H$ and $1 \le b \le m$,
\beq
\label{eqn:state-bound}
\abs{\overline{P}_\star(z,i)-\mu_{P,l,h,b-1}(z,i)}\le \Big(B_P + \dfrac{\sigma_P}{\sqrt{mH}}\sqrt{2\big(\ln(3/\delta)+\gamma_{m(l-1)H+m(h-1)+b-1}(k_P,\tilde{\cZ})\big)}\Big)\sigma_{P,l,h,b-1}(z,i),
\eeq
since by our hypothesis $\norm{\overline{P}_\star}_{k_P} \le B_P$.
Now see that $\mu_{P,l,1,0}=\mu_{P,l-1}$ and $\sigma_{P,l,1,0}=\sigma_{P,l-1}$ for every $l \ge 1$. 
Hence using (\ref{eqn:state-bound}) for $h=1$ and $b=1$, see that, with probability at least $1-\delta/3$, uniformly over all $z \in \cZ$, $1 \le i \le m$, and $l \ge 1$, 
\beqn
\abs{\overline{P}_\star(z,i)-\mu_{P,l-1}(z,i)}\le \Big(B_P + \dfrac{\sigma_P}{\sqrt{mH}}\sqrt{2\big(\ln(3/\delta)+\gamma_{m(l-1)H}(k_P,\tilde{\cZ})\big)}\Big)\sigma_{P,l-1}(z,i).
\eeqn
Now recall that  $\beta_{P,l} = B_P + \dfrac{\sigma_P}{\sqrt{mH}}\sqrt{2\big(\ln(3/\delta)+\gamma_{m(l-1)H}(k_P,\lambda_P,\tilde{\cZ})\big)}$, $\overline{P}_\star(z)=[\overline{P}_\star(z,1),\ldots,\overline{P}_\star(z,m)]^T$, $\mu_{P,l-1}(z)=[\mu_{P,l-1}(z,1),\ldots,\mu_{P,l-1}(z,m)]^T$ and $\sigma_{P,l-1}(z)= [\sigma_{P,l-1}(z,1),\ldots,\sigma_{P,l-1}(z,m)]^T$. Then, with probability at least $1-\delta/3$, uniformly over all $z \in \cZ$ and $l \ge 1$,
\beqn
\norm{\overline{P}_\star(z)-\mu_{P,l-1}(z)}_2 \le \sqrt{\sum_{i=1}^{m}\beta^2_{P,l}\sigma^2_{P,l-1}(z,i)}= \beta_{P,l}\sqrt{\sum_{i=1}^{m}\sigma^2_{P,l-1}(z,i)}=\beta_{P,l}\norm{\sigma_{P,l-1}(z)}_2,
\eeqn
and hence (\ref{eqn:concentartion-agnostic-state}) follows.
\end{proof}

\begin{mylemma}[Sum of predictive variances upper bounded by Maximum Information Gain]
\label{lem:predictive-variance-sum}
Let $\sigma_{R,l}$ and $\sigma_{P,l}$ be defined as in \ref{eqn:post-reward-agnostic} and \ref{eqn:post-state-agnostic} respectively and let the kernels $k_R$ and $k_P$ satisfy $k_R(z,z) \le 1$ and $k_P((z,i),(z,i)) \le 1$ for all $z \in \cZ$ and $1 \le i \le m$. Then for any $\tau \ge 1$,
\beqa
\label{eqn:sum-of-sd-reward}
\sum_{l=1}^{\tau}\sum_{h=1}^{H} \sigma_{R,l-1}(z_{l,h}) &\le & \sqrt{2 e \tau H^2\gamma_{\tau H}(k_R,\cZ)}\\
\label{eqn:sum-of-sd-state}
\sum_{l=1}^{\tau}\sum_{h=1}^{H} \norm{\sigma_{P,l-1}(z_{l,h})}_2 &\le & \sqrt{2em\tau H^2\gamma_{m\tau H}(k_P,\tilde{\cZ})},
\eeqa
where $\gamma_{t}(k_R,\cZ)$ denotes the maximum information gain about an $f \sim GP_{\cZ}(0,k_R)$ after $t$ noisy observations with iid Gaussian noise $\cN(0,H)$ and $\gamma_{t}(k_P,\tilde{\cZ})$ denotes the maximum information gain about an $f \sim GP_{\tilde{\cZ}}(0,k_P)$ after $t$ noisy observations with iid Gaussian noise $\cN(0,mH)$.
\end{mylemma}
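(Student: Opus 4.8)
The plan is to reduce both inequalities to the standard ``sum of sequential predictive variances'' bound (Lemma \ref{lem:sum-of-sd-two}), after correcting for the fact that GP-UCRL conditions on a stale posterior: the variance $\sigma_{R,l-1}(z_{l,h})$ it uses at period $h$ of episode $l$ is computed at the \emph{start} of the episode, and hence ignores the up-to $h-1$ observations already gathered within that episode. Relabel the $\tau H$ query points $\lbrace z_{l,h}\rbrace$ by a single sequential index $s \bydef (l-1)H + h \in \lbrace 1,\ldots,\tau H\rbrace$, writing $z_s \bydef z_{l,h}$ and letting $\sigma_{R,s-1}(\cdot)$ denote the posterior standard deviation conditioned on all $s-1$ preceding observations (this is $\sigma_{R,l,h-1}$ in the notation of Lemma \ref{lem:kernel-least-squares}). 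Since $\sigma_{R,l-1}(z_{l,h})$ conditions on only the first $(l-1)H$ of these, it is inflated relative to the fully sequential $\sigma_{R,s-1}(z_{l,h})$ by at most $h-1\le H-1$ ``missing'' points.

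First I would quantify this inflation via the per-step inequality $\sigma^2_{s-1}(x) \le (1+\lambda^{-1})\sigma^2_s(x)$ from Lemma \ref{lem:sum-of-sd-two}, applied with $\lambda_R=H$ and iterated $h-1$ times, giving $\sigma^2_{R,l-1}(z_{l,h}) \le (1+H^{-1})^{h-1}\sigma^2_{R,s-1}(z_{l,h})$. Because $(1+H^{-1})^{H-1}\le e$, the inflation factor is at most $e$, which is exactly the source of the constant $e$ in the statement. The bounded-variance hypothesis $k_R(z,z)\le 1$ gives $\sigma^2_{R,s-1}(z_{l,h})\in[0,1]$, so I can invoke the elementary inequality $y\le \ln(1+y/H)/\ln(1+H^{-1})$ (valid for $y\in[0,1]$, since $\alpha\mapsto \alpha/\ln(1+\alpha)$ is nondecreasing) to convert squared variances into logarithmic increments.

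Combining the two steps yields
\beqn
\sum_{l=1}^{\tau}\sum_{h=1}^{H}\sigma^2_{R,l-1}(z_{l,h}) \le \frac{(1+H^{-1})^{H-1}}{\ln(1+H^{-1})}\sum_{s=1}^{\tau H}\ln\!\big(1+H^{-1}\sigma^2_{R,s-1}(z_s)\big) \le 2eH\,\gamma_{\tau H}(k_R,\cZ),
\eeqn
where the logarithmic sum is controlled by the information-gain identity (\ref{eqn:info-gain-one}) of Lemma \ref{lem:sum-of-sd} (contributing the factor $2\gamma_{\tau H}$), and I use the numerical fact $(1+H^{-1})^{H-1}/\ln(1+H^{-1}) \le eH$ for all $H\ge 1$. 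A single application of Cauchy--Schwarz over the $\tau H$ summands, namely $\sum_{l,h}\sigma_{R,l-1}(z_{l,h}) \le \sqrt{\tau H}\,\big(\sum_{l,h}\sigma^2_{R,l-1}(z_{l,h})\big)^{1/2}$, then gives (\ref{eqn:sum-of-sd-reward}).

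The transition bound (\ref{eqn:sum-of-sd-state}) follows by the same argument on the augmented domain $\tilde{\cZ}=\cZ\times\lbrace 1,\ldots,m\rbrace$, where each period contributes $m$ scalar observations, so there are $m\tau H$ sequential observations in total and the regularization is $\lambda_P=mH$. Writing $\norm{\sigma_{P,l-1}(z_{l,h})}_2^2 = \sum_{i=1}^m \sigma^2_{P,l-1}(z_{l,h},i)$, the stale variance at coordinate $i$ of period $h$ now ignores at most $m(h-1)+(i-1)\le mH-1$ missing augmented observations, so its inflation factor is $(1+(mH)^{-1})^{mH-1}\le e$; repeating the above with $\lambda_P=mH$ and identity (\ref{eqn:info-gain-one}) on $\tilde{\cZ}$ gives $\sum_{l,h}\norm{\sigma_{P,l-1}(z_{l,h})}_2^2 \le 2emH\,\gamma_{m\tau H}(k_P,\tilde{\cZ})$, and Cauchy--Schwarz over the $\tau H$ summands finishes the proof. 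The main obstacle throughout is the bookkeeping for this stale/batched conditioning: one must arrange the sequential relabeling so that the telescoping information-gain identity applies to the \emph{fully sequential} variances, and then fold the per-episode inflation cleanly into the single constant $e$ through the inequality $(1+\lambda^{-1})^{\lambda-1}/\ln(1+\lambda^{-1})\le e\lambda$, rather than letting it accumulate as a diverging multiplicative factor.
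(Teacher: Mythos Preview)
Your proposal is correct and follows essentially the same approach as the paper: iterate the per-step ratio bound $\sigma^2_{s-1}\le (1+\lambda^{-1})\sigma^2_s$ from Lemma~\ref{lem:sum-of-sd-two} to absorb the within-episode staleness into a factor $(1+\lambda^{-1})^{H-1}\le e$ (respectively $(1+(mH)^{-1})^{mH-1}\le e$), bound the fully sequential sum of squared variances by the information-gain identity (\ref{eqn:info-gain-one}), and finish with Cauchy--Schwarz. The only cosmetic difference is that you unpack the logarithmic comparison $y\le \ln(1+y/\lambda)/\ln(1+\lambda^{-1})$ explicitly, whereas the paper cites its packaged consequence (\ref{eqn:info-gain-three}); the resulting numerical inequality $(1+\lambda^{-1})^{\lambda-1}/\ln(1+\lambda^{-1})\le e\lambda$ is the same as the paper's $(1+\lambda^{-1})^{\lambda-1}(2\lambda+1)\le 2e\lambda$.
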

\begin{proof}
Note that $\sigma_{R,l-1}(z) = \sigma_{R,l,0}(z)$, where $\sigma_{R,l,0}(z)$ is defined in (\ref{eqn:post-reward-def}). Now from (\ref{eqn:info-gain-two}), see that $\sigma^2_{R,l,0}(z) \le (1+1/H)\sigma^2_{R,l,1}(z) \le (1+1/H)^2\sigma^2_{R,l,2}(z)\le \cdots \le (1+1/H)^{H-1}\sigma^2_{R,l,H-1}(z)$, i.e. $\sigma^2_{R,l,0}(z) \le (1+1/H)^{h-1}\sigma^2_{R,l,h-1}(z)$ for all $z \in \cZ$ and $1 \le h \le H$. This implies 
\beqa
\sum_{l=1}^{\tau}\sum_{h=1}^{H} \sigma^2_{R,l-1}(z_{l,h})=\sum_{l=1}^{\tau}\sum_{h=1}^{H} \sigma^2_{R,l,0}(z_{l,h}) &\le& \sum_{l=1}^{\tau}\sum_{h=1}^{H} (1+1/H)^{h-1} \sigma^2_{R,l,h-1}(z_{l,h})\nonumber\\
&\le& (1+1/H)^{H-1}\sum_{l=1}^{\tau}\sum_{h=1}^{H}\sigma^2_{R,l,h-1}(z_{l,h})\nonumber\\
&\le& (1+1/H)^{H-1}(2H+1)\gamma_{\tau H}(k_R,\cZ)\nonumber\\
& \le & 2eH\gamma_{\tau H}(k_R,\cZ)\label{eqn:comb-one},
\eeqa
where the second last inequality follows from  (\ref{eqn:info-gain-three}) and last inequality is due to the fact that $(1+1/\alpha)^{\alpha} \le e$ and $(1+1/\alpha)^{-1}(2\alpha+1) \le 2\alpha$ for all $\alpha > 0$. Further by Cauchy-Schwartz inequality
\beq
\label{eqn:comb-two}
\sum_{l=1}^{\tau}\sum_{h=1}^{H} \sigma_{R,l-1}(z_{l,h}) \le \sqrt{\tau H \sum_{l=1}^{\tau}\sum_{h=1}^{H} \sigma^2_{R,l-1}(z_{l,h})}.
\eeq
Now (\ref{eqn:sum-of-sd-reward}) follows by combining (\ref{eqn:comb-one}) and (\ref{eqn:comb-two}).

Similarly Note that $\sigma_{P,l-1}(z,i) = \sigma_{P,l,1,0}(z,i)$, where $\sigma_{P,l,1,0}(z,i)$ is defined in (\ref{eqn:post-state-def}). Now from (\ref{eqn:info-gain-two}), see that
$\sigma^2_{P,l,1,0}(z,i)\le (1+1/mH)^{m(h-1)+b-1} \sigma^2_{P,l,h,b-1}(z,i)$ for all $z \in \cZ$, $1 \le i \le m$, $1 \le h \le H$ and $1 \le b \le m$. This implies
\beqa
\sum_{l=1}^{\tau}\sum_{h=1}^{H}\sum_{b=1}^{m} \sigma^2_{P,l-1}(z_{l,h},b)=\sum_{l=1}^{\tau}\sum_{h=1}^{H}\sum_{b=1}^{m} \sigma^2_{P,l,1,0}(z_{l,h},b) &\le& \sum_{l=1}^{\tau}\sum_{h=1}^{H}\sum_{b=1}^{m} (1+1/mH)^{m(h-1)+b-1} \sigma^2_{P,l,h,b-1}(z_{l,h},b)\nonumber\\
&\le& (1+1/mH)^{m(H-1)+m-1}\sum_{l=1}^{\tau}\sum_{h=1}^{H}\sum_{b=1}^{m}\sigma^2_{P,l,h,b-1}(z_{l,h},b)\nonumber\\
&\le& (1+1/mH)^{mH-1}(2mH+1)\gamma_{m\tau H}(k_P,\tilde{\cZ})\nonumber\\
&\le& 2emH\gamma_{m\tau H}(k_P,\tilde{\cZ})\label{eqn:comb-three},
\eeqa
where the second last inequality follows from  (\ref{eqn:info-gain-three}) and last inequality is due to the fact that $(1+1/\alpha)^{\alpha} \le e$ and $(1+1/\alpha)^{-1}(2\alpha+1) \le 2\alpha$ for all $\alpha > 0$. Further by Cauchy-Schwartz inequality
\beq
\label{eqn:comb-four}
\sum_{l=1}^{\tau}\sum_{h=1}^{H} \norm{\sigma_{P,l-1}(z_{l,h})}_2 \le \sqrt{\tau H \sum_{l=1}^{\tau}\sum_{h=1}^{H} \norm{\sigma_{P,l-1}(z_{l,h})}_2^2} = \sqrt{\tau H \sum_{l=1}^{\tau}\sum_{h=1}^{H}\sum_{b=1}^{m} \sigma^2_{P,l-1}(z_{l,h},b)}.
\eeq
Now (\ref{eqn:sum-of-sd-state}) follows by combining (\ref{eqn:comb-three}) and (\ref{eqn:comb-four}).
\end{proof}

\subsubsection{Frequentist Regret Bound for GP-UCRL in Kernelized MDPs: Proof of Theorem \ref{thm:regret-bound-RKHS}}

Note that at every episode $l$, GP-UCRL (Algorithm \ref{algo:GP-UCRL}) selects the policy $\pi_l$ such that
\beq
\label{eqn:GP-UCRL-rule}
V^{M_l}_{\pi_l,1}(s_{l,1})=\max_{\pi}\max_{M \in \cM_l}V^{M}_{\pi,1}(s_{l,1}),
\eeq
where $s_{l,1}$ is the initial state, $\cM_l$ is the family of MDPs constructed by GP-UCRL and $M_l$ is the most optimistic realization from $\cM_l$. 
Further see that the mean reward function $R_\star$ of the unknown MDP $M_\star$ lies in the RKHS $\cH_{k_R}(\cZ)$. Thus for all $z \in \cZ$,
\beq
\label{eqn:rkhs-norm-bound}
\abs{\overline{R}_\star(z)}=\abs{\inner{\overline{R}_\star}{k_R(z,\cdot)}_{k_R}} \le \norm{\overline{R}_\star}_{k_R}k_R(z,z) \le B_R, 
\eeq
where the first equality is due to the reproducing property of RKHS, the first inequality is the Cauchy-Schwartz inequality and the final inequality is due to hypothesis that $\norm{\overline{R}_\star}_{k_R} \le B_R$ and $k_R(z,z) \le 1$ for all $z \in \cZ$. Now, (\ref{eqn:rkhs-norm-bound}), Lemma \ref{lem:common} and Lemma \ref{lem:GP-UCRL} together imply that for any $0 < \delta \le 1$ and $\tau \ge 1$ , with probability at least $1-\delta/3$,
\beqa
\sum_{l=1}^{\tau}\big(V^{M_l}_{\pi_l,1}(s_{l,1})-V^{M_\star}_{\pi_l,1}(s_{l,1})\big) \le \sum_{l=1}^{\tau}\sum_{h=1}^{H}\Big( \abs{\overline{R}_{M_l}(z_{l,h}) - \overline{R}_\star(z_{l,h})}+L_{M_l} \norm{\overline{P}_{M_l}(z_{l,h}) - \overline{P}_\star(z_{l,h})}_2\Big)\nonumber\\+ (LD+2B_RH) \sqrt{2\tau H \ln(3/\delta)}\label{eqn:combine-rkhs-one}.
\eeqa
Now for each $l \ge 1$, we define the following events:
\beqan
E_{R,l} &\bydef& \big\lbrace \forall z \in \cZ, \abs{\overline{R}_\star(z)-\mu_{R,l-1}(z)}\le \beta_{R,l}\sigma_{R,l-1}(z)\big\rbrace,\\
E_{P,l} &\bydef& \big\lbrace \forall z \in \cZ, \norm{\overline{P}_\star(z)-\mu_{P,l-1}(z)}_2\le \beta_{P,l}\norm{\sigma_{P,l-1}(z)}_2\big\rbrace.
\eeqan 
By construction of the set of MDPs $\cM_l$ in Algorithm \ref{algo:GP-UCRL}, it follows that when both the events $E_{R,l}$ and $E_{P,l}$ hold for all $l \ge 1$, the unknown MDP $M_\star$ lies in $\cM_l$ for all $l \ge 1$. Thus (\ref{eqn:GP-UCRL-rule}) implies $V^{M_l}_{\pi_l,1}(s_{l,1}) \ge V^{M_\star}_{\pi_\star,1}(s_{l,1})$ for all $l \ge 1$. This in turn implies, for every episode $l \ge 1$,
\beq
\label{eqn:GP-UCRL-rule-imply}
V^{M_\star}_{\pi_\star,1}(s_{l,1})-V^{M_\star}_{\pi_l,1}(s_{l,1})\le V^{M_l}_{\pi_l,1}(s_{l,1})-V^{M_\star}_{\pi_l,1}(s_{l,1}).
\eeq
Further when $E_{R,l}$ holds for all $l \ge 1$, then
\beq
\label{eqn:combine-rkhs-two}
\abs{\overline{R}_{M_l}(z_{l,h}) - \overline{R}_\star(z_{l,h})} \le \abs{\overline{R}_{M_l}(z_{l,h})-\mu_{R,l-1}(z_{l,h})}+\abs{\overline{R}_\star(z_{l,h})-\mu_{R,l-1}(z_{l,h})}
\le 2\beta_{R,l}\;\sigma_{R,l-1}(z_{l,h}),
\eeq
since the mean reward function $\overline{R}_{M_l}$ lies in the confidence set $\cC_{R,l}$ (\ref{eqn:confidence-set-agnostic}). Similarly when $E_{P,l}$ holds for all $l \ge 1$,
\beq
\label{eqn:combine-rkhs-three}
\norm{\overline{P}_{M_l}(z_{l,h}) - \overline{P}_\star(z_{l,h})}_2 \le \norm{\overline{P}_{M_l}(z_{l,h})-\mu_{P,l-1}(z_{l,h})}_2+\norm{\overline{P}_\star(z_{l,h})-\mu_{P,l-1}(z_{l,h})}_2
\le 2 \beta_{P,l}\norm{\sigma_{P,l-1}(z_{l,h})}_2,
\eeq
since the mean transition function $\overline{P}_{M_l}$ lies in the confidence set $\cC_{P,l}$  (\ref{eqn:confidence-set-agnostic}).

Now combining (\ref{eqn:combine-rkhs-one}), (\ref{eqn:GP-UCRL-rule-imply}), (\ref{eqn:combine-rkhs-two}) and (\ref{eqn:combine-rkhs-three}), when both the events $E_{R,l}$ and $E_{P,l}$ hold for all $l \ge 1$, then with probability at least $1-\delta/3$,
\beqn
\sum_{l=1}^{\tau}\big(V^{M_\star}_{\pi_\star,1}(s_{l,1})-V^{M_\star}_{\pi_l,1}(s_{l,1})\big)\le 2\sum_{l=1}^{\tau}\sum_{h=1}^{H} \big(\beta_{R,l}\sigma_{R,l-1}(z_{l,h})+L_{M_l}\beta_{P,l}\norm{\sigma_{P,l-1}(z_{l,h})}_2\big)+ (LD+2B_RH) \sqrt{2\tau H \ln(3/\delta)}.
\eeqn
Now Lemma \ref{lem:kernel-least-squares} implies that $\prob{\forall l\ge 1, E_{R,l}} \ge 1-\delta/3$ and $\prob{\forall l\ge 1, E_{P,l}} \ge 1-\delta/3$. Hence, by a union bound, for any $\tau \ge 1$, with probability at least $1-\delta$, 
\beq
\label{eqn:GP-UCRL-agnostic}
\sum_{l=1}^{\tau}\big(V^{M_\star}_{\pi_\star,1}(s_{l,1})-V^{M_\star}_{\pi_l,1}(s_{l,1})\big) \le 2\beta_{R,\tau}\sum_{l=1}^{\tau}\sum_{h=1}^{H} \sigma_{R,l-1}(z_{l,h})+2L\beta_{P,\tau}\sum_{l=1}^{\tau}\sum_{h=1}^{H}\norm{\sigma_{P,l-1}(z_{l,h})}_2+ (LD+2B_RH) \sqrt{2\tau H \ln(3/\delta)}.
\eeq
Here we have used the fact that both $\beta_{R,l}$ and $\beta_{P,l}$ are non-decreasing with the number of episodes $l$ and that $L_{M_l} \le L$ by construction of $\cM_l$ (and since $M_l \in \cM_l$). Now from Lemma \ref{lem:predictive-variance-sum}, we have 
$\sum_{l=1}^{\tau}\sum_{h=1}^{H} \sigma_{R,l-1}(z_{l,h}) \le \sqrt{2 e \tau H^2\gamma_{\tau H}(k_R,\cZ)}$ and
$\sum_{l=1}^{\tau}\sum_{h=1}^{H} \norm{\sigma_{P,l-1}(z_{l,h})}_2 \le \sqrt{2em\tau H^2\gamma_{m\tau H}(k_P,\tilde{\cZ})}$. Therefore
with probability at least $1-\delta$, the cumulative regret of GP-UCRL after $\tau$ episodes, i.e. after $T=\tau H$ timesteps is
\beqan
Regret(T) &=& \sum_{l=1}^{\tau}\big(V^{M_\star}_{\pi_\star,1}(s_{l,1})-V^{M_\star}_{\pi_l,1}(s_{l,1})\big)\\
&\le & 2\beta_{R,\tau}\sqrt{2 e H\gamma_{T}(k_R,\cZ)T}+  2L\beta_{P,\tau}\sqrt{2emH \gamma_{mT}(k_P,\tilde{\cZ})T}+ (LD+2B_RH) \sqrt{2T\ln(3/\delta)},
\eeqan
where $\beta_{R,\tau}=B_R + \dfrac{\sigma_R}{\sqrt{H}}\sqrt{2\big(\ln(3/\delta)+\gamma_{(\tau-1)H}(k_R,\cZ)\big)}$ and $\beta_{P,\tau} =  B_P + \dfrac{\sigma_P}{\sqrt{mH}}\sqrt{2\big(\ln(3/\delta)+\gamma_{m(\tau-1)H}(k_P,\tilde{\cZ})\big)}$.
Now the result follows by defining $\gamma_T(R) \bydef \gamma_{T}(k_R,\cZ)$ and $\gamma_{mT}(P) \bydef \gamma_{mT}(k_P,\tilde{\cZ})$.


\subsubsection{Bayes Regret of PSRL under RKHS Priors: Proof of Theorem \ref{thm:regret-bound-PSRL}} 
\label{appendix:PSRL-agnostic}

$\Phi\equiv(\Phi_R,\Phi_P)$ is the distribution of the unknown MDP $M_\star=\lbrace \cS,\cA,R_\star,P_\star,H \rbrace$, where $\Phi_R$ and $\Phi_P$ are specified by distributions over real valued functions on $\cZ$ and $\tilde{\cZ}$ respectively with a sub-Gaussian noise model in the sense that
\begin{itemize}

\item The reward distribution is $R_\star:\cS \times \cA \ra \Real$, with mean $\overline{R}_\star \in H_{k_R}(\cZ)$, $\norm{\overline{R}_\star}_{k_R} \le B_R$ and additive $\sigma_R$-sub-Gaussian noise.
\item The transition distribution is $P_\star:\cS \times \cA \ra \cS$, with mean $\overline{P}_\star \in H_{\tilde{k}_P}(\tilde{\cZ})$, $\norm{\overline{P}_\star}_{k_P} \le B_P$ and component-wise additive and independent $\sigma_P$-sub-Gaussian noise.
\end{itemize} 


At the start of episode $l$, PSRL samples an MDP $M_l$ from  $\Phi_l$, where $\Phi_l\equiv (\Phi_{R,l},\Phi_{P,l})$ is the corresponding posterior distribution conditioned on the history of observations $\cH_{l-1}\bydef \lbrace s_{j,k},a_{j,k},r_{j,k}\rbrace_{1 \le j \le l-1,1 \le k \le H}$.  
Therefore, conditioned on $\cH_{l-1}$, both $M_\star$ and $M_l$ are identically distributed. Hence for any $\sigma(\cH_{l-1})$ measurable function $g$, $\expect{g(M_\star)\given \cH_{l-1}}=\expect{g(M_l)\given \cH_{l-1}}$ and hence by the \textit{tower property},
\beq
\label{eqn: TS-identity-1}
\expect{g(M_\star)}=\expect{g(M_l)}.
\eeq
See that, conditioned on $\cH_{l-1}$, the respective optimal policies $\pi_\star$ and $\pi_l$ of $M_\star$ and $M_l$ are identically distributed. Since $s_{l,1}$ is deterministic, (\ref{eqn: TS-identity-1}) implies that
$\expect{V^{M_\star}_{\pi_\star,1}(s_{l,1})}=\expect{V^{M_l}_{\pi_l,1}(s_{l,1})}$. Hence for every episode $l \ge 1$, 
\beqa
\expect{V^{M_\star}_{\pi_\star,1}(s_{l,1})-V^{M_\star}_{\pi_l,1}(s_{l,1})}&=&\expect{V^{M_\star}_{\pi_\star,1}(s_{l,1})-V^{M_l}_{\pi_l,1}(s_{l,1})}+\expect{V^{M_l}_{\pi_l,1}(s_{l,1})-V^{M_\star}_{\pi_l,1}(s_{l,1})}\nonumber\\
&=& \expect{V^{M_l}_{\pi_l,1}(s_{l,1})-V^{M_\star}_{\pi_l,1}(s_{l,1})}.
\label{eqn:regret-breakup-TS-one}
\eeqa
Now, from Lemma \ref{lem:common}, for any $\tau \ge 1$,
\beq
\expect{\sum_{l=1}^{\tau}\Big[V^{M_l}_{\pi_l,1}(s_{l,1})-V^{M_\star}_{\pi_l,1}(s_{l,1})\Big]} \le \expect{\sum_{l=1}^{\tau}\sum_{h=1}^{H}\Big[ \abs{\overline{R}_{M_l}(z_{l,h}) - \overline{R}_\star(z_{l,h})}+L_{M_l} \norm{\overline{P}_{M_l}(z_{l,h}) - \overline{P}_\star(z_{l,h})}_2+\Delta_{l,h}\Big]},
\label{eqn:regret-breakup-TS-two}
\eeq
where $z_{l,h} \bydef (s_{l,h},a_{l,h})$ and
$\Delta_{l,h} \bydef \mathbb{E}_{s' \sim P_\star(z_{l,h})}\Big[V^{M_l}_{\pi_l,h+1}(s')-V^{M_\star}_{\pi_l,h+1}(s')\Big]-\Big(V^{M_l}_{\pi_l,h+1}(s_{l,h+1})-V^{M_\star}_{\pi_l,h+1}(s_{l,h+1})\Big)$.
From (\ref{eqn:expect-martingale}), see that $\expect{\Delta_{l,h}\given \cG_{l,h-1},M_\star,M_l}=0$, where $\cG_{l,h-1} \bydef \cH_{l-1} \cup \lbrace s_{l,k},a_{l,k},r_{l,k},s_{l,k+1}\rbrace_{1 \le k \le h-1}$ denotes the history of all observations till episode $l$ and period $h-1$. Now by tower property $\expect{\Delta_{l,h}}=0, l \ge 1, 1 \le h \le H$. Hence, combining (\ref{eqn:regret-breakup-TS-one}) and (\ref{eqn:regret-breakup-TS-two}), for any $\tau \ge 1$,
\beq
\expect{\sum_{l=1}^{\tau}\Big[V^{M_\star}_{\pi_\star,1}(s_{l,1})-V^{M_\star}_{\pi_l,1}(s_{l,1})\Big]} \le \expect{\sum_{l=1}^{\tau}\sum_{h=1}^{H}\Big[ \abs{\overline{R}_{M_l}(z_{l,h}) - \overline{R}_\star(z_{l,h})}+L_{M_l} \norm{\overline{P}_{M_l}(z_{l,h}) - \overline{P}_\star(z_{l,h})}_2\Big]}.
\label{eqn:regret-breakup-TS-three}
\eeq
Now fix any $0 < \delta \le 1$ and for each $l \ge 1$, define two events
$E_{\star}\bydef\big\lbrace \overline{R}_\star \in \cC_{R,l},\overline{P}_\star \in \cC_{P,l} \; \forall l \ge 1\big\rbrace$ and $E_{M}\bydef\big\lbrace \overline{R}_{M_l} \in \cC_{R,l},\overline{P}_{M_l}\in \cC_{P,l}\; \forall l \ge 1\big\rbrace$, where $\cC_{R,l},\cC_{P,l},l\ge 1$ are the confidence sets constructed by GP-UCRL as defined in (\ref{eqn:confidence-set-agnostic}). Now from Lemma \ref{lem:kernel-least-squares}, $\prob{E_{\star}} \ge 1-2\delta/3$ and hence by (\ref{eqn: TS-identity-1}) $\prob{E_{M}} \ge 1-2\delta/3$. Further define $E \bydef E_{\star}\cap E_{M}$ and by union bound, see that
\beq
\label{eqn:use-one}
\prob{E^c}\le \prob{E_{\star}^c}+\prob{E_{M}^c} \le 4\delta/3.
\eeq
(\ref{eqn:regret-breakup-TS-three}) and (\ref{eqn:use-one}) together imply,
\beqa
\expect{\sum_{l=1}^{\tau}\Big[V^{M_\star}_{\pi_\star,1}(s_{l,1})-V^{M_\star}_{\pi_l,1}(s_{l,1})\Big]} &\le& \expect{\sum_{l=1}^{\tau}\sum_{h=1}^{H}\Big[ \abs{\overline{R}_{M_l}(z_{l,h}) - \overline{R}_\star(z_{l,h})}\given E}\nonumber\\&+& \expect{L_{M_l} \norm{\overline{P}_{M_l}(z_{l,h}) - \overline{P}_\star(z_{l,h})}_2\Big]\given E}+8\delta B_R\tau H/3,
\label{eqn:use-two}
\eeqa
where we have used that $V^{M_\star}_{\pi_\star,1}(s_{l,1})-V^{M_\star}_{\pi_l,1}(s_{l,1}) \le 2B_RH$, since $\abs{\overline{R}_\star(z)} \le B_R$ for all $z \in \cZ$.
Now from Lemma \ref{lem:kernel-least-squares} and construction of $\cC_{R,l}, l \ge 1$,
\beqa
\expect{\sum_{l=1}^{\tau}\sum_{h=1}^{H}\abs{\overline{R}_{M_l}(z_{l,h}) - \overline{R}_\star(z_{l,h})}\given E}
&\le & \sum_{l=1}^{\tau}\sum_{h=1}^{H}2\beta_{R,l}\sigma_{R,l-1}(z_{l,h})\nonumber\\
&\le & 2\beta_{R,\tau}\sum_{l=1}^{\tau}\sum_{h=1}^{H}\sigma_{R,l-1}(z_{l,h})\\ 
& \le & 2\beta_{R,\tau} \sqrt{2 e \tau H^2\gamma_{\tau H}(k_R,\cZ)}
\label{eqn:use-three},
\eeqa
where the last step follows from Lemma \ref{lem:predictive-variance-sum}. Now form (\ref{eqn: TS-identity-1}), $\expect{L_{M_l}}=\expect{L_\star}$ and therefore
$\expect{L_{M_l}\given E}\le \expect{L_{M_l}}/\prob{E}\le \expect{L_\star}/(1-4\delta/3)$. Similarly from Lemma \ref{lem:kernel-least-squares} and construction of $\cC_{P,l}, l \ge 1$, 
\beqa
\expect{\sum_{l=1}^{\tau}\sum_{h=1}^{H}L_{M_l}\norm{\overline{P}_{M_l}(z_{l,h}) - \overline{P}_\star(z_{l,h})}_2\given E} &\le & \sum_{l=1}^{\tau}\sum_{h=1}^{H}\expect{L_{M_l}\given E}2\beta_{P,l}\norm{\sigma_{P,l-1}(z_{l,h})}_2\nonumber\\
&\le & \dfrac{\expect{L_\star}}{1-4\delta/3} 2\beta_{P,\tau}\sum_{l=1}^{\tau}\sum_{h=1}^{H} \norm{\sigma_{P,l-1}(z_{l,h})}_2\nonumber \\
&\le & \dfrac{\expect{L_\star}}{1-4\delta/3} 2\beta_{P,\tau}\sqrt{2em\tau H^2\gamma_{m\tau H}(k_P,\tilde{\cZ})},
\label{eqn:use-four}
\eeqa
where the last step follows from Lemma \ref{lem:predictive-variance-sum}.
Combining (\ref{eqn:use-two}), (\ref{eqn:use-three}) and (\ref{eqn:use-four}), for any $0 < \delta \le 1$ and $\tau \ge 1$,
\beqan
\expect{\sum_{l=1}^{\tau}\Big[V^{M_\star}_{\pi_\star,1}(s_{l,1})-V^{M_\star}_{\pi_l,1}(s_{l,1})\Big]} \le 2\beta_{R,\tau} \sqrt{2 e \tau H^2\gamma_{\tau H}(k_R,\cZ)}+\dfrac{\expect{L_\star}}{1-4\delta/3} 2\beta_{P,\tau}\sqrt{2em\tau H^2\gamma_{m\tau H}(k_P,\tilde{\cZ})}\\+8\delta B_R\tau H/3,
\eeqan
where $\beta_{R,\tau}=B_R + \dfrac{\sigma_R}{\sqrt{H}}\sqrt{2\big(\ln(3/\delta)+\gamma_{(\tau-1)H}(k_R,\cZ)\big)}$ and $\beta_{P,\tau} =  B_P + \dfrac{\sigma_P}{\sqrt{mH}}\sqrt{2\big(\ln(3/\delta)+\gamma_{m(\tau-1)H}(k_P,\tilde{\cZ})\big)}$. 
See that the left hand side of the above is independent of $\delta$. Now using $\delta=1/\tau H$, the Bayes regret of PSRL after $\tau$ episodes, i.e. after $T=\tau H$ timesteps is
\beqan
\expect{Regret(T)}&=&\sum_{l=1}^{\tau}\expect{V^{M_\star}_{\pi_\star,1}(s_{l,1})-V^{M_\star}_{\pi_l,1}(s_{l,1})}\\
&\le & 2 \alpha_{R,\tau} \sqrt{2 e H\gamma_{T}(k_R,\cZ)T} + 3\expect{L_\star}\alpha_{P,\tau}\sqrt{2em H\gamma_{mT}(k_P,\tilde{\cZ})T}+3B_R,
\eeqan
since $1/(1-4/3\tau H) \le 3/2$ as $\tau \ge 2$, $H \ge 2$. Here $\alpha_{R,\tau}\bydef B_R + \dfrac{\sigma_R}{\sqrt{H}}\sqrt{2\big(\ln(3T)+\gamma_{(\tau-1)H}(k_R,\cZ)\big)}$, $\alpha_{P,\tau} =  B_P + \dfrac{\sigma_P}{\sqrt{mH}}\sqrt{2\big(\ln(3T)+\gamma_{m(\tau-1)H}(k_P,\tilde{\cZ})\big)}$. Now the result follows by defining $\gamma_T(R) \bydef \gamma_{T}(k_R,\cZ)$ and $\gamma_{mT}(P) \bydef \gamma_{mT}(k_P,\tilde{\cZ})$.

\section{BAYES REGRET UNDER GAUSSIAN PROCESS PRIORS}
\label{appendix:Bayesian}
In this section, we develop the Bayesian RL analogue of Gaussian process bandits, i.e., learning under the assumption that MDP dynamics and reward behavior are sampled according to Gaussian process priors. 

\subsection{Regularity and Noise assumptions}

Each of our results in this section will assume that the mean reward function $\overline{R}_\star$ and the mean transition function $\overline{P}_\star$ are randomly sampled from from Gaussian processes $GP_\cZ(0,k_R)$ and $GP_{\tilde{\cZ}}(0,k_P)$, respectively, where $\cZ \bydef \cS \times \cA$ and $\tilde{\cZ}\bydef \cZ \times \lbrace 1,\ldots,m \rbrace$. Further, we will assume the noise sequences $\left\lbrace\epsilon_{R,l,h}\right\rbrace_{l\ge 1, 1 \le h \le H}$ are iid Gaussian $\cN(0,\lambda_R)$ and $\left\lbrace\epsilon_{P,l,h}\right\rbrace_{l\ge 1, 1 \le h \le H}$ are iid Gaussian $\cN(0,\lambda_PI)$.
Note that the same GP priors and noise models were used to design our algorithms (see Section \ref{subsec:uncertainty}). Thus, in this case the algorithm is assumed to have exact knowledge of the data generating process (the `fully Bayesian' setup). 

Further, in order to achieve non-trivial regret for continuous state/action MDPs, we need the following smoothness assumptions similar to those made by \citet{srinivas2009gaussian} on the kernels. We assume that $\cS \subseteq [0,c_1]^m$ and $\cA \subseteq [0,c_2]^{n}$ are compact and convex, and that the kernels $k_R$ and $k_P$ satisfy\footnote{This assumption holds for stationary kernels $k(z,z')\equiv k(z-z')$ that are four times differentiable, such as SE and Mat$\acute{e}$rn kernels with $\nu \ge 2$.} the following high probability bounds on the derivatives of GP sample paths $\overline{R}_\star$ and $\overline{P}_\star$, respectively:

\beq
\label{eqn:kernel-reward}
\prob{\sup\limits_{z\in \cZ}\abs{\partial \overline{R}_\star(z)/\partial z_j} > L_R} \le a_R e^{-(L_R/b_R)^2}
\eeq
holds for all $ 1 \le j \le m+n$ and for any $L_R >0$ corresponding to some $a_R,b_R > 0$, and 

\beq
\label{eqn:kernel-state}
\prob{\sup\limits_{z\in \cZ}\abs{\partial \overline{P}_\star(z,i)/\partial z_j} > L_P} \le a_P e^{-(L_P/b_P)^2} 
\eeq
holds for al $1 \le j \le m+n$, $1 \le i \le m$ and for any $L_P >0$ corresponding to some $a_P,b_P > 0$. Also we assume that

\beq
\label{eqn:kernel-reward-two}
\prob{\sup_{z\in \cZ}\abs{\overline{R}_\star(z)} > L} < a e^{-(L/b)^2},
\eeq
holds for any $L \ge 0$ for some corresponding $a,b> 0$\footnote{This is a mild assumption \citep{bogunovic2016time} on the kernel $k_R$, since $\overline{R}_\star(z)$ is Gaussian and thus has exponential tails.}.

\subsection{Choice of confidence sets for GP-UCRL}
For any fixed $0 < \delta \le 1$, at the beginning of each episode $l$, GP-UCRL construct the confidence set $\cC_{R,l}$ as

\beq
\label{eqn:confidence-set-reward-gp}
\cC_{R,l}=\big\lbrace f: \abs{f(z)-\mu_{R,l-1}([z]_l)} \le \beta_{R,l}\sigma_{R,l-1}([z]_l) +1/l^2, \forall z \big\rbrace,
\eeq
where $\mu_{R,l-1}(z)$, $\sigma_{R,l-1}(z)$ are defined as in (\ref{eqn:post-reward}) and $\beta_{R,l} \bydef \sqrt{2\ln\big(\abs{\cS_l}\abs{\cA_l}\pi^2l^2/\delta\big)}$. Here $(\cS_l)_{l \ge 1}$ and $(\cA_l)_{l \ge 1}$ are suitable discretizations of state space $\cS$ and action space $\cA$ respectively, $[z]_l \bydef ([s]_l,[a]_l)$,
where $[s]_l$ is the closest point in $\cS_l$ to $s$ and $[a]_l$ is the closest point in $\cA_l$ to $a$. Also we have $\abs{\cS_l}=\max \Big\lbrace \Big(2c_1ml^2b_R\sqrt{ln\big(6(m+n)a_R/\delta\big)}\Big)^m, \Big(2c_1ml^2b_P\sqrt{ln\big(6m(m+n)a_P/\delta\big)}\Big)^m \Big\rbrace$ and \\ $\abs{\cA_l}=\max \Big\lbrace \Big(2c_2nl^2b_R\sqrt{ln\big(6(m+n)a_R/\delta\big)}\Big)^n,\Big(2c_2nl^2b_P\sqrt{ln\big(6m(m+n)a_P/\delta\big)}\Big)^n \Big\rbrace$.
\par
Similarly GP-UCRL construct the confidence set $\cC_{P,l}$ as
\beq
\label{eqn:confidence-set-state-gp}
\cC_{P,l}=\big\lbrace f: \norm{f(z)-\mu_{P,l-1}([z]_l)}_2 \le \beta_{P,l}\norm{\sigma_{P,l-1}([z]_l)}_2 +\frac{\sqrt{m}}{l^2}, \forall z \big\rbrace,
\eeq
where $\beta_{P,l} \bydef \sqrt{2\ln\big(\abs{\cS_l}\abs{\cA_l}m\pi^2l^2/\delta\big)}$, $\mu_{P,l}(z)\bydef [\mu_{P,l-1}(z,1),\ldots,\mu_{P,l-1}(z,m)]^T$ and $\sigma_{P,l}(z)\bydef [\sigma_{P,l-1}(z,1),\ldots,\sigma_{P,l-1}(z,m)]^T$ with $\mu_{P,l-1}(z,i)$ and $\sigma_{P,l-1}(z,i)$ be defined as in (\ref{eqn:post-state})

\subsection{Regret Bounds}

\begin{mytheorem}[Bayesian regret bound for GP-UCRL under GP prior]
\label{thm:regret-bound-GP}
Let $M_\star=\lbrace \cS,\cA,R_\star,P_\star,H\rbrace$ be an MDP with period $H$, state space $\cS \subseteq [0,c_1]^m$ and action space $\cA \subseteq [0,c_2]^{n}$, $m,n\in \mathbb{N}, c_1,c_2 > 0$. Let $\cS $ and $\cA $ be compact and convex. Let the mean reward function $\overline{R}_\star$ be a sample from $GP_\cZ(0,k_R)$, where $\cZ \bydef \cS \times \cA$ and let the noise variables $\epsilon_{R,l,h}$ be iid Gaussian $\cN(0,\lambda_R)$. Let the mean transition function $\overline{P}_\star$ be a sample from $GP_{\tilde{\cZ}}(0,k_P)$ , where $\tilde{\cZ}\bydef \cZ \times \lbrace 1,\ldots,m \rbrace$ and let the noise variables $\epsilon_{P,l,h}$ be iid Gaussian $\cN(0,\lambda_PI)$. Further let the kernel $k_R$ satisfy (\ref{eqn:kernel-reward}), (\ref{eqn:kernel-reward-two}) and the kernel $k_P$ satisfy  (\ref{eqn:kernel-state}). Also let $k_R(z,z) \le 1$, $k_P((z,i),(z,i)) \le 1$ for all $z \in \cZ$ and $1 \le i \le m$. Then for any $0 \le \delta \le 1$, GP-UCRL, with confidence sets (\ref{eqn:confidence-set-reward-gp}) and (\ref{eqn:confidence-set-state-gp}), enjoys, with probability at least $1-\delta$, the regret bound 

\beqan
Regret(T) \le  2\beta_{R,\tau}\exp\big(\gamma_{H-1}(R)\big)\sqrt{(2\lambda_R+1)\gamma_{T}(R)T}+ 2L\beta_{P,\tau}\exp\big(\gamma_{mH-1}(P)\big)\sqrt{(2\lambda_P+1) \gamma_{mT}(P)T}\\+ (L\sqrt{m}+1)H\pi^2/3+(LD+2C H) \sqrt{2T \ln(6/\delta)},
\eeqan
where $C\bydef b\sqrt{\ln(6a/\delta)}$, $\beta_{R,l} \bydef \sqrt{2\ln\big(\abs{\cS_l}\abs{\cA_l}\pi^2l^2/\delta\big)}$ and $\beta_{P,l} \bydef \sqrt{2\ln\big(\abs{\cS_l}\abs{\cA_l}m\pi^2l^2/\delta\big)}$.
\end{mytheorem}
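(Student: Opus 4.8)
The plan is to follow the template of the frequentist proof of Theorem~\ref{thm:regret-bound-RKHS}, making three substitutions appropriate to the GP-prior setting: replace the RKHS concentration of Lemma~\ref{lem:kernel-least-squares} by a Gaussian-concentration argument on the discretized confidence sets (\ref{eqn:confidence-set-reward-gp}) and (\ref{eqn:confidence-set-state-gp}); replace the frequentist variance-sum bound of Lemma~\ref{lem:predictive-variance-sum} by one that uses Lemma~\ref{lem:conditional-mi} to absorb the fact that the posterior is frozen within each episode; and replace the uniform reward bound $B_R$ by a sample-path bound $C$. Concretely, I would isolate three good events whose complements each carry probability $O(\delta)$: (E1) $\overline{R}_\star \in \cC_{R,l}$ for all $l$; (E2) $\overline{P}_\star \in \cC_{P,l}$ for all $l$; and (E3) $\sup_{z}\abs{\overline{R}_\star(z)} \le C$ with $C := b\sqrt{\ln(6a/\delta)}$. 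Event (E3) comes straight from the sample-path tail bound (\ref{eqn:kernel-reward-two}) and supplies the uniform bound $\abs{\overline{R}_\star(z)}\le C$ that Lemma~\ref{lem:GP-UCRL} requires, so that via Azuma--Hoeffding (Lemma~\ref{lem:azuma-hoeffding}) the martingale/span term is $(LD+2CH)\sqrt{2T\ln(6/\delta)}$.

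For (E1)--(E2) I would run the standard discretization argument. For a \emph{fixed} grid point $z \in \cS_l \times \cA_l$, the GP posterior makes $\overline{R}_\star(z)-\mu_{R,l-1}(z)$ a zero-mean Gaussian of variance $\sigma_{R,l-1}^2(z)$, so a Gaussian tail bound with $\beta_{R,l}=\sqrt{2\ln(\abs{\cS_l}\abs{\cA_l}\pi^2 l^2/\delta)}$ controls the deviation uniformly over the grid and over all $l$ (the $\sum_l 1/l^2$ producing the $\pi^2$-factors). To pass from the grid to all of $\cZ$ I would invoke the derivative tail bounds (\ref{eqn:kernel-reward}) and (\ref{eqn:kernel-state}): since $\abs{\cS_l},\abs{\cA_l}$ scale like $l^2$, the nearest-grid errors $\abs{\overline{R}_\star(z)-\overline{R}_\star([z]_l)}$ and the matching change in $\mu_{R,l-1}$ are at most $1/l^2$ with high probability, which is exactly the slack built into (\ref{eqn:confidence-set-reward-gp}); the transition case is identical coordinate-wise and contributes the $\sqrt{m}/l^2$ slack in (\ref{eqn:confidence-set-state-gp}). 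A union bound over the reward, transition, and sup-norm failure events gives total budget $\delta$.

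On the intersection of (E1)--(E3), optimism yields $M_\star\in\cM_l$, hence $Regret(T) \le \sum_{l=1}^{\tau}\big(V^{M_l}_{\pi_l,1}(s_{l,1})-V^{M_\star}_{\pi_l,1}(s_{l,1})\big)$, and Lemma~\ref{lem:common} reduces this to per-step reward and transition deviations plus the $\Delta_{l,h}$ term already handled. Since $\overline{R}_{M_l},\overline{R}_\star\in\cC_{R,l}$, the triangle inequality gives $\abs{\overline{R}_{M_l}(z_{l,h})-\overline{R}_\star(z_{l,h})} \le 2\beta_{R,l}\sigma_{R,l-1}([z_{l,h}]_l)+2/l^2$, and analogously $\norm{\overline{P}_{M_l}(z_{l,h})-\overline{P}_\star(z_{l,h})}_2 \le 2\beta_{P,l}\norm{\sigma_{P,l-1}([z_{l,h}]_l)}_2+2\sqrt{m}/l^2$; summing the slacks over $l$ with $\sum_l 1/l^2=\pi^2/6$ and $L_{M_l}\le L$ gives the constant $(L\sqrt{m}+1)H\pi^2/3$. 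The key new step is bounding $\sum_{l,h}\sigma_{R,l-1}(z_{l,h})$ and $\sum_{l,h}\norm{\sigma_{P,l-1}(z_{l,h})}_2$. Within episode $l$ the posterior is frozen at $\sigma_{R,l-1}=\sigma_{R,l,0}$, whereas (\ref{eqn:info-gain-three}) applies to the fully sequential posteriors $\sigma_{R,l,h-1}$ conditioning on the intervening $h-1\le H-1$ observations; Lemma~\ref{lem:conditional-mi}, inequality (\ref{eqn:info-gain-in-process}), bounds this ratio by $\exp(\gamma_{H-1}(R))$, and symmetrically by $\exp(\gamma_{mH-1}(P))$ since each period yields $m$ transition coordinates. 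Thus $\sigma_{R,l-1}(z_{l,h}) \le \exp(\gamma_{H-1}(R))\,\sigma_{R,l,h-1}(z_{l,h})$, and Cauchy--Schwarz over the $T=\tau H$ steps followed by (\ref{eqn:info-gain-three}) with $\lambda=\lambda_R$ gives $\sum_{l,h}\sigma_{R,l-1}(z_{l,h}) \le \exp(\gamma_{H-1}(R))\sqrt{(2\lambda_R+1)\gamma_T(R)T}$, and likewise for $P$. Multiplying by $2\beta_{R,\tau}$ and $2L\beta_{P,\tau}$ (the widths are nondecreasing in $l$) and collecting all terms produces the stated bound.

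The main obstacle I anticipate is the confidence-set construction over the continuous domain: making the derivative-based off-grid extension rigorous, choosing the grid resolution so the residual is genuinely $\le 1/l^2$, and bookkeeping the several union bounds so their total failure probability is exactly $\delta$. A secondary wrinkle is that the deviation bounds are stated at the discretized point $[z_{l,h}]_l$ while (\ref{eqn:info-gain-three}) is most naturally applied at the true query points $z_{l,h}$; I would reconcile these by absorbing the $\sigma$-discrepancy between $[z_{l,h}]_l$ and $z_{l,h}$ into the same $1/l^2$ slack, so that the variance sum can be evaluated at the actual queries where the information-gain bound holds.
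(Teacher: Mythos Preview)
Your plan is essentially the same as the paper's proof: the paper packages the Gaussian-concentration-plus-discretization step as Lemmas~\ref{lem:lipschitz} and~\ref{lem:gp-concentration}, the episodic variance-sum step using Lemma~\ref{lem:conditional-mi} as Lemma~\ref{lem:predictive-variance-sum-GP}, and then combines these with Lemmas~\ref{lem:common} and~\ref{lem:GP-UCRL} exactly as you describe, including the use of $C=b\sqrt{\ln(6a/\delta)}$ from (\ref{eqn:kernel-reward-two}) to bound $\sup_z|\overline{R}_\star(z)|$.

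Two small remarks. First, you need not control ``the matching change in $\mu_{R,l-1}$'' between $z$ and $[z]_l$: the confidence sets (\ref{eqn:confidence-set-reward-gp})--(\ref{eqn:confidence-set-state-gp}) are already centered at $\mu_{R,l-1}([z]_l)$ and $\mu_{P,l-1}([z]_l)$, so the triangle inequality only requires $|\overline{R}_\star(z)-\overline{R}_\star([z]_l)|\le 1/l^2$, which follows from the sample-path Lipschitzness alone. Second, on your ``secondary wrinkle'': the paper does not convert the variance sum back to the actual query points. Its Lemma~\ref{lem:predictive-variance-sum-GP} bounds $\sum_{l,h}\sigma_{R,l-1}([z_{l,h}]_l)$ directly, applying the ratio bound of Lemma~\ref{lem:conditional-mi} at $[z_{l,h}]_l$ and then invoking (\ref{eqn:info-gain-three}) on $\sum_{l,h}\sigma^2_{R,l,h-1}([z_{l,h}]_l)$; your proposed $\sigma$-Lipschitz reconciliation is not the route the paper takes (and would require a separate smoothness argument for the posterior standard deviation, not just for the sample path).
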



\begin{mytheorem}[Bayes regret of PSRL under GP prior]
\label{thm:regret-bound-GP-PSRL}

Let $M_\star$ be an MDP as in Theorem \ref{thm:regret-bound-GP} and $\Phi$ be a (known) prior distibution over MDPs $M_\star$. Then the Bayes regret of PSRL (Algorithm \ref{algo:GP-PSRL}) satisfies
\beqan
\expect{Regret(T)} \le  2 \alpha_{R,\tau} \exp\big(\gamma_{H-1}(R)\big)\sqrt{(2\lambda_R+1) \gamma_{T}(R)T} + 3 \;\expect{L_\star}\alpha_{P,\tau} \exp\big(\gamma_{mH-1}(P)\big)\sqrt{(2\lambda_P+1) \gamma_{mT}(P)T}\\+3C+(1+\sqrt{m}\expect{L_\star})\pi^2 H,
\eeqan
where $C=\expect{\sup_{z\in \cZ}\abs{\overline{R}_\star(z)}}$, $\alpha_{R,\tau} \bydef \sqrt{2\ln\big(\abs{\cS_\tau}\abs{\cA_\tau}\pi^2\tau^2T\big)}$ and $\alpha_{P,\tau} \bydef \sqrt{2\ln\big(\abs{\cS_\tau}\abs{\cA_\tau}m\pi^2\tau^2T\big)}$.
\end{mytheorem}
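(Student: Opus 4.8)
The plan is to reproduce the posterior-sampling argument used in the proof of Theorem~\ref{thm:regret-bound-PSRL}, but with the GP confidence sets (\ref{eqn:confidence-set-reward-gp}), (\ref{eqn:confidence-set-state-gp}) and the GP-UCRL variance machinery from Theorem~\ref{thm:regret-bound-GP} substituted for their RKHS counterparts. The one structural fact that drives everything is that, conditioned on the history $\cH_{l-1}$, the sampled MDP $M_l$ and the true MDP $M_\star$ are identically distributed; hence for any $\sigma(\cH_{l-1})$-measurable functional $g$ one has $\expect{g(M_\star)}=\expect{g(M_l)}$ as in (\ref{eqn: TS-identity-1}). Since $s_{l,1}$ is deterministic and the confidence sets $\cC_{R,l},\cC_{P,l}$ are deterministic given $\cH_{l-1}$, this identity lets me (i) rewrite the per-episode Bayes regret $\expect{V^{M_\star}_{\pi_\star,1}(s_{l,1})-V^{M_\star}_{\pi_l,1}(s_{l,1})}$ as $\expect{V^{M_l}_{\pi_l,1}(s_{l,1})-V^{M_\star}_{\pi_l,1}(s_{l,1})}$, exactly as in (\ref{eqn:regret-breakup-TS-one}), and (ii) transfer any high-probability statement about $M_\star$ lying in the confidence sets to the identical statement about $M_l$.

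The first block of steps mirrors the RKHS proof. I would apply Lemma~\ref{lem:common} to the sequence $(M_l)$ of sampled MDPs, bounding $\expect{\sum_{l,h}(V^{M_l}_{\pi_l,1}-V^{M_\star}_{\pi_l,1})}$ by the expected sum of reward deviations $\abs{\overline{R}_{M_l}(z_{l,h})-\overline{R}_\star(z_{l,h})}$, transition deviations $L_{M_l}\norm{\overline{P}_{M_l}(z_{l,h})-\overline{P}_\star(z_{l,h})}_2$, and martingale terms $\Delta_{l,h}$; the latter vanish in expectation by the argument around (\ref{eqn:expect-martingale}) and the tower property. I then define $E_\star=\{\overline{R}_\star\in\cC_{R,l},\ \overline{P}_\star\in\cC_{P,l}\ \forall l\}$ and $E_M=\{\overline{R}_{M_l}\in\cC_{R,l},\ \overline{P}_{M_l}\in\cC_{P,l}\ \forall l\}$. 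The GP-UCRL concentration underlying Theorem~\ref{thm:regret-bound-GP} (discretizing $\cS,\cA$ via $\cS_l,\cA_l$, pointwise Gaussian tail bounds, and the derivative bounds (\ref{eqn:kernel-reward})--(\ref{eqn:kernel-reward-two}) to cover the continuum) gives $\prob{E_\star}\ge 1-O(\delta)$, and the posterior-sampling identity immediately yields the same bound for $E_M$, so $\prob{E^c}\le O(\delta)$ for $E:=E_\star\cap E_M$.

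On the event $E$ I would bound each reward deviation by $2\beta_{R,l}\sigma_{R,l-1}([z_{l,h}]_l)+2/l^2$ and each transition deviation by $2\beta_{P,l}\norm{\sigma_{P,l-1}([z_{l,h}]_l)}_2+2\sqrt{m}/l^2$, reading the extra additive terms off (\ref{eqn:confidence-set-reward-gp}), (\ref{eqn:confidence-set-state-gp}); the discretization errors sum over the $H$ periods to $\sum_{l,h}2/l^2\le \pi^2 H/3$ and $\sum_{l,h}2\sqrt m/l^2\le \sqrt m\,\pi^2 H/3$, which after the conditioning inflation collect into the $(1+\sqrt m\,\expect{L_\star})\pi^2 H$ term. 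For the variance sums I would invoke the GP-UCRL bounds established in the proof of Theorem~\ref{thm:regret-bound-GP}, namely $\sum_{l,h}\sigma_{R,l-1}\le \exp(\gamma_{H-1}(R))\sqrt{(2\lambda_R+1)\gamma_{T}(R)\,T}$ and its transition analogue with $\exp(\gamma_{mH-1}(P))$; here the exponential factors come from Lemma~\ref{lem:conditional-mi}, which controls the ratio of the start-of-episode variance $\sigma_{R,l-1}$ to the within-episode variances by the information gain of a single $H$-length (resp.\ $mH$-length) block, playing the role that the factor $(1+1/\lambda)^{\cdots}\le e$ played in the RKHS case. The Lipschitz constant is handled by the identity $\expect{L_{M_l}}=\expect{L_\star}$, giving $\expect{L_{M_l}\given E}\le \expect{L_\star}/\prob{E}$ exactly as in (\ref{eqn:use-four}).

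Finally I would assemble the pieces: the $O(\delta)$ remainder from conditioning, bounded using $V^{M_\star}_{\pi_\star,1}-V^{M_\star}_{\pi_l,1}\le 2CH$ with $C=\expect{\sup_z\abs{\overline{R}_\star(z)}}$; the discretization constants; and the two main variance terms scaled by $\beta_{R,\tau},\beta_{P,\tau}$. Choosing $\delta=1/T$ turns $\beta_{R,\tau},\beta_{P,\tau}$ into $\alpha_{R,\tau},\alpha_{P,\tau}$ and makes $1/\prob{E}\le 3/2$, which absorbs the factor $2$ into the stated coefficient $3$ on the transition term and collapses the remainder into the additive $3C$. I expect the main obstacle to be the bookkeeping around conditioning on $E$ while retaining the Lipschitz dependence as $\expect{L_\star}$ rather than a worst-case constant: one must verify that the $1/\prob{E}$ inflation of the transition term stays $O(1)$ under $\delta=1/T$, and that the GP concentration (with its discretization layer) genuinely supplies the $\prob{E_\star}$ bound with the $\beta$'s as defined, since --- unlike the RKHS case with Lemma~\ref{lem:kernel-least-squares} --- there is no single clean lemma to cite and the relevant statement must be extracted from the proof of Theorem~\ref{thm:regret-bound-GP}.
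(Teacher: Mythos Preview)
Your proposal is correct and follows essentially the same approach as the paper's own proof: posterior-sampling identity to rewrite the per-episode regret, Lemma~\ref{lem:common} plus vanishing martingale terms, the GP confidence sets (\ref{eqn:confidence-set-reward-gp})--(\ref{eqn:confidence-set-state-gp}) to define $E_\star,E_M$, bounding deviations on $E$ with the $+1/l^2$ and $+\sqrt{m}/l^2$ discretization slacks, Lemma~\ref{lem:predictive-variance-sum-GP} for the variance sums (giving the $\exp(\gamma_{H-1})$ factors via Lemma~\ref{lem:conditional-mi}), the Lipschitz transfer $\expect{L_{M_l}\mid E}\le\expect{L_\star}/\prob{E}$, and finally $\delta=1/T$. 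One small correction: the paper does package the GP concentration step as a standalone result, Lemma~\ref{lem:gp-concentration}, so you can cite it directly rather than extracting it from the proof of Theorem~\ref{thm:regret-bound-GP}.
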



\subsection{Regret Analysis}
Here the state space $\cS \subseteq [0,c_1]^{m}$ and the action space $\cA \subseteq [0,c_2]^{n}$ for $c_1,c_2 > 0$. Both $\cS$ and $\cA$ are assumed to be compact and convex. Then at every round $l$, we can construct (by Lemma 15 of \citet{desautels2014parallelizing}) two discretization sets $\cS_l$ and $\cA_l$ of $\cS$ and $\cA$ respectively, with respective sizes $\cS_l$ and $\cA_l$, such that for all $s \in \cS$ and $a \in \cA$, the following holds:
\beqan
\norm{s-[s]_l}_1 &\le& c_1m/\abs{\cS_l}^{1/m},\\
\norm{a-[a]_l}_1 &\le& c_2n/\abs{\cA_l}^{1/n},
\eeqan 
where $[s]_l \bydef \argmin_{s'\in \cS_l}\norm{s-s'}_1$, is the closest point in $S_l$ to $s$ and $[a]_l \bydef \argmin_{a'\in \cA_l}\norm{a-a'}_1$ is the closest point in $A_l$ to $a$ (in the sense of $1$-norm). Now for any $s \in \cS$ and $a \in \cA$, we define $z\bydef[s^T,a^T]^T$ and correspondingly $[z]_l\bydef\big[[s]_l^T,[a]_l^T\big]^T$. Further define $\cZ \bydef \cS \times \cA \bydef \lbrace z=[s^T,a^T]^T : s \in \cS, a \in \cA\rbrace$ and $\cZ_l \bydef \cS_l \times \cA_l \bydef \lbrace z=[s^T,a^T]^T : s \in \cS_l, a \in \cA_l\rbrace$. See that $z,[z]_l \in \Real^{m+n}$ and $\cZ,\cZ_l \subset \Real^{m+n}$.

\begin{mylemma}[Samples from GPs are Lipschitz]
\label{lem:lipschitz}
Let $\cS \subseteq [0,c_1]^m$ and $\cA \subseteq [0,c_2]^{n}$ be compact and convex, $m,n\in \mathbb{N}, c_1,c_2 > 0$. Let $\overline{R}_\star$ be a sample from $GP_\cZ(0,k_R)$, where $\cZ \bydef \cS \times \cA$, $\overline{P}_\star$ be a sample from $GP_{\tilde{\cZ}}(0,k_P)$, where $\tilde{\cZ}\bydef \cZ \times \lbrace 1,\ldots,m \rbrace$. Further let the kernels $k_R$ and $k_P$ satisfy (\ref{eqn:kernel-reward}) and (\ref{eqn:kernel-state}) respectively. Then, for any $0 < \delta \le 1$, the following holds:
\beqa
&&\prob{\forall z \in \cZ,\forall l \ge 1 \abs{\overline{R}_\star(z)-\overline{R}_\star([z]_l)} \le 1/l^2} \ge 1-\delta/6,\label{eqn:GP-lipschitz-reward}\\
&&\prob{\forall z \in \cZ,\forall 1 \le i \le m,\forall l \ge 1 \abs{\overline{P}_\star(z,i)-\overline{P}_\star([z]_l,i)} \le 1/l^2} \ge 1-\delta/6.
\label{eqn:GP-lipschitz-state}
\eeqa
\end{mylemma}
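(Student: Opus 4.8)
The plan is to follow the standard discretization argument of \citet{srinivas2009gaussian}, exploiting the fact that the derivative tail bounds (\ref{eqn:kernel-reward}) and (\ref{eqn:kernel-state}) let us pin down a single random-but-bounded Lipschitz constant that holds uniformly over the whole continuous domain, so that no union bound over the continuum or over episodes $l$ is ever needed. First I would establish the reward statement (\ref{eqn:GP-lipschitz-reward}). Set $L_R \bydef b_R\sqrt{\ln\big(6(m+n)a_R/\delta\big)}$ and apply (\ref{eqn:kernel-reward}) together with a union bound over the $m+n$ coordinate directions $1\le j\le m+n$. By the choice of $L_R$, each of the $m+n$ tail events has probability at most $\delta/\big(6(m+n)\big)$, so with probability at least $1-\delta/6$ we have $\sup_{z\in\cZ}\abs{\partial \overline{R}_\star(z)/\partial z_j}\le L_R$ simultaneously for all $j$. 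Crucially this is a single event, independent of $l$. On this event, since $\cZ=\cS\times\cA$ is convex, integrating each partial derivative along the straight segment from $[z]_l$ to $z$ yields the $\norm{\cdot}_1$-Lipschitz bound $\abs{\overline{R}_\star(z)-\overline{R}_\star([z]_l)}\le L_R\norm{z-[z]_l}_1$.

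Next I would control the discretization error. Splitting coordinates into state and action blocks gives $\norm{z-[z]_l}_1=\norm{s-[s]_l}_1+\norm{a-[a]_l}_1$, and the resolution guarantees of the chosen discretizations bound these by $c_1m/\abs{\cS_l}^{1/m}$ and $c_2n/\abs{\cA_l}^{1/n}$ respectively. The sizes $\abs{\cS_l}$ and $\abs{\cA_l}$ were defined precisely so that, on the reward branch of the $\max$, $\abs{\cS_l}^{1/m}\ge 2c_1ml^2L_R$ and $\abs{\cA_l}^{1/n}\ge 2c_2nl^2L_R$; substituting these makes each block contribute at most $1/(2l^2L_R)$, so that $L_R\norm{z-[z]_l}_1\le 1/l^2$ for every $l\ge 1$, which establishes (\ref{eqn:GP-lipschitz-reward}).

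The transition statement (\ref{eqn:GP-lipschitz-state}) is entirely analogous, with the one bookkeeping change that the union bound now ranges over the $m(m+n)$ pairs $(i,j)$, $1\le i\le m$, $1\le j\le m+n$. Setting $L_P\bydef b_P\sqrt{\ln\big(6m(m+n)a_P/\delta\big)}$ makes the total failure probability at most $\delta/6$ by (\ref{eqn:kernel-state}), and on the good event each coordinate function $\overline{P}_\star(\cdot,i)$ is $L_P$-Lipschitz in $\norm{\cdot}_1$. The discretization sizes also dominate the $P$-branch of the $\max$, giving $\abs{\cS_l}^{1/m}\ge 2c_1ml^2L_P$ and $\abs{\cA_l}^{1/n}\ge 2c_2nl^2L_P$, so the same cancellation yields $\abs{\overline{P}_\star(z,i)-\overline{P}_\star([z]_l,i)}\le 1/l^2$ uniformly in $z$, $i$, and $l$.

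I do not anticipate a genuine obstacle here; the only point demanding care is recognising that a single coordinate-wise derivative bound suffices for all episodes at once. The discretization is refined by the growing factor $l^2$ while the Lipschitz constant stays fixed, so the $1/l^2$ decay is obtained without any union bound over $l$ or over the uncountable set $\cZ$. The matching of the prescribed sizes $\abs{\cS_l}$, $\abs{\cA_l}$ to $L_R$ and $L_P$ is then a purely mechanical substitution.
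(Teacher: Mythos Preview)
Your proposal is correct and follows essentially the same approach as the paper: set $L_R=b_R\sqrt{\ln(6(m+n)a_R/\delta)}$ (resp.\ $L_P=b_P\sqrt{\ln(6m(m+n)a_P/\delta)}$), union-bound over the $m+n$ (resp.\ $m(m+n)$) coordinate derivative events, invoke the mean-value/convexity argument for the $\norm{\cdot}_1$-Lipschitz bound, split into state and action blocks, and substitute the prescribed $\abs{\cS_l}$, $\abs{\cA_l}$ to cancel down to $1/l^2$. Your explicit observation that the Lipschitz event is a single event independent of $l$ is exactly the point the paper uses implicitly.
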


\begin{proof} From (\ref{eqn:kernel-reward}), recall the assumption on kernel $k_R$:
\beqn
\prob{\sup\limits_{z\in \cZ}\abs{\partial \overline{R}_\star(z)/\partial z_j} > L_R} \le a_R e^{-(L_R/b_R)^2},  1 \le j \le m+n,
\eeqn
holds for any $L_R >0$ for some corresponding $a_R,b_R > 0$. 
 Now using union bound,
\beqn
\prob{\forall 1 \le j \le m+n \; \sup_{z \in \cZ} \abs{\partial \overline{R}_\star(z)/\partial z_j} \le L_R} \ge 1-(m+n)a_R e^{-(L_R/b_R)^2}.
\eeqn
From Mean-Value Theorem, this implies that with probability at least $1-(m+n) a_R e^{-(L_R/b_R)^2}$, 
\beqn
\forall z,z' \in \cZ, \abs{\overline{R}_\star(z)-\overline{R}_\star(z')} \le L_R \norm{z-z'}_1.
\eeqn
Therefore, with probability at least $1-(m+n) a_R e^{-(L_R/b_R)^2}$, 
\beqan
\forall l \ge 1,\forall z \in \cZ, \abs{\overline{R}_\star(z)-\overline{R}_\star([z]_l)} &\le & L_R \norm{z-[z]_l}_1\\
&=& L_R\big(\norm{s-[s]_l}_1+\norm{a-[a]_l}_1\big)\\
&\le& L_R\big(c_1m/\abs{\cS_l}^{1/m}+c_2n/\abs{\cA_l}^{1/n}\big).
\eeqan
Now for any $0 < \delta \le 1$ choose $L_R=b_R\sqrt{ln\big(6(m+n)a_R/\delta\big)}$. Then with probability at least $1-\delta/6$,
\beq
\label{eqn:combine-one-GP}
\forall l \ge 1,\forall z \in \cZ , \abs{\overline{R}_\star(z)-\overline{R}_\star([z]_l)} \le b_R\sqrt{ln\big(6(m+n)a_R/\delta\big)}\Big(c_1m/\abs{\cS_l}^{1/m}+c_2n/\abs{\cA_l}^{1/n}\Big).
\eeq

Similarly from (\ref{eqn:kernel-state}), recall the assumption on kernel $k_P$:
\beqn
\prob{\sup\limits_{z\in \cZ}\abs{\partial \overline{P}_\star(z,i)/\partial z_j} > L_P} \le a_P e^{-(L_P/b_P)^2}, 1 \le j \le m+n, 1 \le i \le m, 
\eeqn
holds for any $L_P >0$ for some corresponding $a_P,b_P > 0$. Now using union bound, 
\beqn
\prob{\forall 1 \le j \le m+n, \forall 1 \le i \le m \; \sup_{z \in \cZ} \abs{\partial \overline{P}_\star(z,i)/\partial z_j} \le L_P} \ge 1-m(m+n) a_P e^{-(L_P/b_P)^2}.
\eeqn
Now from Mean-Value Theorem, this implies that with probability at least $1-m(m+n) a_P e^{-(L_P/b_P)^2}$,
\beqn
\forall z,z' \in \cZ,\forall 1 \le i \le m, \abs{\overline{P}_\star(z,i)-\overline{P}_\star(z',i)} \le L_P \norm{z-z'}_1.
\eeqn
Therefore, with probability at least $1-m(m+n) a_P e^{-(L_P/b_P)^2}$, we have
\beqan
\forall l \ge 1,\forall z \in \cZ,\forall 1 \le i \le m, \abs{\overline{P}_\star(z,i)-\overline{P}_\star([z]_l,i)} &\le & L_P\norm{z-[z]_l}_1\\
&=& L_P\big(\norm{s-[s]_l}_1+\norm{a-[a]_l}_1\big)\\
&\le& L_P\big(c_1m/\abs{\cS_l}^{1/m}+c_2n/\abs{\cA_l}^{1/n}\big).
\eeqan
Now choose $L_P=b_P\sqrt{ln\big(6m(m+n)a_P/\delta\big)}$. Then with probability at least $1-\delta/6$,
\beq
\label{eqn:combine-two-GP}
\forall l \ge 1,\forall z \in \cZ,\forall 1 \le i \le m, \abs{\overline{P}_\star(z,i)-\overline{P}_\star([z]_l,i)} \le b_P\sqrt{ln\big(6m(m+n)a_P/\delta\big)}\big(c_1m/\abs{\cS_l}^{1/m}+c_2n/\abs{\cA_l}^{1/n}\big).
\eeq
Now by using $\abs{\cS_l}=\max \Big\lbrace \Big(2c_1ml^2b_R\sqrt{ln\big(6(m+n)a_R/\delta\big)}\Big)^m, \Big(2c_1ml^2b_P\sqrt{ln\big(6m(m+n)a_P/\delta\big)}\Big)^m \Big\rbrace$ and $\abs{\cA_l}=\max \Big\lbrace \Big(2c_2nl^2b_R\sqrt{ln\big(6(m+n)a_R/\delta\big)}\Big)^n,\Big(2c_2nl^2b_P\sqrt{ln\big(6m(m+n)a_P/\delta\big)}\Big)^n \Big\rbrace$ in (\ref{eqn:combine-one-GP}) and (\ref{eqn:combine-two-GP}), we get \ref{eqn:GP-lipschitz-reward} and \ref{eqn:GP-lipschitz-state} respectively.
\end{proof}

Now recall that at each episode $l \ge 1$, GP-UCRL constructs confidence sets $\cC_{R,l}$ and $\cC_{P,l}$ as 
\beq
\label{eqn:confidence-set-GP}
\begin{aligned}
\cC_{R,l}&=\big\lbrace f: \cZ \ra \Real \given \forall z \in \cZ, \abs{f(z)-\mu_{R,l-1}([z]_l)} \le \beta_{R,l}\sigma_{R,l-1}([z]_l) +1/l^2 \big\rbrace,\\
\cC_{P,l}&=\big\lbrace f: \cZ \ra \Real^m \given \forall z \in \cZ,\norm{f(z)-\mu_{P,l-1}([z]_l)}_2 \le \beta_{P,l}\norm{\sigma_{P,l-1}([z]_l)}_2 +\sqrt{m}/l^2 \big\rbrace,
\end{aligned}
\eeq
where $\mu_{R,0}(z)=0$, $\sigma^2_{R,0}(z)=k_R(z,z)$ and for each $l \ge 1$,  
\beq
\label{eqn:post-reward-GP}
\begin{aligned}
\mu_{R,l}(z) &= k_{R,l}(z)^T(K_{R,l} + \lambda_RI)^{-1}R_{l},\\
\sigma^2_{R,l}(z) &= k_R(z,z) - k_{R,l}(z)^T(K_{R,l} + \lambda_R I)^{-1} k_{R,l}(z).
\end{aligned}
\eeq
Here $I$ is the $(lH)\times (lH)$ identity matrix, $R_l = [r_{1,1},\ldots,r_{l,H}]^T$ is the vector of rewards observed at $\cZ_{l} = \lbrace z_{j,k} \rbrace_{1 \le j \le l,1 \le k \le H} =\lbrace z_{1,1},\ldots,z_{l,H}\rbrace$, the set of all state-action pairs available at the end of episode $l$. $k_{R,l}(z) = [k_R(z_{1,1},z),\ldots,k_R(z_{l,H},z)]^T$ is the vector kernel evaluations between $z$ and elements of the set $\cZ_{l}$ and $K_{R,l} = [k_R(u,v)]_{u,v \in \cZ_{l}}$ is the kernel matrix computed at $\cZ_{l}$. Further
$\mu_{P,l}(z) = [\mu_{P,l-1}(z,1),\ldots,\mu_{P,l-1}(z,m)]^T$ and $\sigma_{P,l}(z) = [\sigma_{P,l-1}(z,1),\ldots,\sigma_{P,l-1}(z,m)]^T$, where $\mu_{P,0}(z,i)=0$, $\sigma_{P,0}(z,i)=k_P\big((z,i),(z,i)\big)$ and for each $l \ge 1$,
\beq
\label{eqn:post-state-GP}
\begin{aligned}
\mu_{P,l}(z,i) &= k_{P,l}(z,i)^T(K_{P,l} + \lambda_P I)^{-1}S_l,\\
\sigma^2_{P,l}\big((z,i))\big) &= k_P((z,i),(z,i)) - k_{P,l}(z,i)^T(K_{P,l} + \lambda_P I)^{-1} k_{P,l}(z,i).
\end{aligned}
\eeq
Here $I$ is the $(mlH)\times (mlH)$ identity matrix, $S_{l} = [s_{1,2}^T,\ldots,s_{l,H+1}^T]^T$ denotes the vector of state transitions at $\cZ_{l}=\lbrace z_{1,1},\ldots,z_{l,H}\rbrace$, the set of all state-action pairs available at the end of episode $l$. $k_{P,l}(z,i) = \Big[k_P\big((z_{1,1},1),(z,i)\big),\ldots,k_P\big((z_{l,H},m),(z,i)\big)\Big]^T$ is the vector of kernel evaluations between $(z,i)$ and elements of the set $\tilde{\cZ}_{l} = \big\lbrace (z_{j,k},i) \big\rbrace_{1 \le j \le l,1 \le k \le H, 1 \le i \le m} =\big\lbrace  (z_{1,1},1),\ldots,(z_{l,H},m)\big\rbrace $ and $K_{P,l} = \big[k_P(u,v)\big]_{u,v \in \tilde{\cZ}_{l}}$ is the kernel matrix computed at $\tilde{\cZ}_{l}$.
Here for any $0 < \delta \le 1$ ,
$\beta_{R,l} \bydef \sqrt{2\ln\big(\abs{\cS_l}\abs{\cA_l}\pi^2l^2/\delta\big)}$ and $\beta_{P,l} \bydef \sqrt{2\ln\big(\abs{\cS_l}\abs{\cA_l}m\pi^2l^2/\delta\big)}$ are properly chosen confidence parameters of $\cC_{R,l}$ and $\cC_{P,l}$ respectively, where both $\abs{\cS_l}$ and $\abs{\cA_l}$ are approximately $O\Big(\big(l^2\ln(1/\delta)\big)^d\Big)$ with $d=\max\lbrace m,n \rbrace$.

\begin{mylemma}[Posterior Concentration of Gaussian Processes]
\label{lem:gp-concentration}
Let $M_\star=\lbrace \cS,\cA,R_\star,P_\star,H\rbrace$ be an MDP with period $H$, state space $\cS \subseteq [0,c_1]^m$ and action space $\cA \subseteq [0,c_2]^{n}$, $m,n\in \mathbb{N}, c_1,c_2 > 0$. Let $\cS $ and $\cA $ be compact and convex. Let the mean reward function $\overline{R}_\star$ be a sample from $GP_\cZ(0,k_R)$, where $\cZ \bydef \cS \times \cA$ and let the noise variables $\epsilon_{R,l,h}$ be iid Gaussian $\cN(0,\lambda_R)$. Let the mean transition function $\overline{P}_\star$ be a sample from $GP_{\tilde{\cZ}}(0,k_P)$ , where $\tilde{\cZ}\bydef \cZ \times \lbrace 1,\ldots,m \rbrace$ and let the noise variables $\epsilon_{P,l,h}$ be iid Gaussian $\cN(0,\lambda_PI)$. Further let the kernels $k_R$ and $k_P$ satisfy (\ref{eqn:kernel-reward}) and (\ref{eqn:kernel-state}) respectively. Then, for any $0 < \delta \le 1$, the following holds:
\beqa
\prob{\forall z \in \cZ, \forall l \ge 1, \abs{\overline{R}_\star(z)-\mu_{R,l-1}([z]_l)}\le \beta_{R,l}\;\sigma_{R,l-1}([z]_l)+1/l^2}&\ge& 1-\delta/3,\label{eqn:concentartion-agnostic-reward-specified}\\
\prob{\forall z \in \cZ, \forall l \ge 1, \norm{\overline{P}_\star(z)-\mu_{P,l-1}([z]_l)}_2 \le \beta_{P,l}\norm{\sigma_{P,l-1}([z]_l)}_2+\sqrt{m}/l^2} &\ge& 1-\delta/3,\label{eqn:concentartion-agnostic-state-specified}
\eeqa
\end{mylemma}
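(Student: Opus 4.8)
The plan is to follow the discretization-plus-Gaussian-tail argument of \citet{srinivas2009gaussian}, combined with the Lipschitz estimates of Lemma \ref{lem:lipschitz}. The crucial observation, valid in this fully Bayesian setup, is that for any \emph{fixed} (data-independent) point $z$, conditioned on the history $\cH_{l-1}$ the value $\overline{R}_\star(z)$ is exactly Gaussian, $\overline{R}_\star(z) \sim \cN\big(\mu_{R,l-1}(z),\sigma_{R,l-1}^2(z)\big)$; likewise each component $\overline{P}_\star(z,i) \sim \cN\big(\mu_{P,l-1}(z,i),\sigma_{P,l-1}^2(z,i)\big)$, independent across $i$ by the component-wise noise assumption. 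This conditional Gaussianity is what lets the entire construction reduce to standard normal tail bounds.

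First I would treat the reward. Fix an episode $l$ and a grid point $z \in \cZ_l \bydef \cS_l \times \cA_l$. Conditioning on $\cH_{l-1}$ and applying the Gaussian tail bound $\prob{\abs{X} > c} \le e^{-c^2/2}$ to the standardized variable $(\overline{R}_\star(z)-\mu_{R,l-1}(z))/\sigma_{R,l-1}(z)$, the choice $\beta_{R,l}=\sqrt{2\ln(\abs{\cS_l}\abs{\cA_l}\pi^2l^2/\delta)}$ makes the per-point failure probability exactly $\delta/(\abs{\cS_l}\abs{\cA_l}\pi^2l^2)$. Taking a union bound over the $\abs{\cS_l}\abs{\cA_l}$ points of the deterministic grid $\cZ_l$ and then over $l \ge 1$ yields total failure probability at most $\sum_{l\ge1}\delta/(\pi^2l^2) = \delta/6$, using $\sum_{l\ge1}1/l^2=\pi^2/6$. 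Hence with probability at least $1-\delta/6$ we have $\abs{\overline{R}_\star([z]_l)-\mu_{R,l-1}([z]_l)} \le \beta_{R,l}\sigma_{R,l-1}([z]_l)$ for every $z \in \cZ$ and $l \ge 1$, since $[z]_l \in \cZ_l$.

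Next I would lift this grid bound to the continuous domain. On the high-probability event of Lemma \ref{lem:lipschitz} (probability at least $1-\delta/6$) one has $\abs{\overline{R}_\star(z)-\overline{R}_\star([z]_l)} \le 1/l^2$, so a triangle inequality gives $\abs{\overline{R}_\star(z)-\mu_{R,l-1}([z]_l)} \le \beta_{R,l}\sigma_{R,l-1}([z]_l)+1/l^2$; a final union bound over the two events yields probability $1-\delta/3$, which is \eqref{eqn:concentartion-agnostic-reward-specified}. The transition bound is entirely analogous but with a union bound over both the grid $\cZ_l$ and the $m$ coordinate indices, which is precisely why $\beta_{P,l}$ carries the extra factor $m$ inside the logarithm. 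Bounding each coordinate by $\beta_{P,l}\sigma_{P,l-1}([z]_l,i)$ and recombining exactly as in Lemma \ref{lem:kernel-least-squares} gives $\norm{\overline{P}_\star([z]_l)-\mu_{P,l-1}([z]_l)}_2 \le \beta_{P,l}\norm{\sigma_{P,l-1}([z]_l)}_2$, while the coordinate-wise Lipschitz bounds of Lemma \ref{lem:lipschitz} combine to $\norm{\overline{P}_\star(z)-\overline{P}_\star([z]_l)}_2 \le \sqrt{m}/l^2$, producing the claimed additive $\sqrt{m}/l^2$ slack in \eqref{eqn:concentartion-agnostic-state-specified}.

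The main delicacy, which I would handle carefully rather than the arithmetic, is the adaptivity of the sampling: because the query points $z_{l,h}$ are chosen from past data, the posterior statistics $\mu$ and $\sigma$ are themselves random, so one cannot naively union-bound over an observation-dependent set. The argument works only because the discretization grid $\cZ_l$ is fixed in advance (depending solely on $\delta$, $l$, and the known constants through the prescribed sizes $\abs{\cS_l},\abs{\cA_l}$ and Lemma 15 of \citet{desautels2014parallelizing}). This independence is what permits applying the per-point Gaussian concentration conditionally on $\cH_{l-1}$ and then de-conditioning by taking expectations inside the union bound. Verifying this measurability and the data-independence of the grid is the step that will require the most attention.
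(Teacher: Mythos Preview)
Your proposal is correct and follows essentially the same route as the paper's proof: conditional Gaussianity of $\overline{R}_\star(z)$ and $\overline{P}_\star(z,i)$ given $\cH_{l-1}$, a Gaussian tail bound combined with a union bound over the fixed grid $\cZ_l$ (and over $l$ via $\sum_{l}1/l^2=\pi^2/6$), then the Lipschitz estimate of Lemma~\ref{lem:lipschitz} plus triangle inequality to extend from $[z]_l$ to arbitrary $z$. Your careful discussion of why the deterministic, data-independent grid is needed to justify the per-point conditional Gaussian bound is spot on (and more explicit than the paper); the one inessential remark is that coordinate-wise independence of $\overline{P}_\star(\cdot,i)$ is neither assumed nor required---marginal Gaussianity of each component suffices for the union bound.
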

\begin{proof} Note that conditioned on $\cH_{l-1} \bydef \lbrace s_{j,k},a_{j,k},r_{j,k},s_{j,k+1}\rbrace_{1 \le j \le l-1,1 \le k \le H}$, $\overline{R}_\star(z) \sim \cN\big(\mu_{R,l-1}(z),\sigma^2_{R,l-1}(z)\big)$. If $a \sim \cN(0,1)$, $c \ge 0$, then $\prob{\abs{a}\ge c} \le \exp(-c^2/2)$. Using this Gaussian concentration inequality and a union bound over all $l \ge 1$ and all $z \in \cZ_l$, with probability at least $1-\delta/6$, we have
\beq
\label{eqn:GP-posterior-reward-concentration}
\forall l \ge 1, \forall z \in \cZ_l, \abs{\overline{R}_\star(z)-\mu_{R,l-1}(z)} \le \beta_{R,l}\sigma_{R,l-1}(z).
\eeq
Now as $[z]_l\in \cZ_l$, using union bound in  (\ref{eqn:GP-lipschitz-reward}) and (\ref{eqn:GP-posterior-reward-concentration}), we have with probability at least $1-\delta/3$,
\beqn
\forall l \ge 1, \forall z \in \cZ, \abs{\overline{R}_\star(z)-\mu_{R,l-1}([z]_l)} \le \beta_{R,l}\sigma_{R,l-1}([z]_l)+1/l^2.
\eeqn
Similarly, conditioned on $\cH_{l-1}$, $\overline{P}_\star(z,i) \sim \cN\big(\mu_{R,l-1}(z,i),\sigma^2_{P,l-1}(z,i)\big)$ for all $z \in \cZ$ and $1 \le i \le m$. Then using the Gaussian concentration inequality and a union bound over all $l \ge 1$, all $z \in \cZ_l$ and all $i=1,\ldots,m$, with probability at least $1-\delta/6$, we have
\beq
\label{eqn:GP-posterior-state-concentration}
\forall l \ge 1, \forall z \in \cZ_l, \forall 1 \le i \le m, \abs{\overline{P}_\star(z,i)-\mu_{P,l-1}(z,i)} \le \beta_{P,l}\sigma_{P,l-1}(z,i).
\eeq
Now as $[z]_l\in \cZ_l$, using union bound with (\ref{eqn:GP-lipschitz-state}) and (\ref{eqn:GP-posterior-state-concentration}), we have with probability at least $1-\delta/3$,
\beqn
\forall l \ge 1, \forall z \in \cZ, \forall 1 \le i \le m, \abs{\overline{P}_\star(z,i)-\mu_{P,l-1}([z]_l,i)} \le \beta_{P,l}\sigma_{P,l-1}([z]_l,i)+1/l^2.
\eeqn
Now Recall that  $\overline{P}_\star(z)=[\overline{P}_\star(z,1),\ldots,\overline{P}_\star(z,m)]^T$, $\mu_{P,l-1}(z)=[\mu_{P,l}(z,1),\ldots,\mu_{P,l-1}(z,m)]^T$ and $\sigma_{P,l-1}(z)=[\sigma_{P,l}(z,1),\ldots,\tilde{\sigma}_{P,l-1}(z,m)]^T$. Then with probability at least $1-\delta/3$, for all $l \ge 1$ and for all $z \in \cZ$,
\beqan
\norm{\overline{P}_\star(z)-\mu_{P,l-1}([z]_l)}_2  \le
\sqrt{\sum_{i=1}^{m}\Big(\beta_{P,l}\sigma_{P,l-1}([z]_l,i)+\dfrac{1}{l^2}\Big)^2}
&\le &\sqrt{\sum_{i=1}^{m}\beta^2_{P,l}\sigma^2_{P,l-1}([z]_l,i)}+\sqrt{\sum_{i=1}^{m}\dfrac{1}{l^4}}\\
&=&\beta_{P,l}\norm{\sigma_{P,l-1}([z]_l)}_2 +\dfrac{\sqrt{m}}{l^2}.
\eeqan
\end{proof}

\begin{mylemma}[Sum of predictive variances upper bounded by Maximum Information Gain]
\label{lem:predictive-variance-sum-GP}
Let $\sigma_{R,l}$ and $\sigma_{P,l}$ be defined as in \ref{eqn:post-reward-GP} and \ref{eqn:post-state-GP} respectively and let the kernels $k_R$ and $k_P$ satisfy $k_R(z,z) \le 1$ and $k_P((z,i),(z,i)) \le 1$ for all $z \in \cZ$ and $1 \le i \le m$. Then, for any $\tau \ge 1$, 
\beqa
\label{eqn:sum-of-sd-reward-GP}
\sum_{l=1}^{\tau}\sum_{h=1}^{H} \sigma_{R,l-1}([z_{l,h}]_l) &\le & \exp\big(\gamma_{H-1}(k_R,\cZ)\big)\sqrt{(2\lambda_R+1)\tau H \gamma_{\tau H}(k_R,\cZ)}\\
\label{eqn:sum-of-sd-state-GP}
\sum_{l=1}^{\tau}\sum_{h=1}^{H} \norm{\sigma_{P,l-1}([z_{l,h}]_l)}_2 &\le & \exp\big(\gamma_{mH-1}(k_P,\tilde{\cZ})\big)\sqrt{(2\lambda_P+1)\tau H \gamma_{m\tau H}(k_P,\tilde{\cZ})},
\eeqa
where $\gamma_{t}(k_R,\cZ)$ denotes the maximum information gain about an $f \sim GP_{\cZ}(0,k_R)$ after $t$ noisy observations with iid Gaussian noise $\cN(0,\lambda_R)$ and $\gamma_{t}(k_P,\tilde{\cZ})$ denotes the maximum information gain about an $f \sim GP_{\tilde{\cZ}}(0,k_P)$ after $t$ noisy observations with iid Gaussian noise $\cN(0,\lambda_P)$.
\end{mylemma}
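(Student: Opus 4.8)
The plan is to follow the structure of the proof of Lemma~\ref{lem:predictive-variance-sum}, but to replace the step that relied on the identity $\lambda_R = H$ (namely the variance recursion (\ref{eqn:info-gain-two}) and the factor $(1+1/H)^{H-1} \le e$) by an appeal to Lemma~\ref{lem:conditional-mi}, since here the noise variances $\lambda_R,\lambda_P$ are arbitrary. The quantity we must control is a sum of \emph{stale} predictive standard deviations: $\sigma_{R,l-1}([z_{l,h}]_l)$ is computed from the posterior at the end of episode $l-1$, so within episode $l$ it ignores the $h-1$ observations already made in periods $1,\dots,h-1$. The two obstacles this creates, the staleness of the posterior across a full episode, and the fact that the predictive-variance bound (\ref{eqn:info-gain-three}) telescopes only along a self-consistent observation sequence, are handled respectively by Lemma~\ref{lem:conditional-mi} and by taking the sequence of discretized query points $\{[z_{l,h}]_l\}_{l,h}$ as the observation sequence.

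First I would treat the reward term. Let $\tilde\sigma_{R,l,h-1}$ denote the posterior standard deviation obtained after \emph{sequentially} conditioning on all observations up to period $h-1$ of episode $l$, i.e. on the $(l-1)H+(h-1)$ query points preceding $[z_{l,h}]_l$; in particular $\sigma_{R,l-1} = \tilde\sigma_{R,l,0}$, since the posterior is refreshed only at episode boundaries. Applying Lemma~\ref{lem:conditional-mi} with $A$ the set of queries completed before episode $l$ and $B$ the (at most $H-1$) within-episode queries of periods $1,\dots,h-1$, and using monotonicity of $\gamma_t$ in $t$, gives $\sigma_{R,l-1}([z_{l,h}]_l) \le \exp(\gamma_{h-1}(k_R,\cZ))\,\tilde\sigma_{R,l,h-1}([z_{l,h}]_l) \le \exp(\gamma_{H-1}(k_R,\cZ))\,\tilde\sigma_{R,l,h-1}([z_{l,h}]_l)$. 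Summing, pulling out the common factor, and applying Cauchy--Schwarz over the $\tau H$ terms yields $\sum_{l,h}\sigma_{R,l-1}([z_{l,h}]_l) \le \exp(\gamma_{H-1}(k_R,\cZ))\sqrt{\tau H \sum_{l,h}\tilde\sigma_{R,l,h-1}^2([z_{l,h}]_l)}$. Finally, since the $\tilde\sigma_{R,l,h-1}$ are exactly the sequential posteriors along the length-$\tau H$ observation sequence $\{[z_{l,h}]_l\}_{l,h}$, the bound (\ref{eqn:info-gain-three}) of Lemma~\ref{lem:sum-of-sd-two} gives $\sum_{l,h}\tilde\sigma_{R,l,h-1}^2([z_{l,h}]_l) \le (2\lambda_R+1)\gamma_{\tau H}(k_R,\cZ)$, which combines to give (\ref{eqn:sum-of-sd-reward-GP}).

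The transition term (\ref{eqn:sum-of-sd-state-GP}) follows by the same argument, now accounting for the $m$ coordinate-observations generated per period. Each period of episode $l$ contributes $m$ queries $([z_{l,h}]_l,1),\dots,([z_{l,h}]_l,m)$ to $\tilde{\cZ}$, so across an episode the stale posterior $\sigma_{P,l-1}$ ignores up to $mH-1$ within-episode coordinate-observations; Lemma~\ref{lem:conditional-mi} therefore yields the coordinate-wise bound $\sigma_{P,l-1}([z_{l,h}]_l,i) \le \exp(\gamma_{mH-1}(k_P,\tilde{\cZ}))\,\tilde\sigma_{P,l,h-1}([z_{l,h}]_l,i)$, and taking the $\norm{\cdot}_2$-norm over $i$ preserves the common factor. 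Cauchy--Schwarz over the $\tau H$ vector terms, together with (\ref{eqn:info-gain-three}) applied to the length-$m\tau H$ coordinate-observation sequence so that $\sum_{l,h}\norm{\tilde\sigma_{P,l,h-1}([z_{l,h}]_l)}_2^2 = \sum_{l,h}\sum_{i=1}^{m}\tilde\sigma_{P,l,h-1}^2([z_{l,h}]_l,i) \le (2\lambda_P+1)\gamma_{m\tau H}(k_P,\tilde{\cZ})$, then delivers (\ref{eqn:sum-of-sd-state-GP}).

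The main difficulty is conceptual rather than computational: I must build the ``fresh'' posteriors $\tilde\sigma$ along a single self-consistent query sequence so that (\ref{eqn:info-gain-three}) applies, while simultaneously relating each such fresh posterior back to the genuinely used stale posterior $\sigma_{R,l-1}$ (resp.\ $\sigma_{P,l-1}$) through Lemma~\ref{lem:conditional-mi}. Getting the index bookkeeping right, that the omitted within-episode block has size at most $H-1$ (resp.\ $mH-1$), which is what produces the $\exp(\gamma_{H-1})$ (resp.\ $\exp(\gamma_{mH-1})$) prefactor rather than a factor growing with $\tau$, is the crux of the argument and is precisely what replaces the $\lambda_R=H$-specific reasoning of Lemma~\ref{lem:predictive-variance-sum}.
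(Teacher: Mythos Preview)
Your proposal is correct and follows essentially the same route as the paper's proof: use Lemma~\ref{lem:conditional-mi} to bound the stale posterior variance $\sigma_{R,l-1}$ (resp.\ $\sigma_{P,l-1}$) by $\exp(\gamma_{H-1})$ (resp.\ $\exp(\gamma_{mH-1})$) times the fully-updated within-episode posterior variance, then apply (\ref{eqn:info-gain-three}) and Cauchy--Schwarz. The only cosmetic differences are that the paper squares before applying Cauchy--Schwarz rather than after, and for the transition term it indexes the fresh posteriors one coordinate at a time (as $\sigma_{P,l,h,b-1}$) rather than one period at a time; your $\tilde\sigma$ sequence should likewise be built from the actual observation points $z_{l,h}$ (so that $\tilde\sigma_{R,l,0}=\sigma_{R,l-1}$ holds) rather than from the discretized points, but this does not affect the argument.
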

\begin{proof}
Note that $\sigma_{R,l-1}(z) = \sigma_{R,l,0}(z)$, where $\sigma_{R,l,0}(z)$ is defined in (\ref{eqn:post-reward-def}). From Lemma \ref{lem:conditional-mi}, 
$\dfrac{\sigma_{R,l,0}(z)}{\sigma_{R,l,h-1}(z)} \le \exp\big(\gamma_{h-1}(k_R,\cZ)\big)$ for all $z \in \cZ$ and $1 \le h \le H$. This implies
\beqa
\sum_{l=1}^{\tau}\sum_{h=1}^{H} \sigma^2_{R,l-1}([z_{l,h}]_l)=\sum_{l=1}^{\tau}\sum_{h=1}^{H} \sigma^2_{R,l,0}([z_{l,h}]_l) &\le& \sum_{l=1}^{\tau}\sum_{h=1}^{H} \exp\big(2\gamma_{h-1}(k_R,\cX)\big) \sigma^2_{R,l,h-1}([z_{l,h}]_l)\nonumber\\
&\le& \exp\big(2\gamma_{H-1}(k_R,\cZ)\big)\sum_{l=1}^{\tau}\sum_{h=1}^{H}\sigma^2_{R,l,h-1}([z_{l,h}]_l)\nonumber\\
&\le&\exp\big(2\gamma_{H-1}(k_R,\cZ)\big)(2\lambda_R+1)\gamma_{\tau H}(k_R,\cZ),\label{eqn:comb-one-gp}
\eeqa
where the second last inequality follows from  (\ref{eqn:info-gain-three}). Further by Cauchy-Schwartz inequality
\beq
\label{eqn:comb-two-gp}
\sum_{l=1}^{\tau}\sum_{h=1}^{H} \sigma_{R,l-1}([z_{l,h}]_l) \le \sqrt{\tau H \sum_{l=1}^{\tau}\sum_{h=1}^{H} \sigma^2_{R,l-1}([z_{l,h}]_l)}.
\eeq
Now (\ref{eqn:sum-of-sd-reward-GP}) follows by combining (\ref{eqn:comb-one-gp}) and (\ref{eqn:comb-two-gp}).

Similarly Note that $\sigma_{P,l-1}(z,i) = \sigma_{P,l,1,0}(z,i)$, where $\sigma_{P,l,1,0}(z,i)$ is defined in (\ref{eqn:post-state-def}). Now from Lemma \ref{lem:conditional-mi}, we have 
$\dfrac{\sigma_{P,l,1,0}(z,i)}{\sigma_{P,l,h,b-1}(z,i)} \le \exp\big(\gamma_{m(h-1)+b-1}(k_P,\lambda_P,\tilde{\cZ})\big)$ for all $z \in \cZ$, $1 \le i \le m$, $1 \le h \le H$ and $1 \le b \le m$. This implies
\beqa
\sum_{l=1}^{\tau}\sum_{h=1}^{H}\sum_{b=1}^{m} \sigma^2_{P,l-1}([z_{l,h}]_l,b)&=&\sum_{l=1}^{\tau}\sum_{h=1}^{H}\sum_{b=1}^{m} \sigma^2_{P,l,1,0}([z_{l,h}]_l,b)\nonumber\\ 
&\le& \sum_{l=1}^{\tau}\sum_{h=1}^{H}\sum_{b=1}^{m} \exp\big(2\gamma_{m(h-1)+b-1}(k_P,\tilde{\cZ})\big) \sigma^2_{P,l,h,b-1}([z_{l,h}]_l,b)\nonumber\\
&\le& \exp\big(2\gamma_{m(H-1)+m-1}(k_P,\tilde{\cZ})\big)\sum_{l=1}^{\tau}\sum_{h=1}^{H}\sum_{b=1}^{m}\sigma^2_{P,l,h,b-1}([z_{l,h}]_l,b)\nonumber\\
&\le& \exp\big(2\gamma_{mH-1}(k_P,\tilde{\cZ})\big)(2\lambda_P+1)\gamma_{m\tau H}(k_P,\tilde{\cZ})\label{eqn:comb-three-gp},
\eeqa
where the second last inequality follows from  (\ref{eqn:info-gain-three}). Further by the Cauchy-Schwartz inequality 
\beq
\label{eqn:comb-four-gp}
\sum_{l=1}^{\tau}\sum_{h=1}^{H} \norm{\sigma_{P,l-1}([z_{l,h}]_l)}_2 \le \sqrt{\tau H \sum_{l=1}^{\tau}\sum_{h=1}^{H} \norm{\sigma_{P,l-1}([z_{l,h}]_l)}_2^2} = \sqrt{\tau H \sum_{l=1}^{\tau}\sum_{h=1}^{H}\sum_{b=1}^{m} \sigma^2_{P,l-1}([z_{l,h}]_l,b)}.
\eeq
Now (\ref{eqn:sum-of-sd-state-GP}) follows by combining (\ref{eqn:comb-three-gp}) and (\ref{eqn:comb-four-gp}). 
\end{proof}
\subsubsection{Bayesian Regret Bound for GP-UCRL under GP prior: Proof of Theorem \ref{thm:regret-bound-GP}}

Note that at every episode $l$, GP-UCRL (Algorithm \ref{algo:GP-UCRL}) selects the policy $\pi_l$ such that
\beq
\label{eqn:GP-UCRL-rule-bayes}
V^{M_l}_{\pi_l,1}(s_{l,1})=\max_{\pi}\max_{M \in \cM_l}V^{M}_{\pi,1}(s_{l,1}) 
\eeq
where $s_{l,1}$ is the initial state, $\cM_l$ is the family of MDPs constructed by GP-UCRL and $M_l$ is the most optimistic realization from $\cM_l$.
Further from \ref{eqn:kernel-reward-two} 
\beqn
\prob{\sup_{z\in \cZ}\abs{\overline{R}_\star(z)} > L} < a e^{-(L/b)^2},
\eeqn
holds for any $L \ge 0$ for some corresponding $a,b> 0$. Thus for any $0 < \delta \le 1$, setting $L=b\sqrt{\ln(6a/\delta)}$, with probability at least $1-\delta/6$, for all $z \in \cZ$ 
\beq
\label{eqn:gp-norm-bound}
\abs{\overline{R}_\star(z)} \le b\sqrt{\ln(6a/\delta)}.
\eeq
Now (\ref{eqn:gp-norm-bound}), Lemma \ref{lem:common} and Lemma \ref{lem:GP-UCRL} together with an union bound imply that for any $\tau \ge 1$ , with probability at least $1-\delta/3$,
\beqa
\sum_{l=1}^{\tau}\big(V^{M_l}_{\pi_l,1}(s_{l,1})-V^{M_\star}_{\pi_l,1}(s_{l,1})\big) \le \sum_{l=1}^{\tau}\sum_{h=1}^{H}\Big( \abs{\overline{R}_{M_l}(z_{l,h}) - \overline{R}_\star(z_{l,h})}+L_{M_l} \norm{\overline{P}_{M_l}(z_{l,h}) - \overline{P}_\star(z_{l,h})}_2\Big)\nonumber\\+ (LD+2C H) \sqrt{2\tau H \ln(6/\delta)}\label{eqn:combine-gp-one},
\eeqa
where $C \bydef b\sqrt{\ln(6a/\delta)}$.
Now for each $l \ge 1$, we define the following events:
\beqan
E_{R,l} &\bydef& \lbrace \forall z \in \cZ, \abs{\overline{R}_\star(z)-\mu_{R,l-1}([z]_l)}\le \beta_{R,l}\;\sigma_{R,l-1}([z]_l)+1/l^2\rbrace,\\
E_{P,l} &\bydef& \lbrace \forall z \in \cZ, \abs{\overline{P}_\star(z)-\mu_{P,l-1}([z]_l)}\le \beta_{P,l}\norm{\sigma_{P,l-1}([z]_l)}_2+\sqrt{m}/l^2\rbrace.
\eeqan 
By construction of the set of MDPs $\cM_l$ in Algorithm \ref{algo:GP-UCRL}, it follows that when both the events $E_{R,l}$ and $E_{P,l}$ hold for all $l \ge 1$, the unknown MDP $M_\star$ lies in $\cM_l$ for all $l \ge 1$. Thus (\ref{eqn:GP-UCRL-rule-bayes}) implies $V^{M_l}_{\pi_l,1}(s_{l,1}) \ge V^{M_\star}_{\pi_\star,1}(s_{l,1})$ for all $l \ge 1$. This in turn implies, for every episode $l \ge 1$,
\beq
\label{eqn:GP-UCRL-rule-imply-bayes}
V^{M_\star}_{\pi_\star,1}(s_{l,1})-V^{M_\star}_{\pi_l,1}(s_{l,1})\le V^{M_l}_{\pi_l,1}(s_{l,1})-V^{M_\star}_{\pi_l,1}(s_{l,1}).
\eeq
Further when $E_{R,l}$ holds for all $l \ge 1$, then
\beq
\label{eqn:combine-gp-two}
\abs{\overline{R}_{M_l}(z_{l,h}) - \overline{R}_\star(z_{l,h})} \le \abs{\overline{R}_{M_l}(z_{l,h})-\mu_{R,l-1}([z_{l,h}]_l)}+\abs{\overline{R}_\star(z_{l,h})-\mu_{R,l-1}([z_{l,h}]_l)}
\le 2\beta_{R,l}\sigma_{R,l-1}([z_{l,h}]_l)+2/l^2,
\eeq
since the mean reward function $\overline{R}_{M_l}$ lies in the confidence set $\cC_{R,l}$ as defined in (\ref{eqn:confidence-set-GP}). Similarly when $E_{P,l}$ holds for all $l \ge 1$,
\beqa
\label{eqn:combine-gp-three}
\norm{\overline{P}_{M_l}(z_{l,h}) - \overline{P}_\star(z_{l,h})}_2 &\le& \norm{\overline{P}_{M_l}(z_{l,h})-\mu_{P,l-1}([z_{l,h}]_l)}_2+\norm{\overline{P}_\star(z_{l,h})-\mu_{P,l-1}([z_{l,h}]_l)}_2\\
&\le &2 \beta_{P,l}\norm{\sigma_{P,l-1}([z_{l,h}]_l)}_2 + 2\sqrt{m}/l^2,
\eeqa
since the mean transition function $\overline{P}_{M_l}$ lies in the confidence set $\cC_{P,l}$ as defined in (\ref{eqn:confidence-set-GP}). Now combining (\ref{eqn:combine-gp-one}), (\ref{eqn:GP-UCRL-rule-imply-bayes}), (\ref{eqn:combine-gp-two}) and (\ref{eqn:combine-gp-three}), when both the events $E_{R,l}$ and $E_{P,l}$ hold for all $l \ge 1$, then with probability at least $1-\delta/3$,
\beqan
\sum_{l=1}^{\tau}\big(V^{M_\star}_{\pi_\star,1}(s_{l,1})-V^{M_\star}_{\pi_l,1}(s_{l,1})\big)\le 2\sum_{l=1}^{\tau}\sum_{h=1}^{H} \big(\beta_{R,l}\sigma_{R,l-1}([z_{l,h}]_l)+1/l^2+L_{M_l}\beta_{P,l}\norm{\sigma_{P,l-1}([z_{l,h}]_l)}_2+L_{M_l}\sqrt{m}/l^2\big)\\+ (LD+2CH) \sqrt{2\tau H \ln(6/\delta)}.
\eeqan
Now Lemma \ref{lem:gp-concentration} implies that $\prob{\forall l\ge 1, E_{R,l}} \ge 1-\delta/3$ and $\prob{\forall l\ge 1, E_{P,l}} \ge 1-\delta/3$. Hence, by a union bound, for any $\tau \ge 1$, with probability at least $1-\delta$, 
\beqa
\label{eqn:GP-UCRL-bays}
\sum_{l=1}^{\tau}\big(V^{M_\star}_{\pi_\star,1}(s_{l,1})-V^{M_\star}_{\pi_l,1}(s_{l,1})\big) &\le & 2\beta_{R,\tau}\sum_{l=1}^{\tau}\sum_{h=1}^{H} \sigma_{R,l-1}([z_{l,h}]_l)+2L\beta_{P,\tau}\sum_{l=1}^{\tau}\sum_{h=1}^{H}\norm{\sigma_{P,l-1}([z_{l,h}]_l)}_2\nonumber\\&& + (L\sqrt{m}+1)H\pi^2/3 + (LD+2CH) \sqrt{2\tau H \ln(6/\delta)}.
\eeqa
Here we have used the fact that both $\beta_{R,l}$ and $\beta_{P,l}$ are non-decreasing with the number of episodes $l$, $\sum_{l=1}^{\tau}1/l^2 \le \pi^2/6$ and that $L_{M_l} \le L$ by construction of $\cM_l$ (and since $M_l \in \cM_l$).  Now from Lemma \ref{lem:predictive-variance-sum-GP}, we have 
$\sum_{l=1}^{\tau}\sum_{h=1}^{H} \sigma_{R,l-1}([z_{l,h}]_l) \le \exp\big(\gamma_{H-1}(k_R,\cZ)\big)\sqrt{(2\lambda_R+1)\tau H \gamma_{\tau H}(k_R,\cZ)}$ and \\
$\sum_{l=1}^{\tau}\sum_{h=1}^{H} \norm{\sigma_{P,l-1}([z_{l,h}]_l)}_2 \le \exp\big(\gamma_{mH-1}(k_P,\tilde{\cZ})\big)\sqrt{(2\lambda_P+1)\tau H \gamma_{m\tau H}(k_P,\tilde{\cZ})}$. Therefore
with probability at least $1-\delta$, the cumulative regret of GP-UCRL after $\tau$ episodes, i.e. after $T=\tau H$ timesteps is
\beqan
Regret(T) &=& \sum_{l=1}^{\tau}\big(V^{M_\star}_{\pi_\star,1}(s_{l,1})-V^{M_\star}_{\pi_l,1}(s_{l,1})\big)\\
&\le & 2\beta_{R,\tau}\exp\big(\gamma_{H-1}(k_R,\cZ)\big)\sqrt{(2\lambda_R+1)\gamma_{T}(k_R,\cZ)T}\\&&+ 2L\beta_{P,\tau}\exp\big(\gamma_{mH-1}(k_P,\tilde{\cZ})\big)\sqrt{(2\lambda_P+1) \gamma_{mT}(k_P,\tilde{\cZ})T}\\&& + (L\sqrt{m}+1)H\pi^2/3+(LD+2C H) \sqrt{2T \ln(6/\delta)},
\eeqan
where $C\bydef b\sqrt{\ln(6a/\delta)}$, $\beta_{R,\tau} \bydef \sqrt{2\ln\big(\abs{\cS_\tau}\abs{\cA_\tau}\pi^2\tau^2/\delta\big)}$ and $\beta_{P,\tau} \bydef \sqrt{2\ln\big(\abs{\cS_\tau}\abs{\cA_\tau}m\pi^2\tau^2/\delta\big)}$.
Now the result follows by defining $\gamma_T(R) \bydef \gamma_{T}(k_R,\cZ)$ and $\gamma_{mT}(P) \bydef \gamma_{mT}(k_P,\tilde{\cZ})$.

\subsubsection{Bayes Regret of PSRL under GP prior: Proof of Theorem \ref{thm:regret-bound-GP-PSRL}}
\label{subsec:PSRL-specified}

$\Phi\equiv(\Phi_R,\Phi_P)$ is the distribution of the unknown MDP $M_\star=\lbrace \cS,\cA,R_\star,P_\star,H \rbrace$, where $\Phi_R$ and $\Phi_P$ are specified by GP priors $ GP_{\cZ}(0,k_R)$ are $GP_{\tilde{\cZ}}(0,k_P)$ respectively with a Gaussian noise model in the sense that
\begin{itemize}

\item The reward distribution is $R_\star:\cS \times \cA \ra \Real$, with mean $\overline{R}_\star \sim GP_{\cZ}(0,k_R)$, and additive $\cN(0,\lambda_R)$ Gaussian noise.
\item The transition distribution is $P_\star:\cS \times \cA \ra \cS$, with mean $\overline{P}_\star \sim GP_{\tilde{\cZ}}(0,\tilde{k}_P)$,  and component-wise additive and independent $\cN(0,\lambda_R)$ Gaussian noise.
\end{itemize} 
Conditioned on the history of observations $\cH_{l-1}\bydef \lbrace s_{j,k},a_{j,k},r_{j,k}\rbrace_{1 \le j \le l-1,1 \le k \le H}$ both $M_\star$ and $M_l$ are identically distributed with $\Phi_l\equiv (\Phi_{R,l}, \Phi_{P,l})$, where  $\Phi_{R,l}$ and $\Phi_{P,l}$ are specified by GP posteriors $GP_{\cZ}(\mu_{R,l-1},k_{R,l-1})$ and $GP_{\tilde{\cZ}}(\mu_{P,l-1},k_{P,l-1})$ respectively.

In this case we use the confidence sets $\cC_{R,l}$ and $\cC_{P,l}$ as given in \ref{eqn:confidence-set-GP} and define an event $E \bydef E_{\star}\cap E_{M}$, where 
$E_{\star}\bydef\big\lbrace \overline{R}_\star \in \cC_{R,l},\overline{P}_\star \in \cC_{P,l} \; \forall l \ge 1\big\rbrace$ and $E_{M}\bydef\big\lbrace \overline{R}_{M_l} \in \cC_{R,l},\overline{P}_{M_l}\in \cC_{P,l}\; \forall l \ge 1\big\rbrace$. Now from Lemma \ref{lem:gp-concentration}, $\prob{E_{\star}} \ge 1-2\delta/3$ and hence $\prob{E} \ge 1-4\delta/3$ similarly as in the proof of Theorem \ref{thm:regret-bound-PSRL}. Further (\ref{eqn:regret-breakup-TS-three}) implies
\beqa
\expect{\sum_{l=1}^{\tau}\Big[V^{M_\star}_{\pi_\star,1}(s_{l,1})-V^{M_\star}_{\pi_l,1}(s_{l,1})\Big]} &\le& \expect{\sum_{l=1}^{\tau}\sum_{h=1}^{H}\Big[ \abs{\overline{R}_{M_l}(z_{l,h}) - \overline{R}_\star(z_{l,h})}\given E}\nonumber\\&& +\expect{L_{M_l} \norm{\overline{P}_{M_l}(z_{l,h}) - \overline{P}_\star(z_{l,h})}_2\Big]\given E}+8\delta C\tau H/3,\quad
\label{eqn:use-two-GP}
\eeqa
where we have used that $\expect{V^{M_\star}_{\pi_\star,1}(s_{l,1})-V^{M_\star}_{\pi_l,1}(s_{l,1})} \le 2 C H$, where $C=\expect{\sup_{z\in \cZ}\abs{\overline{R}_\star(z)}}$.
From Lemma \ref{lem:gp-concentration} and construction of $\cC_{R,l}, l \ge 1$,
\beqa
\expect{\sum_{l=1}^{\tau}\sum_{h=1}^{H}\abs{\overline{R}_{M_l}(z_{l,h}) - \overline{R}_\star(z_{l,h})}\given E}
&\le & \sum_{l=1}^{\tau}\sum_{h=1}^{H}\Big(2\beta_{R,l}\sigma_{R,l-1}([z_{l,h}]_l)+2/l^2\Big)\nonumber\\
&\le & 2 \beta_{R,\tau} \exp\big(\gamma_{H-1}(k_R,\cZ)\big)\sqrt{(2\lambda_R+1)\tau H \gamma_{\tau H}(k_R,\cZ)}+\dfrac{\pi^2H}{3},\quad \quad \label{eqn:use-three-GP}
\eeqa
where the last step follows from Lemma \ref{lem:predictive-variance-sum-GP}. Similarly from Lemma \ref{lem:gp-concentration} and construction of $\cC_{P,l}, l \ge 1$, 
\beqa
&&\expect{\sum_{l=1}^{\tau}\sum_{h=1}^{H}L_{M_l}\norm{\overline{P}_{M_l}(z_{l,h}) - \overline{P}_\star(z_{l,h})}_2\given E}\nonumber \\ &\le&  \sum_{l=1}^{\tau}\sum_{h=1}^{H}\expect{L_{M_l}\given E}\Big(2\beta_{P,l}\norm{\sigma_{P,l-1}([z_{l,h}]_l)}_2+2\sqrt{m}/l^2\Big).\nonumber\\
&\le & \dfrac{\expect{L_\star}}{1-4\delta/3} \Big(2\beta_{P,\tau}\exp\big(\gamma_{mH-1}(k_P,\tilde{\cZ})\big)\sqrt{(2\lambda_P+1)\tau H \gamma_{m\tau H}(k_P,\tilde{\cZ})}+\sqrt{m}\pi^2H/3\Big)\label{eqn:use-four-GP},
\eeqa
where the last step follows from Lemma \ref{lem:predictive-variance-sum-GP} and from the proof of Theorem \ref{thm:regret-bound-PSRL}. Combining (\ref{eqn:use-two-GP}), (\ref{eqn:use-three-GP}) and (\ref{eqn:use-four-GP}), for any $0 < \delta \le 1$ and $\tau \ge 1$,
\beqan
\expect{\sum_{l=1}^{\tau}\Big[V^{M_\star}_{\pi_\star,1}(s_{l,1})-V^{M_\star}_{\pi_l,1}(s_{l,1})\Big]} &\le& 2 \beta_{R,\tau} \exp\big(\gamma_{H-1}(k_R,\cZ)\big)\sqrt{(2\lambda_R+1)\tau H \gamma_{\tau H}(k_R,\lambda_R,\cZ)}\\&&+2\beta_{P,\tau}\dfrac{\expect{L_\star}}{1-4\delta/3} \exp\big(\gamma_{mH-1}(k_P,\tilde{\cZ})\big)\sqrt{(2\lambda_P+1)\tau H \gamma_{m\tau H}(k_P,\tilde{\cZ})}\\&&+8\delta C\tau H/3+\frac{\big(1+\frac{\expect{L^\star}}{1-4\delta/3}\sqrt{m}\big)\pi^2 H}{3},
\eeqan
where $\beta_{R,\tau} \bydef \sqrt{2\ln\big(\abs{\cS_\tau}\abs{\cA_\tau}\pi^2\tau^2/\delta\big)}$ and $\beta_{P,\tau} \bydef \sqrt{2\ln\big(\abs{\cS_\tau}\abs{\cA_\tau}m\pi^2\tau^2/\delta\big)}$. See that the left hand side is independent of $\delta$. Now using $\delta=1/\tau H$, the Bayes regret of PSRL after $\tau$ episodes, i.e. after $T=\tau H$ timesteps is
\beqan
\expect{Regret(\tau)}&=&\sum_{l=1}^{\tau}\expect{V^{M_\star}_{\pi_\star,1}(s_{l,1})-V^{M_\star}_{\pi_l,1}(s_{l,1})}\\ &\le& 2 \alpha_{R,\tau} \exp\big(\gamma_{H-1}(k_R,\cZ)\big)\sqrt{(2\lambda_R+1) \gamma_{T}(k_R,\cZ)T} \\&&+ 3 \;\expect{L_\star}\alpha_{P,\tau} \exp\big(\gamma_{mH-1}(k_P,\tilde{\cZ})\big)\sqrt{(2\lambda_P+1) \gamma_{mT}(k_P,\tilde{\cZ})T}+3C+(1+\sqrt{m}\expect{L_\star})\pi^2 H,
\eeqan
since $1/(1-4/3\tau H) \le 3/2$ as $\tau \ge 2$, $H \ge 2$. Here $C=\expect{\sup_{z\in \cZ}\abs{\overline{R}_\star(z)}}$, $\alpha_{R,\tau} \bydef \sqrt{2\ln\big(\abs{\cS_\tau}\abs{\cA_\tau}\pi^2\tau^2T\big)}$, $\alpha_{P,\tau} \bydef \sqrt{2\ln\big(\abs{\cS_\tau}\abs{\cA_\tau}m\pi^2\tau^2T\big)}$. Now the result follows by defining $\gamma_T(R) \bydef \gamma_{T}(k_R,\cZ)$ and $\gamma_{mT}(P) \bydef \gamma_{mT}(k_P,\tilde{\cZ})$.

\end{appendices}
\end{document}